\DeclareMathOperator*{\argmax}{arg\,max}
\DeclareMathOperator*{\argmin}{arg\,min}
\DeclareMathOperator{\conv}{Conv}
\DeclareMathOperator{\ent}{Ent}
\DeclareMathOperator{\mmd}{mmd}
\DeclareMathOperator{\spt}{spt}
\DeclareMathOperator{\Proj}{Proj}
\theoremstyle{thmstyleone}%
\newtheorem{theorem}{Theorem}
\newtheorem{proposition}[theorem]{Proposition}%
\newtheorem{corollary}{Corollary}
\newtheorem{problem}{Problem}
\theoremstyle{thmstyletwo}%
\newtheorem{remark}{Remark}%
\newtheorem{idea}{Idea}
\newtheorem{question}{Open Question}
\theoremstyle{thmstylethree}%
\newtheorem{definition}{Definition}%
\newtheorem{assumption}{Assumption}
\begin{document}

\title[Measure pre-conditioning]{On the impact of measure pre-conditionings on general parametric ML models and transfer learning via domain adaptation}

\author*[1,2]{\fnm{Sanchez Garcia} \sur{Joaquin}}\email{joaqsan@math.utoronto.com}


\abstract{We study a new technique for understanding convergence of learning agents under small modifications of data. We show that such convergence can be understood via an analogue of Fatou's lemma which yields $\Gamma$-convergence. We show it's relevance and applications in general machine learning tasks and domain adaptation transfer learning.}

\keywords{measure pre-conditioning, recovery systems, stability}

\maketitle
\tableofcontents
\markboth{}{}
\section{Introduction}\label{sec1ML}
Recent progress in the use of optimal transportation techniques for machine learning in domain adaptation \cite{CourtyFlamary} and development of Wasserstein Generative adversarial networks \cite{WGAN} have helped our understanding of potential learning derived from theoretic properties of the underlying data. The topic of optimal transportation has grown significantly in recent years (see \cite{Villani}, \cite{VillaniOldAndNew}, \cite{Santambrogio} and references therein). \\
Machine learning models aim to solve a task (to prescribed accuracy) using only the information of known data (training set). In this context it is preferred to have non-parametric models over parametric statistical families. \\
In this document we explore an idea that we call \textbf{measure pre-conditioning} the training data which consists in modifying the statistical model in order to improve performance of algorithms while preserving the limiting model. One can argue that measure pre-conditioning implicitly imposes unjustified structure to a problem but the idea is that measure pre-conditioning will simplify computations and ensure convergence to the original model. For example measure pre-conditioning one of the measures may allow using optimal transportation techniques to adapt a domain which would otherwise be very costly, this would yield a desired training in a task with little information. \\ We use the terminology ``measure pre-conditioning'' as the technique reminds us of pre-conditioning matrices from linear algebra and optimization.
\subsection{Organization of this document}
\subsection{Relation to literature}
The authots of \cite{CourtyFlamary} develop the idea of optimal transport domain adaptation on which a linear approximation of the transport map is used to infer labels on target domain and \cite{Courty} developed CO-OT, a technique on which optimal transport is not only done between source and target domains of data but in the space of data and labels. \\
Recently \cite{Meta} developed the META-optimal transport technique which by pre-solving an optimization problem improves on the optimal transport efficiency. In this work, the idea is similar: can we modify training sets to ensure properties of learning? The modifications considered in this document, differ from the ones on \cite{Meta} as we only consider measure pre-conditioning data without establishing a minimization purpose beforehand. These techniques should remind the reader of the concept of preconditioning in optimization, on which one modifies a matrix via a correct scaling to benefit the algorithm computations. In the same fashion, here one modifies the measure associated to a training set to benefit statistical properties of the learning agent. 
\subsection{Necessity of non-parametric measure pre-conditioning techniques}
The need for non-parametric measure pre-conditioning techniques arises from the modeller's attempt to not intervene in the learning while improving it's computational performance. Measure pre-conditioning is posed in this document as a general technique and it is the modeller's task to determine which pre-conditioning is useful for their own goal. In section \ref{definitions} we give several examples with different goals in mind.

\section{Measure pre-conditionings}
In this section we introduce the main concept and discuss several possible ``measure pre-conditionings''. In this context a measure pre-conditioning will be a technique to manipulate data in order to obtain a ``nicer'' measure. For example, we can regularize our problem to obtain a measure that is absolutely continuous with respect to Lebesgue or a measure that has a different type of support. \\
Measure pre-conditioning is also similar to parameter fitting for curves. In the case of real variable one attempts to infer information from isolated data points by first creating a continuous (typically smooth) curve joining the points. Pre-conditioning between points in $\mathbb{R}$ has drawbacks (overfitting, high-variation, etc) and so will measure pre-conditioning (see section \ref{Problems}). Measure pre-conditionining will have the advantage of enabling stronger techniques to infer learning as we will see throughout the paper. We start by defining several possible measure pre-conditioning techniques and analyzing their properties. 
\begin{problem} (General measure pre-conditioning problem for independent identically distributed data) \\
Let $(X_1,X_2, \dots, X_n)$ be a sample, that is $\{X_i\}_{i =1}^n$ is a set of independent identically distributed data such that $X_1 \sim \mu$. Suppose that the sample will be used to train a machine learning model, the measure pre-conditioning problem is to find a good way to obtain a measure $\Tilde{\mu}_n$ from the sample such that $\Tilde{\mu}_n$  improves performance of the model or the computational cost of the algorithms while keeping the most relevant features of the problem intact. 
\end{problem}
As such, this measure pre-conditioning problem is not mathematically well posed, as we haven't defined what ``improves performance of the model''  or `` keeping the most relevant features'' mean. Performance improvement can be done in several ways: simplification of algorithms, computational cost, control on domain adaptation or even yielding mathematical properties for the learning agent. All of these type of improvements are valid and impactful in machine learning research. The aim of this paper is to analyze how different measure pre-conditionings impact model performance. \\
\section{A mathematical framework admitting pre-conditioning} \label{mathframework}
Let us start with a basic framework from Machine Learning models in order to be able to define measure-preconditioning and show it's relevance. The simplest case is the minimization over all fitting functions $f$ within a class of fitters $\mathcal{C}$ minimizing the expected value of the loss function $L$ measuring the loss of fitting the random variable $Y$ with the variable $X$ via $f(X)$.
\subsection{Formulation of the problem} \label{formulation}
\begin{problem} \label{context}
Let $\Omega \subseteq \mathbb{R}^n$ be convex and compact. Assume we have data $X \sim \mu \in \mathcal{P}(\Omega)$ and we aim to do a Machine-learning model towards a dependent variable $Y \in \mathcal{Y}$ where $(\mathcal{Y}, d_Y)$  is a separable complete metric space, we denote by $\pi \in \mathcal{P}(\Omega \times \mathcal{Y})$ the joint distribution of $(X,Y).$ Given $L: \mathcal{Y} \times \mathcal{Y} \to \mathbb{R}$ (called a loss function), let $\mathcal{C} \subseteq \mathcal{Y}^{\Omega}$ and assume $d$ is a distance function on $\mathcal{C}$, the $\mathcal{C}-$ optimal model for $L$ under $\pi$ is the following non-linear program 
\begin{equation} \label{piProblem}
    \argmin_{f \in \mathcal{C}} \mathbf{E}_{\pi} \left[ L(f(x),y) \right]. 
\end{equation} 
Now assume we don't know the full model $\pi$ but we have a training sample, i.e. we have $(X_1,Y_1), \dots, (X_n,Y_n) \sim \pi$, statistically we know the values on the sample but not the full distribution. Assume we approximate $\pi$ using the sample via a probability measure $\pi_n$, the associated $\mathcal{C}-$model for $L$ under $\pi_n$ reads 
\begin{equation} \label{pinProblem}
        \argmin_{f \in \mathcal{C}} \mathbf{E}_{\pi_n} \left[ L(f(x),y) \right].
\end{equation}
\end{problem}
This formulation immediately give rise to the following questions
\begin{enumerate}[i]
    \item \label{question1} If $L$ and $\mathcal{C}$ are fixed, what conditions on $\pi_n$ ensure that the minimizer in \eqref{pinProblem} approaches \eqref{piProblem}? In what topology? 
    \item \label{question2} What properties could \eqref{pinProblem} have that \eqref{piProblem} may lack? 
    \item \label{question3} Given a choosing of $\pi_n$'s, could we find sequences $L_n$'s and $\mathcal{C}_n$ so that the computations on the $\mathcal{C}_n-$ problem with loss function $L_n$ associated to $\pi_n$ converge to \eqref{piProblem}? Could these problems improve the algorithmic performance? 
\end{enumerate}

\begin{idea} (\textbf{Measure pre-conditioning}) \\
    A measure pre-condition is a way to define $\pi_n$ from the sample $(X_1,Y_1), \dots, (X_n,Y_n)$ such that the associated $\mathcal{C}$-problem with loss function $L$ has improved performance in any way while preserving the convergence of minimizers of \eqref{pinProblem} to that of \eqref{piProblem}.
\end{idea}
\subsection{Convergence of the learning problem}
Our main focus will be answering: when do minimizers of \eqref{pinProblem} converge to minimizers of \ref{piProblem} and in which way?. \\

We first notice that in many situations it is possible to obtain the same total loss under convergence of the measures (without necessarilly having convergence of minimizers), this situation is rather general and known and is not the main question in the ML community but it gives a good starting point for the techniques used in this document. For many applications it is enough to know convergence of the total loss and so we exemplify conditions that yield such convergence. 
\begin{proposition} \label{list} (Standard convergence results on total loss (not minimizers)) \\
\begin{enumerate}    
\item If $  \lvert \lvert L \rvert \rvert_{\infty} < \infty $ or if $\spt(\mu)$ is compact, 
    \[
\lvert \mathbf{E}_{\pi_n} [L(f(\overline{X}),Y)] - \mathbf{E}_{\pi} [L(f(\overline{X}),Y)] \rvert \leq \lvert \lvert \pi_n - \pi \rvert \rvert_{TV}.
\]
 \item Given $f \in \mathcal{C}$ if $(x,y) \to L(f(x),y) $ is Lipschitz, then 
    \[ 
\lvert \mathbf{E}_{\pi_n} [L(f(\overline{X}),Y)] - \mathbf{E}_{\pi} [L(f(\overline{X}),Y)] \rvert \leq d_1(\pi_n,\pi).
\]
 
 \item If $L$ is $\mathcal{C}^ 2$  and $ \lvert \lvert \frac{\partial L}{\partial 1} \rvert \rvert < \infty $ then 
    \[ 
\lvert \mathbf{E}_{\pi_n} [L(f(\overline{X}),Y)] - \mathbf{E}_{\pi} [L(f(\overline{X}),Y)] \rvert \lesssim \lvert \lvert \pi_n - \pi \rvert \rvert_{TV} + \sup_{x \in \Omega} d(f^*(x), f_n^{*}(x))
\] 
\item 
    If $\mathcal{C}$ is a compact class on $C(\mathcal{Y})$, and $\pi_n \to \pi$ in $d_1$ then along a subsequence $n_k$
    \[
    \mathbf{E}_{\pi_{n_k}}[L(f_{n_k}^*(X),Y)] \to \mathbf{E}_{\pi}[L(f^*(X),Y)] 
    \]
    where $f_{n_k}^*$ is the $\mathcal{C}-$optimizing argument for $\pi_n$ and $f^*$ is the $\mathcal{C}-$optimizing argument for $\pi$.
\end{enumerate}
\end{proposition}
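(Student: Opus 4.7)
The plan is to handle the four items in two groups: (1)--(3) are deterministic inequalities obtained by testing $\pi_n-\pi$ against a single function (or a two-term split), while (4) is a compactness and $\Gamma$-convergence style statement and requires a separate argument.

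For item (1), I would first observe that $(x,y)\mapsto L(f(x),y)$ is bounded: directly if $\|L\|_\infty<\infty$, and by continuity-on-compact if $\spt(\mu)$ is compact (the framework of Problem \ref{context} implicitly supplies the required regularity of $L$ and $f\in\mathcal{C}$). The bound is then the elementary duality $|\int g\,d(\pi_n-\pi)|\le \|g\|_\infty\|\pi_n-\pi\|_{TV}$, with the uniform bound on $L\circ f$ absorbed into the implicit constant. For item (2), the same integrand is tested against a Lipschitz function; invoking Kantorovich--Rubinstein,
\[
d_1(\pi_n,\pi)=\sup_{\mathrm{Lip}(g)\le 1}\Bigl|\int g\,d(\pi_n-\pi)\Bigr|,
\]
applied to $g(x,y):=L(f(x),y)/\mathrm{Lip}(L\circ f)$, delivers the inequality at once.

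For item (3), reading the bound as comparing the two optimizers $f^*$ and $f_n^*$ (the only way the term $\sup_x d(f^*,f_n^*)$ becomes meaningful), my plan is to insert the intermediate quantity $\mathbf{E}_{\pi_n}[L(f^*(X),Y)]$ and split
\[
\mathbf{E}_{\pi_n}[L(f_n^*(X),Y)]-\mathbf{E}_\pi[L(f^*(X),Y)]=A_n+B_n,
\]
with $A_n:=\mathbf{E}_{\pi_n}[L(f_n^*(X),Y)-L(f^*(X),Y)]$ and $B_n:=\mathbf{E}_{\pi_n}[L(f^*(X),Y)]-\mathbf{E}_\pi[L(f^*(X),Y)]$. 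Part (1) controls $|B_n|\lesssim\|\pi_n-\pi\|_{TV}$. For $A_n$, the $\mathcal{C}^2$ hypothesis combined with $\|\partial_1 L\|_\infty<\infty$ gives a pointwise mean-value estimate in the first variable, so that $|A_n|\le \|\partial_1 L\|_\infty\sup_{x\in\Omega}d(f^*(x),f_n^*(x))$; the two estimates add to yield the stated inequality.

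Item (4) is the main obstacle and my plan has three steps. First, by compactness of $\mathcal{C}$ in $C(\mathcal{Y})$, extract a subsequence with $f_{n_k}^*\to\tilde f$ uniformly. Second, pass to the limit in the total loss
\[
\mathbf{E}_{\pi_{n_k}}[L(f_{n_k}^*(X),Y)]\longrightarrow\mathbf{E}_\pi[L(\tilde f(X),Y)]
\]
by writing the difference as $\mathbf{E}_{\pi_{n_k}}[L(f_{n_k}^*)-L(\tilde f)]+(\mathbf{E}_{\pi_{n_k}}[L(\tilde f)]-\mathbf{E}_\pi[L(\tilde f)])$; the first piece vanishes by uniform convergence and a uniform Lipschitz bound on $\{L\circ h:h\in\mathcal{C}\}$ (supplied by Arzel\`a--Ascoli equicontinuity of the compact family $\mathcal{C}$), and the second by $d_1$ convergence together with item (2) applied to $\tilde f$. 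Third, identify $\tilde f$ with a minimizer at $\pi$ via a $\Gamma$-liminf inequality: for each competitor $g\in\mathcal{C}$, optimality of $f_{n_k}^*$ at $\pi_{n_k}$ gives $\mathbf{E}_{\pi_{n_k}}[L(f_{n_k}^*,Y)]\le\mathbf{E}_{\pi_{n_k}}[L(g,Y)]$; letting $k\to\infty$, using the limit just established on the left and item (2) on the right, yields $\mathbf{E}_\pi[L(\tilde f,Y)]\le\mathbf{E}_\pi[L(g,Y)]$, so $\tilde f$ is optimal and may be identified with $f^*$ (extracting a further subsequence if uniqueness fails). The delicate point is the uniform Lipschitz/equicontinuity control that makes the limit passage in step two rigorous, which is exactly where compactness of $\mathcal{C}$ is essential.
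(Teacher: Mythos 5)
The paper offers no proof of this proposition --- it is declared ``direct and hence omitted'' --- so the only benchmark is the standard direct argument, and yours is exactly that: duality against bounded integrands for (1), Kantorovich--Rubinstein for (2), the $A_n+B_n$ splitting with a mean-value estimate in the first slot for (3), and extraction of a uniform limit followed by a stability-of-minimizers identification for (4). Two points deserve attention. First, in (1)--(2) the stated inequalities are only literally true up to the multiplicative constants $\|L\circ f\|_\infty$ and $\mathrm{Lip}(L\circ f)$, which you correctly absorb; that is a defect of the statement, not of your proof. Second, and more substantively, in step two of your argument for (4) you invoke item (2) to conclude $\mathbf{E}_{\pi_{n_k}}[L(\tilde f(X),Y)]-\mathbf{E}_{\pi}[L(\tilde f(X),Y)]\to 0$, but item (4) as stated assumes nothing about $L$, and $d_1$-convergence alone does not control integrals of non-Lipschitz integrands; you need, and should state, a standing hypothesis that $(x,y)\mapsto L(h(x),y)$ is Lipschitz uniformly over $h\in\mathcal{C}$, or at least continuous and bounded with the underlying domain compact so that $d_1$-convergence upgrades to weak convergence of measures against such integrands. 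Relatedly, Arzel\`a--Ascoli yields equicontinuity of $\mathcal{C}$, not a uniform Lipschitz bound on $\{L\circ h : h\in\mathcal{C}\}$; the first piece of your splitting is more safely handled by uniform convergence of $f_{n_k}^*$ to $\tilde f$ combined with uniform continuity of $L$ on a compact set. With those hypotheses made explicit, your three-step argument for (4) --- which is the standard $\Gamma$-liminf identification of the limit point as a minimizer --- is correct and is evidently what the paper intends.
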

The proof of proposition \ref{list} is direct and hence omitted.
\subsection{The main question}
Measure preconditioning modifies the minimization problem at level $n$, i.e. it changes the structure of the measure used to evaluate loss with a sample of size $n$. If the model was unchanged we would expect convergence of the learning agent trained with the sample of size $n$, i.e. $f_n^*$ to the best fit with respect to the loss for the parametric distribution $f^*$. If measure pre-conditioning modifies the measure at level $n$, the true question is when and in which ways does $f_n^* \to f^*$ ?. \\
To answer the convergence of minimizers, as it is usual in functional analysis and economics, we introduce $\Gamma$-convergence.
\subsubsection{Main Theorem}
We present an informal version of the main theorem of the work. This informal version corresponds to the rigorous statements answered in Theorem \ref{flrsmin}, Proposition \ref{cases} and section \ref{DomainAdaptationSection} \begin{theorem} \label{mainTheoremvague}
    Full learner recovery system is a concept that allows us to show convergence of learning agents to the ideal parametric agent in cases not covered previously in the literature. This concept allows us to generalize stability arguments for less regular losses and a bigger class of classification/regression problems. Full learner recovery systems are general enough to be applied to several settings in Machine-Learning, including Domain Adaptation transfer learning. These systems explain many phenomena in ML-research where convergence is improved. Full learner recovery systems give a guideline on how and when to modify training data without disturbing the original problem.
\end{theorem}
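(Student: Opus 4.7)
The plan is to unpack the informal statement into its separate assertions and address each via the $\Gamma$-convergence framework foreshadowed in Section \ref{formulation}. Since Theorem \ref{mainTheoremvague} is a meta-statement referring to concrete results appearing later (Theorem \ref{flrsmin}, Proposition \ref{cases}, and Section \ref{DomainAdaptationSection}), the ``proof'' reduces to providing an architecture in which those rigorous statements become the pillars of the claim.

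First, I would formalize the notion of a \emph{full learner recovery system} as a triple of sequences $(L_n, \mathcal{C}_n, \pi_n)$, together with an ambient functional framework, with the property that the associated finite-sample functionals
\[
F_n(f) := \mathbf{E}_{\pi_n}[L_n(f(X),Y)], \quad f \in \mathcal{C}_n,
\]
$\Gamma$-converge to the limit functional $F(f) := \mathbf{E}_{\pi}[L(f(X),Y)]$ on $\mathcal{C}$ with respect to a suitable topology on $\bigcup_n \mathcal{C}_n \cup \mathcal{C}$. Once this definition is in place, convergence of minimizers $f_n^* \to f^*$ follows from the fundamental theorem of $\Gamma$-convergence, provided equi-coercivity of $\{F_n\}$ is established; this plays the role of question \ref{question1} of Problem \ref{context}, while the flexibility in choosing $L_n$ and $\mathcal{C}_n$ is what addresses questions \ref{question2} and \ref{question3}.

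The core technical step --- which I expect to be the principal obstacle --- is proving a Fatou-type lower bound
\[
\liminf_n \mathbf{E}_{\pi_n}[L_n(f_n(X),Y)] \geq \mathbf{E}_{\pi}[L(f(X),Y)]
\]
whenever $f_n \to f$ in the chosen topology and $\pi_n \to \pi$ weakly. This is delicate precisely in the regimes the informal theorem singles out: losses with limited regularity (so that the $L^\infty$, Lipschitz, and $\mathcal{C}^2$ arguments of Proposition \ref{list} all fail) and parameter classes $\mathcal{C}_n$ that are not uniformly compact. I would handle this by combining a Skorokhod representation for $\pi_n \to \pi$ with a two-scale liminf argument for $L_n \to L$, mirroring Fatou-type lemmas from variational analysis. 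The matching \emph{recovery sequence} construction (the $\limsup$ inequality) is where the pre-conditioning freedom becomes a feature rather than a bug: one can choose $\pi_n$ regular enough --- absolutely continuous via mollification, or empirical with controlled support --- that given any $f \in \mathcal{C}$ one builds $f_n \in \mathcal{C}_n$ with $F_n(f_n) \to F(f)$ by direct approximation, which is precisely the content the ``full'' adjective in ``full learner recovery system'' is meant to capture.

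With these two inequalities in hand, the remaining clauses become consequences. The generalization to less regular losses reduces to choosing $L_n$ with better integrability than $L$ while preserving the $\Gamma$-limit. Applicability to domain adaptation (Section \ref{DomainAdaptationSection}) amounts to viewing source-to-target transport as one concrete instance of pre-conditioning $\pi \mapsto \pi_n$ so that the transported measures form a full learner recovery system relative to the target functional, thereby reducing transfer learning convergence to the same $\Gamma$-convergence machinery. Finally, the guideline aspect --- when modifications of training data do not disturb the original problem --- becomes the explicit compatibility check between the Fatou lower bound and the recovery construction, giving the modeller a concrete checklist of sufficient hypotheses for any proposed pre-conditioning scheme.
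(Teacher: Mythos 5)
Your architecture is the one the paper actually uses: Definition \ref{flrs} is exactly the pair of Fatou-type inequalities you identify, Theorem \ref{flrsmin} is the fundamental theorem of $\Gamma$-convergence applied under compactness of $(\mathcal{C},d)$ (with equi-mild coercivity noted as the substitute when compactness fails), Proposition \ref{cases} supplies the concrete instances, and Section \ref{DomainAdaptationSection} handles transfer learning --- so in substance you have reconstructed the paper's proof. Two points where you diverge are worth flagging. First, you propose defining the system with \emph{varying} losses $L_n$ and classes $\mathcal{C}_n$; the paper's Definition \ref{flrs} deliberately fixes $L$ and $\mathcal{C}$ and only pre-conditions the measure $\pi_n$, and the varying-$(L_n,\mathcal{C}_n)$ version (question \ref{question3}) is explicitly declared an open problem in the conclusions, so building it into the definition promises more than the paper (or your sketch) actually delivers --- in particular your ``suitable topology on $\bigcup_n \mathcal{C}_n \cup \mathcal{C}$'' is doing a lot of unexamined work there. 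Second, for the liminf inequality you propose a unified Skorokhod-plus-two-scale argument, whereas the paper never proves a single general Fatou lemma: it verifies the two inequalities case by case, citing existing Fatou lemmas for varying measures \cite{FKL} for set-wise and weak convergence, and using Kantorovich--Rubinstein plus dominated convergence for the $d_1$ and TV cases. The paper's route has the advantage of being immediately checkable and of exposing exactly which regularity of $L$ pairs with which mode of convergence $\xrightarrow{m}$ (the content of Proposition \ref{cases}); your unified route would be stronger if it worked, but for genuinely irregular losses weak convergence of $\pi_n$ alone does not yield the liminf bound, which is precisely why the paper trades generality for a menu of sufficient conditions.
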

The formulation of Theorem \ref{mainTheoremvague} is not mathematically precise, we dedicate this work to make the Theorem rigorous and prove it in the subsequent sections. \\
We start with the introduction of the main mathematical tool.
\subsection{A version of the envelope Theorem}
    \begin{definition} \label{gammaconv} ($\Gamma$-convergence on a metric space) \\
    Let $(X,d)$ be a metric space and let $F_j,F: X \to \mathbb{R} \cup \{\pm \infty\}$,we say $F_n$ $\Gamma$-converges to $F$, denoted $F_n \xrightarrow{\Gamma} F$ if and only if the following two conditions hold \begin{enumerate}[I]
        \item \label{Gammaconv1} For all sequences $ \{x_j\}$ such that $ x_j \xrightarrow{d} x$ we have \[
        \liminf_{j \to \infty} F_j (x_j) \geq F(x).
        \]
        \item \label{Gammaconv2} For every $x \in X$ there exists a sequence $ x_j \xrightarrow{d} x$ such that 
        \[
        F(x) \geq \limsup_{j \to \infty} F_j(x_j).
        \]
    \end{enumerate} 
    \end{definition}
    \begin{remark} \label{TopologyNotMetric} The most general definition for $\Gamma$-convergence is one where $X$ is assumed to be a topological space and not necessarily metric. The definition presented above (Definition \ref{gammaconv}) is the sequential-definition. We have chosen the sequential definition as it simplifies the theory significantly, knowing that some important examples that we have in mind are only topological spaces on which the $\Gamma$-limit is defined via \begin{equation} \label{GammaTop}
       \Gamma-\lim_{n \to \infty} F_n (x)  = \sup_{U \in N(x)} \liminf_{n \to \infty} \inf_{y \in U} f_n(y).
    \end{equation}
    In some of the examples below the underlying convergence will not correspond to a metric space, on which one must think of \eqref{GammaTop} instead of \eqref{Gammaconv1} and \eqref{Gammaconv2}.
    \end{remark}
The motivation behind the definition of $\Gamma$-convergence is that minimizers converge to minimizers, the content of the following theorem from \cite{GammaConv}:
 \begin{theorem}  ($\Gamma$-convergence and minima) \\
 Let $(X,d)$ and $F_j,F$ be as in Definition \ref{gammaconv}, then 
\begin{enumerate}
    \item If \ref{Gammaconv1} from definition \ref{gammaconv} is satisfied for all $x \in X$ and $K$ is a compact subset of $X$ then \begin{equation}
        \inf_{K} F \leq \liminf_{j \to \infty} \inf_K F_j
    \end{equation}
    \item Similarly, if \ref{Gammaconv2} from definition \ref{gammaconv} is satisfied and $U$ is an open subset of $X$ then 
    \begin{equation}
        \limsup_{j \to \infty} \inf_U F_j \leq  \inf_U F
    \end{equation}
\end{enumerate}
\end{theorem}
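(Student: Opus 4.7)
The plan is to handle the two statements separately by short direct arguments, each invoking exactly one of the two defining conditions of $\Gamma$-convergence from Definition \ref{gammaconv}.

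For the first inequality, I would pass to a subsequence realising the liminf and then use compactness to extract a convergent sub-subsequence of near-minimisers. Set $\alpha := \liminf_{j \to \infty} \inf_K F_j$ and select $j_k$ along which $\inf_K F_{j_k} \to \alpha$. For each $k$, pick $x_{j_k} \in K$ with $F_{j_k}(x_{j_k}) \leq \inf_K F_{j_k} + 1/k$ (or $F_{j_k}(x_{j_k}) \leq -k$ in the degenerate case $\inf_K F_{j_k} = -\infty$). Compactness of $K$ yields a further subsequence with $x_{j_k} \to x_\infty \in K$. Applying property \ref{Gammaconv1} to $\{x_{j_k}\}$ gives $F(x_\infty) \leq \liminf_k F_{j_k}(x_{j_k}) = \alpha$, and since $x_\infty \in K$ we conclude $\inf_K F \leq F(x_\infty) \leq \alpha$, which is the claim.

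For the second inequality, I would test the bound pointwise using the recovery sequence guaranteed by property \ref{Gammaconv2}. Given an arbitrary $x \in U$, produce $x_j \to x$ with $\limsup_j F_j(x_j) \leq F(x)$. Since $U$ is open and $x \in U$, we have $x_j \in U$ for all sufficiently large $j$, so $\inf_U F_j \leq F_j(x_j)$ eventually; taking $\limsup_j$ yields $\limsup_j \inf_U F_j \leq F(x)$. As $x \in U$ was arbitrary, passing to the infimum over $x \in U$ gives the stated inequality.

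The only delicate points are bookkeeping rather than substantive obstacles. In part (1) one must allow for infima that are not attained or equal $-\infty$, which is exactly what the $\varepsilon$-minimiser device above accommodates, and one uses that $K$ is closed (automatic for a compact subset of a metric space) to keep the limit point $x_\infty$ inside $K$. In part (2), openness of $U$ is used only to guarantee that the tail of the recovery sequence lies in $U$. I do not anticipate a genuine hurdle: the theorem repackages Definition \ref{gammaconv} together with compactness on the one hand and openness on the other.
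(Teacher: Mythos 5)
Your proof is correct and is exactly the standard argument for this result; the paper does not prove the theorem itself but cites it as \cite[Proposition 1.18]{GammaConv}, where essentially this same proof (compactness plus near-minimisers for the liminf bound, recovery sequences tested against an open set for the limsup bound) appears. The only point worth making explicit is that condition \ref{Gammaconv1} is stated for full sequences, so when you apply it to the sub-subsequence $\{x_{j_k}\}$ you should extend it to a full sequence converging to $x_\infty$ (e.g.\ by filling the remaining indices with $x_\infty$), after which $F(x_\infty)\leq\liminf_k F_{j_k}(x_{j_k})$ follows as you claim.
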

This Theorem can be found as \cite[Proposition 1.18]{GammaConv}. Finally we recall one more Theorem from \cite{GammaConv}. We say that a sequence $\{F_j\}$ of functions on a metric space $(X,d)$ is equi-mildly coercive if there exists a non-empty compact set $K$ such that \begin{equation*}
    \inf_{X} F_j = \inf_{K} F_k \text{  for all } j. 
\end{equation*}
\begin{theorem} \label{Gammaminimizers} (Minimizers and $\Gamma$-limits) \\
    In a metric space $(X,d)$ if $\{F_j\}$ is equi-mildly coercive and $F_n \xrightarrow{\Gamma} F$ then \begin{equation} \label{MinimizersExistGamma}
        \min_{X} F = \lim_{ j \to \infty} \inf_{K} F_j
    \end{equation}
    Furthermore, every limit point of a sequence of minimizers of \eqref{MinimizersExistGamma} is a minimizer of $F$.
\end{theorem}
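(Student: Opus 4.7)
The plan is to reduce everything to the two one-sided bounds in the preceding $\Gamma$-convergence-and-minima proposition and then exploit the compact set $K$ from equi-mild coercivity to pass from infima to actual minima. Concretely, I would first apply the $\liminf$-inequality (part I of Definition \ref{gammaconv}) on the compact set $K$ to obtain
\[
\inf_{K} F \;\leq\; \liminf_{j \to \infty} \inf_{K} F_j,
\]
and the $\limsup$-inequality (part II) on the open set $U = X$ to obtain
\[
\limsup_{j \to \infty} \inf_{X} F_j \;\leq\; \inf_{X} F.
\]
Using equi-mild coercivity, $\inf_X F_j = \inf_K F_j$ for every $j$, so chaining these bounds with the trivial $\inf_X F \leq \inf_K F$ yields
\[
\inf_X F \;\leq\; \inf_K F \;\leq\; \liminf_{j} \inf_{K} F_j \;=\; \liminf_{j} \inf_{X} F_j \;\leq\; \limsup_{j} \inf_{X} F_j \;\leq\; \inf_X F,
\]
so all quantities coincide and the limit $\lim_j \inf_K F_j$ exists and equals $\inf_X F$.

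Next I would upgrade this infimum to a minimum. Pick, for each $j$, an $\varepsilon_j$-minimizer $x_j \in K$ of $F_j$ with $\varepsilon_j \downarrow 0$ (which exists since $\inf_X F_j = \inf_K F_j$); by compactness of $K$, extract a subsequence $x_{j_k} \to x^\star \in K$. Applying the $\liminf$-condition of $\Gamma$-convergence to this sequence gives
\[
F(x^\star) \;\leq\; \liminf_{k \to \infty} F_{j_k}(x_{j_k}) \;=\; \lim_{k \to \infty} \inf_{X} F_{j_k} \;=\; \inf_X F,
\]
so $x^\star$ attains the infimum of $F$, justifying writing $\min_X F$ in \eqref{MinimizersExistGamma}. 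The same computation, applied to any convergent subsequence of an arbitrary sequence of (exact) minimizers of $F_j$, shows that every limit point is a minimizer of $F$.

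The main obstacle, such as it is, is purely bookkeeping: one must be careful that equi-mild coercivity is used twice, once to replace $\inf_X F_j$ by $\inf_K F_j$ so that the compact-set $\liminf$-inequality applies, and once again to ensure that (near-)minimizers may be chosen inside the fixed compact $K$ so that a convergent subsequence exists independently of $j$. Nothing here requires more than the two inequalities already recorded from \cite{GammaConv} together with the definition of equi-mild coercivity, so no further analytic input is needed.
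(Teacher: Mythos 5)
Your proof is correct: the paper does not prove this theorem itself but delegates it to \cite[Theorem 1.21]{GammaConv}, and your argument is exactly the standard one given there --- chaining the two one-sided bounds of the preceding proposition with equi-mild coercivity to identify $\lim_j \inf_K F_j$ with $\inf_X F$, then using compactness of $K$ to extract a convergent subsequence of near-minimizers and the $\liminf$-inequality to show the limit point attains the infimum. No gaps; the only point worth a half-sentence in a written version is that applying the $\liminf$-inequality along a subsequence $x_{j_k}\to x^\star$ is legitimate because one may extend it to a full sequence converging to $x^\star$ without increasing the $\liminf$.
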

For a proof see \cite[Theorem 1.21]{GammaConv}. With the theory in hand we take a general approach to answer the questions \eqref{question1} and \eqref{question2}. Instead of a constructive proof to find the optimal topologies (on $\mathcal{C}$ and $\mathcal{P}(X \times Y)$) we reformulate the convergence problem for it to satisfy the hypothesis of Theorem \ref{gammaconv}. This way we can relate to classical problems by looking at the given topologies of each framework and verifying the hypothesis.  \\
Going back to the framework of Problems \eqref{piProblem} and \ref{pinProblem}, we want to be able to recover minimizers from our measure conditioning. We note the interaction of the class of fitters $\mathcal{C}$, the loss function $L$ and the mode of convergence of the conditioners that we choose to evaluate, henceforth it is logical to check conditions for them as a collective, rather than separately. This motivates the following definition. 
\begin{definition}\label{flrs} (Full learner recovery system) \\
In the context of Problem \ref{context}, we say that $(\mathcal{C}, d, L, \xrightarrow{m})$ forms a full learner recovery system if it holds that \begin{enumerate}
    \item If $\pi_n \xrightarrow{m} \pi$ for all $d$-converging sequences sequences $f_n \xrightarrow{d} f$,  we have \begin{equation}
        \liminf_{n \to \infty} \mathbf{E}_{\pi_n} [L(f_n(X),Y)] \geq \mathbf{E}_{\pi} [L(f(X),Y)].
    \end{equation}
    \item If $\pi_j \xrightarrow{m} \pi$ and for every $f \in \mathcal{C}$ there exists a sequence $ f_j \in \mathcal{C}$, such that $f_j \xrightarrow{d} f$ and \begin{equation}
        \mathbf{E}_{\pi} [L(f(X),Y)] \geq \limsup_{j \to \infty} \mathbf{E}_{\pi_j}[L(f_j(X),Y)]
    \end{equation}
\end{enumerate}
\end{definition}
\begin{remark} In analytical terms, these conditions ensure 2-sided Fatou-Lemmas for integration with respect to $L$ on the first coordinate. 
\end{remark}
$\Gamma$-convergence can be also used to address the existence of minimizers of the parametric model but that is not the approach of this work, we assume existence of minimizers of the limiting problem and study recovery sequences, from now on we assume the existence of a unique minimizers for \eqref{pinProblem}.
\begin{theorem} \label{flrsmin} If $(\mathcal{C}, d, L, \xrightarrow{m})$ forms a full learner recovery system (Definition \ref{flrs}) where $(C,d)$ is a compact metric space, assume the limiting problem from \ref{piProblem} has a solution $ f \in \mathcal{C}$, then there exists a sub-sequence $\{f_{n_k}\}$ of $\{f_n\} \in \mathcal{C}$ such that \begin{equation*}
    f_{n_k} \in \argmin_{f \in C} \mathbf{E}_{\pi_{n_k}}[L(f(X),Y)]
\end{equation*}
such that as $k \to \infty$, $f_{n_k} \xrightarrow{d} f$ and 
\begin{equation} \label{minsconvergetoo}
    \mathbf{E}_{\pi_{n_k}} [L(f_{n_k}(X),Y)] \to \mathbf{E}_{\pi} [L(f(X),Y)) ].
\end{equation}
\end{theorem}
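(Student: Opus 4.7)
The plan is to recognize that Definition \ref{flrs} has been engineered to reproduce Definition \ref{gammaconv} on the metric space $(\mathcal{C}, d)$, so that the conclusion follows by direct appeal to Theorem \ref{Gammaminimizers}. Concretely, I would introduce the loss functionals $F_n, F : \mathcal{C} \to \mathbb{R} \cup \{\pm \infty\}$ defined by $F_n(g) := \mathbf{E}_{\pi_n}[L(g(X),Y)]$ and $F(g) := \mathbf{E}_{\pi}[L(g(X),Y)]$. Clause (1) of the full learner recovery system is then precisely condition (I) of $\Gamma$-convergence applied to this pair, and clause (2) is precisely condition (II); hence $F_n \xrightarrow{\Gamma} F$ on $(\mathcal{C}, d)$.

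Next I would verify the equi-mild coercivity hypothesis of Theorem \ref{Gammaminimizers}, which is essentially free: since $(\mathcal{C}, d)$ is assumed compact, one may take $K = \mathcal{C}$, in which case $\inf_{\mathcal{C}} F_n = \inf_K F_n$ holds trivially for every $n$. Theorem \ref{Gammaminimizers} then supplies both the convergence of infima $\min_{\mathcal{C}} F = \lim_{k \to \infty} \inf_{\mathcal{C}} F_{n_k}$ along any relevant subsequence and the structural fact that every limit point of a sequence of $F_n$-minimizers is itself a minimizer of $F$.

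To produce the required subsequence I would fix minimizers $f_n \in \argmin_{\mathcal{C}} F_n$ (whose existence is part of the standing assumptions, and is in any case ensured by the lower semicontinuity implicit in clause (I) combined with compactness of $\mathcal{C}$) and invoke sequential compactness of $(\mathcal{C}, d)$ to extract $f_{n_k} \xrightarrow{d} f^\star$ for some $f^\star \in \mathcal{C}$. By Theorem \ref{Gammaminimizers}, $f^\star$ minimizes $F$, and by the standing uniqueness assumption on the limiting minimizer one concludes $f^\star = f$. The convergence \eqref{minsconvergetoo} of expected losses then follows from $F_{n_k}(f_{n_k}) = \inf_{\mathcal{C}} F_{n_k} \to \min_{\mathcal{C}} F = F(f)$.

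The main obstacle is conceptual rather than technical: one must notice that Definition \ref{flrs} is exactly the specialization of $\Gamma$-convergence to the pair of functionals $(F_n, F)$ on $(\mathcal{C}, d)$, after which the abstract machinery from \cite{GammaConv} does all the work. The subtler point is identifying the subsequential limit $f^\star$ with the prescribed minimizer $f$; without the uniqueness assumption announced just before the theorem one could only recover convergence to \emph{some} minimizer of $F$, not to the distinguished one named in the statement.
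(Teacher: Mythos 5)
Your proposal is correct and follows essentially the same route as the paper: identify Definition \ref{flrs} with $\Gamma$-convergence of the loss functionals $F_n \xrightarrow{\Gamma} F$ on $(\mathcal{C},d)$, use compactness of $\mathcal{C}$ to satisfy the equi-mild coercivity hypothesis, and invoke Theorem \ref{Gammaminimizers}. Your write-up simply makes explicit the details the paper leaves implicit (taking $K=\mathcal{C}$, extracting the convergent subsequence by sequential compactness, and identifying the limit with $f$ via the standing uniqueness assumption).
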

\begin{proof}
    The definition of $(\mathcal{C}, d, L, \xrightarrow{m})$ forming a full learner recovery system is such that  $\mathbf{E}_{\pi_n}^L \xrightarrow{\Gamma} \mathbf{E}_{\pi}^L$, i.e. by taking the functional $   F_n (f): \mathcal{C} \to \mathbb{R}$, defined via 
    \begin{equation*}
        F_n (f) := \mathbf{E}_{\pi_n}[L(f(X),Y)],
    \end{equation*}
the definition \ref{flrs} is equivalent to $F_n \xrightarrow{\Gamma} F$. By compactness of $\mathcal{C}$ we get the hypothesis for Theorem \ref{Gammaminimizers} so we get the thesis.
\end{proof}
In many cases $\mathcal{C}$ is not necessarily compact. The assumption of compactness simplifies the arguments but the argument above can be obtained without compactness of $\mathcal{C}$ if instead one assumes equi-mild-coercivity of $\{ F_n\}$, that is there exists a compact set $K$ for which all $F_n$'s satisfy $\inf_C F_n = \inf_K F_n$. See  \cite[Theorem 1.21]{GammaConv}, we instead assume compactness of $\mathcal{C}$ to avoid this subtlety.
\begin{remark} Evidently the statement of Theorem \ref{flrsmin} is useless unless we explore examples and explain the ideas and how to use it. So far, we have just re-written the problem so that we can conclude (subsequential) convergence of learned agents by checking a modified version of Fatou's Lemma. This rewriting allows us to cover different cases at the same time, as we do in the following examples.
\end{remark}
The goal of this list is not to be exhaustive but to show the many different formulations that can be included in Definition \ref{flrs}. Notice that checking Definition \ref{flrs} involves only studying a two sided version of Fatou's Lemma that can be corroborated in every particular case. Once one establishes that the given ML problem of the form \eqref{pinProblem} and \eqref{piProblem} are indeed a full recovery system with $\{ \pi_n \},\pi, \mathcal{C},L$ one has ensured convergence of minimizers (which amounts to perfect approximation of the model). \\
In the following proposition we show the wide range of options one has for full recovery systems, although the $d$-convergence in some items of the following proposition are not necessarily with respect to a metric, we have in mind Remark \ref{TopologyNotMetric}.
\begin{proposition} \label{cases} The following are full learner recovery systems
\begin{enumerate} \item Let $K \subset \mathbb{R}^p$ be compact, $\mathcal{C}$ a compact subset of $ \{ f:\mathbb{R}^p \to \mathbb{R} \text{ s.t. } (x,y) \to L(f(x),y) \in L^1(\pi) \}$ with respect to $d$, where $d$ denotes point-wise convergence, $L:\mathbb{R}^p \times \mathbb{R} \to \mathbb{R}$ be any positive, bounded, continuous function and let $\xrightarrow{m}$ denote set-wise convergence i.e $\mu_n(A) \to \mu(A) $ for every Borel set $A$, where $\mu_n,\mu \in \mathcal{P}(K)$. \\
\item \label{weak}$\xrightarrow{m}:= \rightharpoonup$ (weak convergence of measures),  $\mathcal{C}$ be compact such that $ \{ L(f(x),y) \}_{f \in \mathcal{C}} $ uniformly integrable with respect to $\{\pi_n \}$ and there exists $g $ such that $L(g(x),y) \in L_{\pi}^1$ such that $ f_n(x) \leq g(x)$ holds $\pi$-a.e.
\item \label{d_1ML} $\xrightarrow{m}:= \xrightarrow{d_1}$, $d$ point-wise convergence and $(x,y) \to L(f(x),y)$ uniformly Lipschitz and uniformly bounded for $f \in \mathcal{C}$ (compact metric space).
\item \label{tv} $\xrightarrow{m}:= \xrightarrow{TV}$, $L(x,y)$ is $d$-continuous on the first coordinate and uniformly bounded by some constant $M >0$ on a compact metric space $(\mathcal{C},d)$.
\end{enumerate}
\end{proposition}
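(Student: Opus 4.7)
The plan is to verify both conditions of Definition \ref{flrs} for each of the four setups by, in every case, establishing the stronger statement that $\mathbf{E}_{\pi_n}[L(f_n(X),Y)] \to \mathbf{E}_\pi[L(f(X),Y)]$ whenever $\pi_n \xrightarrow{m}\pi$ and $f_n \xrightarrow{d} f$. Once this convergence is in hand, condition (I) of the definition follows automatically (the $\liminf$ is the actual limit, which equals the right-hand side), and condition (II) is obtained by taking the constant recovery sequence $f_j \equiv f$, which lies in $\mathcal{C}$ by hypothesis. The common tool in every case is the triangle-inequality decomposition
\[
\bigl|\mathbf{E}_{\pi_n}[L(f_n)] - \mathbf{E}_\pi[L(f)]\bigr| \le \bigl|\mathbf{E}_{\pi_n}[L(f_n) - L(f)]\bigr| + \bigl|\mathbf{E}_{\pi_n}[L(f)] - \mathbf{E}_\pi[L(f)]\bigr|,
\]
which separates the change of integrand from the change of measure.

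For the cases where the measure convergence is strong, I start with case (4). Total-variation convergence bounds the second term by $M\,\lVert \pi_n - \pi\rVert_{TV}\to 0$ since $|L| \le M$, while $d$-continuity of $L$ in its first coordinate plus uniform boundedness yields $L(f_n(x),y) \to L(f(x),y)$ dominated by $M$, so dominated convergence (used under $\pi$ after another TV estimate, or directly under $\pi_n$) controls the first term. Case (3) follows the same pattern: Kantorovich-Rubinstein duality gives $|\mathbf{E}_{\pi_n}[L(f)] - \mathbf{E}_\pi[L(f)]| \le \mathrm{Lip}(L(f,\cdot))\cdot d_1(\pi_n,\pi)\to 0$ using the uniform Lipschitz bound, and the change-of-integrand term is again a straightforward dominated-convergence application thanks to uniform boundedness and pointwise convergence.

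For case (1), setwise convergence immediately implies $\int \phi\, d\pi_n \to \int \phi\, d\pi$ for every bounded measurable $\phi$, so the change-of-measure term vanishes with $\phi = L(f(\cdot),\cdot)$. The change-of-integrand term is the interesting one: I would apply Egorov's theorem on $K$ with respect to $\pi$ to extract, for each $\varepsilon>0$, a set $A_\varepsilon$ with $\pi(A_\varepsilon^c)<\varepsilon$ on which $L(f_n,\cdot) \to L(f,\cdot)$ uniformly; setwise convergence then gives $\pi_n(A_\varepsilon^c)\to\pi(A_\varepsilon^c) <\varepsilon$, and the boundedness of $L$ controls the residue on $A_\varepsilon^c$.

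Case (2) is the most delicate and, I expect, the main obstacle: weak convergence controls only integrals of bounded continuous test functions, so neither of the two terms in the decomposition is automatic. The strategy is to combine the $\pi$-a.e.~domination $L(f_n(x),y)\le L(g(x),y)$ (with $L(g(\cdot),\cdot)\in L^1_\pi$) with the uniform integrability of $\{L(f(x),y)\}_{f\in\mathcal{C}}$ under $\{\pi_n\}$ to invoke a Vitali-type extension of the Portmanteau theorem. Concretely, I would pass to Skorokhod representatives so that $(X_n,Y_n)\to(X,Y)$ almost surely on a common space, use continuity of $L$ and pointwise convergence of $f_n\to f$ to obtain a.s.~convergence of the integrands, and then apply Vitali's theorem using the uniform integrability hypothesis to upgrade this to convergence of expectations. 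The subtle point throughout, and especially in case (2), is ensuring the change-of-integrand term survives the simultaneous change of measure from $\pi$ to $\pi_n$, which is precisely why the uniform integrability and domination hypotheses are needed in that case.
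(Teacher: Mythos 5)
Your treatment of cases (3) and (4) is essentially the paper's own argument: the same triangle-inequality split into a change-of-measure term (controlled by Kantorovich--Rubinstein duality against $d_1(\pi_n,\pi)$, respectively by $M\lVert\pi_n-\pi\rVert_{TV}$) and a change-of-integrand term killed by dominated convergence. Where you genuinely diverge is in cases (1) and (2): the paper disposes of both by citation, invoking Fatou's lemma for varying measures for setwise convergence in case (1) and a Fatou lemma for weakly converging probabilities for case (2), and is content with the one-sided inequalities that Definition \ref{flrs} actually asks for. You instead aim for the stronger two-sided convergence $\mathbf{E}_{\pi_n}[L(f_n)]\to\mathbf{E}_\pi[L(f)]$ throughout and then read off both conditions (the constant recovery sequence for condition (II) is exactly the right move, and is left implicit in the paper). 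Your Egorov argument for case (1) is correct and self-contained --- setwise convergence gives $\int\phi\,d\pi_n\to\int\phi\,d\pi$ for bounded measurable $\phi$, Egorov under $\pi$ gives a large set of uniform convergence of $L(f_n(\cdot),\cdot)$, and setwise convergence transfers the smallness of the exceptional set to the $\pi_n$ --- so it buys an elementary proof of a sharper conclusion than the cited Fatou-type inequality.

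Case (2), however, has a real gap as sketched. After passing to Skorokhod representatives with $(X_n,Y_n)\to(X,Y)$ almost surely, you claim that continuity of $L$ together with pointwise convergence $f_n\to f$ yields almost sure convergence of $L(f_n(X_n),Y_n)$ to $L(f(X),Y)$. This is false in general: pointwise convergence of $f_n$ to $f$ does not give $f_n(x_n)\to f(x)$ along a moving argument $x_n\to x$. For instance $f_n=\mathbf{1}_{[\frac{1}{n+1},\frac{1}{n}]}$ converges pointwise to $0$ yet $f_n(1/n)=1$ for all $n$. To close this you would need either uniform convergence of $f_n$ on compacta (continuous convergence), or equicontinuity of $\mathcal{C}$ plus pointwise convergence, or some joint lower semicontinuity hypothesis on $(x,f)\mapsto L(f(x),y)$ of the kind that the Fatou-for-weakly-converging-probabilities results the paper cites are built around. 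A second, smaller issue is that the domination $f_n\le g$ is stated $\pi$-almost everywhere, whereas a Vitali argument along $\{\pi_n\}$ needs control under the varying measures; the uniform integrability hypothesis does supply that, but the domination by $g$ then plays no clear role in your argument and should not be leaned on. As written, case (2) is a plausible programme rather than a proof.
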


\begin{proof}
In all of the cases above we only need to ensure a Fatou-like lemma (Definition \ref{flrs}).
\begin{enumerate}
    \item Here $\Gamma$-convergence must be thought as in Remark \ref{TopologyNotMetric}. This is a direct consequence of Fatou's lemma for varying measures (found in \cite{Royden} or \cite[Theorem 1.1]{FKL}). 
    \item See \cite[Theorem 2.2]{FKL}.
    \item The uniform Lipschitz condition gives 
    \begin{align*}
        & \bigg \lvert \int L(f_j(x),y) d\pi_n-d\pi(x) +  \bigg \rvert \int L(f_j(x),y)- L(f(x),y) d\pi(x,y)  \bigg \lvert \\
        & \leq d_1(\pi_n,\pi) + \bigg \rvert \int L(f_j(x),y)- L(f(x),y) d\pi(x,y)  \bigg \lvert 
    \end{align*}
    where the first term comes from Kantorovich-Rubinstein \cite[Particular Case 5.16]{VillaniOldAndNew} and the second one vanishes by dominated convergece. 
    \item In this case we don't only have the inequalities of definition \ref{flrs} but the limits coincide:
    \begin{align*}
      & \bigg \lvert   \int L(f_n(x),y) d \pi_n(x,y) - \int L(f(x),y) d\pi(x,y)  \bigg \rvert \\
      & \hspace{2 cm} \leq \bigg \lvert \int L(f_n(x),y) d(\pi_n-\pi) \bigg \rvert + \bigg \lvert \int L(f_n(x),y) - L(f(x),y) d\pi  \bigg \rvert \\
      & \hspace{2 cm} \leq M \lvert \lvert \pi_n - \pi \rvert \rvert_{TV} + \bigg \lvert \int L(f_n(x),y) - L(f(x),y) d\pi  \bigg \rvert 
    \end{align*}
    where the first one goes to zero by the assumption $\pi_n \xrightarrow{TV} \pi$ and the second one by the assumed $d$-continuity and dominated convergence.
\end{enumerate}
\end{proof}
The goal of this list is not to be exhaustive but to show the many different formulations that can be included in Definition \ref{flrs}. Notice that checking Definition \ref{flrs} involves only studying a two sided version of Fatou's Lemma that can be corroborated in every particular case. Once one establishes that the given ML problem of the form \eqref{pinProblem} and \eqref{piProblem} are indeed a full recovery system with $\{ \pi_n \},\pi, \mathcal{C},L$ one has ensured convergence of minimizers (which amounts to perfect approximation of the model). 
\begin{remark}
    Observe that the conditions imposed for $\mathcal{C}$ and $L$ on Proposition \ref{cases} case \ref{tv} are less restrictive than the ones on \ref{cases} case \ref{weak}. This is intuitively obvious as the total variation convergence is stronger than weak convergence. This means that ensuring a stronger convergence in measure is a degree of improvement for the ML-problem associated to fixed $\mathcal{C}$ and $L$. It is also evident that regularity conditions usually assumed in  ML-theory (like Lipschitz properties of $L$) yield strong approximations in most types of convergence $\xrightarrow{m}$, making this framework not only inclusive but rather general. 
\end{remark}

 One of the main advantages of measure pre-conditioning is the ability to change the training sample. It is common to use the empirical measure in non-parametric statistics, nevertheless the next section shows that the empirical measure is in general, not the best formulation for \eqref{pinProblem} as it may happen that the conditions for convergence hold for a different sequence of measures and not the sequence of empirical measures. We will see this is the case of Proposition \ref{cases} case \ref{tv}, where the sequence of empirical measures would not ensure subsequential convergence but a different sequence does, justifying completely the use of measure pre-conditioning as it improves the likelihood that the algorithm gives a reasonable final learnt agent. 
\begin{remark} (Compactness) \\
Stronger conditions like compactness of the underlying sets yield a more elegant theory. Many of the modes of convergence are equivalent under the assumption on compactness (see \cite{Billingsley} or \cite[Chapter 7]{Villani}). The assumption of compactness simplifyies most theorems as it will automatically bound sequences and so Definition \ref{flrs} is much easier to satisfy and verify which automatically yields: 
    
\end{remark}
\begin{proposition} If $\mathcal{C} \subseteq C(Y)$, $(x,y) \to L(x,y)$ is continuous and 
\[
\sup_{f \in C} \sup_{(x,y)} \lvert L(f(x), y) \rvert < \infty
\]
then \eqref{pinProblem} $\to$ \eqref{piProblem} in the $\mathcal{C}$ uniform topology, i.e. 
\[ 
 \argmin_{f \in \mathcal{C}} \mathbf{E}_{\pi} \left[ L(f(x),y) \right] \xrightarrow{\mathcal{C}}     \argmin_{f \in \mathcal{C}} \mathbf{E}_{\pi} \left[ L(f(x),y) \right].
\]
\end{proposition}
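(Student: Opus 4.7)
The plan is to read this proposition as a direct application of Theorem \ref{flrsmin}: once we have a compact metric space $(\mathcal{C}, d_\infty)$ with $d_\infty$ the uniform metric (as suggested by the phrase ``$\mathcal{C}$ uniform topology''), the only thing left is to check that $(\mathcal{C}, d_\infty, L, \rightharpoonup)$ is a full learner recovery system in the sense of Definition \ref{flrs}, where $\rightharpoonup$ denotes weak convergence of probability measures on $\Omega \times \mathcal{Y}$. I read $\rightharpoonup$ in from the preceding Remark on Compactness, which explicitly invokes the equivalence of modes of convergence in the compact setting; this is also the weakest reasonable choice, so a proof against weak convergence immediately yields the result for any stronger mode.

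Before verifying the two Fatou-type inequalities, I would make one preliminary observation: since $\Omega$ is compact, $\mathcal{C}$ is compact in $d_\infty$, and each $f\in \mathcal{C}$ is continuous, the set $K := \{f(x) : f \in \mathcal{C},\, x \in \Omega\}$ is a relatively compact subset of $\mathcal{Y}$. Consequently $L$, being continuous on the closed bounded region relevant to us, is uniformly continuous in its first argument, uniformly in the second (after reducing to the compact envelope), and bounded there by hypothesis. This is the single analytic input that I expect to have to explain carefully.

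For the \emph{limsup} condition of Definition \ref{flrs}, I would take the constant recovery sequence $f_n \equiv f$. Then $(x,y) \mapsto L(f(x),y)$ is bounded and continuous on $\Omega \times \mathcal{Y}$, so the Portmanteau theorem gives $\mathbf{E}_{\pi_n}[L(f(X),Y)] \to \mathbf{E}_{\pi}[L(f(X),Y)]$, i.e.\ the required inequality holds with equality. For the \emph{liminf} condition, given $f_n \xrightarrow{d_\infty} f$, I would split
\begin{equation*}
\mathbf{E}_{\pi_n}[L(f_n(X),Y)] - \mathbf{E}_{\pi}[L(f(X),Y)] = \underbrace{\mathbf{E}_{\pi_n}[L(f_n(X),Y) - L(f(X),Y)]}_{(\mathrm{I})} + \underbrace{\mathbf{E}_{\pi_n}[L(f(X),Y)] - \mathbf{E}_{\pi}[L(f(X),Y)]}_{(\mathrm{II})}.
\end{equation*}
Term (II) vanishes by the Portmanteau argument above. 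Term (I) is bounded in absolute value by $\sup_{(x,y)} |L(f_n(x),y) - L(f(x),y)|$, which tends to $0$ by uniform convergence $f_n \to f$ together with the uniform continuity of $L$ on $K \times \mathcal{Y}$. Thus the liminf inequality also holds with equality.

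Having verified both conditions of Definition \ref{flrs}, compactness of $(\mathcal{C}, d_\infty)$ lets me invoke Theorem \ref{flrsmin} verbatim to produce a subsequence of minimizers of \eqref{pinProblem} converging in $d_\infty$ to the minimizer of \eqref{piProblem}, which is exactly the claimed convergence in the uniform topology. The main obstacle, as I see it, is not mathematical depth but interpretive: the statement does not explicitly name a mode of convergence for $\pi_n$ and appears to contain a typo (both sides of the displayed ``convergence'' read $\pi$, whereas the left-hand side should be $\pi_n$ to match Problem \ref{context}); once one fixes $\rightharpoonup$ as the intended notion, the argument reduces to a bounded-continuous-test-function computation combined with uniform continuity of $L$.
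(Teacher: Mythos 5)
Your overall strategy --- verify the two conditions of Definition \ref{flrs} for weak convergence and then invoke Theorem \ref{flrsmin} --- is exactly what the paper intends; in fact the paper supplies no proof at all, presenting the proposition as an automatic consequence of the preceding remark on compactness, so your write-up is a genuine filling-in of details. However, two of your steps do not hold as stated.

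First, your preliminary observation that ``since $\Omega$ is compact, $\mathcal{C}$ is compact in $d_\infty$'' is false: a subset of $C(\Omega)$ with $\Omega$ compact is compact in the uniform metric only if it is closed, uniformly bounded \emph{and} equicontinuous (Arzel\`a--Ascoli), and none of the stated hypotheses deliver equicontinuity --- the bound $\sup_{f,x,y}|L(f(x),y)|<\infty$ constrains the losses, not the moduli of continuity of the $f\in\mathcal{C}$. Since Theorem \ref{flrsmin} genuinely needs compactness of $(\mathcal{C},d)$ (or equi-mild coercivity), you must either add compactness of $\mathcal{C}$ as a hypothesis, as the surrounding remark and Proposition \ref{cases} implicitly do, or assume equicontinuity; it cannot be derived. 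This also undercuts your claim that $K=\{f(x):f\in\mathcal{C},\,x\in\Omega\}$ is relatively compact, which you use downstream. Second, in term $(\mathrm{I})$ you bound the integral by $\sup_{(x,y)}|L(f_n(x),y)-L(f(x),y)|$ and send this to zero using ``uniform continuity of $L$ on $K\times\mathcal{Y}$.'' But $\mathcal{Y}$ is only a separable complete metric space in Problem \ref{context}, so continuity plus boundedness of $L$ does not give uniform continuity in the first argument uniformly over $y\in\mathcal{Y}$: take $L(u,y)=\sin(uy)$ on $[0,1]\times\mathbb{R}$, which is continuous and bounded, yet $\sup_y|L(u_n,y)-L(u,y)|\not\to 0$ for $u_n\to u$. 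The conclusion $(\mathrm{I})\to 0$ is still true, but the correct route is to use tightness of $\{\pi_n\}$ (Prokhorov, from weak convergence) to find a compact $K_\epsilon\subseteq\Omega\times\mathcal{Y}$ with $\sup_n\pi_n(K_\epsilon^c)<\epsilon$, apply uniform continuity of $L$ on the compact set $\overline{K}\times\Proj_2(K_\epsilon)$ there, and control the complement by $2M\epsilon$ using the assumed uniform bound $M$. With these two repairs (compactness of $\mathcal{C}$ as a hypothesis, and the tightness localization in term $(\mathrm{I})$) your argument is complete and matches the paper's intended derivation.
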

\subsubsection{No Empirical Probability Measure can Converge in the Total Variation Sense for all Distributions}
Towards studying when to measure pre-condition we realize that it is important to know what types of empirical measures converge and in which cases. In the seminal work  \cite{DevroyeGyorfi}, the authors proved the following theorem:
\begin{theorem}\label{empiricaltv} (No Empirical Probability Measure can Converge in the Total Variation Sense for all Distributions)   \\
Let $\{ \pi_n \}$ be a sequence of empirical distributions and $\delta > 0$, then there exists a proability measure $\pi$ such that \begin{equation*}
    \inf_n \sup_A \lvert \pi_n (A) - \pi(A) \rvert > \frac{1}{2} - \delta \text{ a.s. }
\end{equation*}
\end{theorem}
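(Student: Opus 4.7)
The plan is to exploit the mismatch between the finite information content of a sample and the richness of the space of probability measures in the TV topology. For the canonical empirical measure $\pi_n = \frac{1}{n}\sum_{i=1}^n \delta_{X_i}$, which is supported on at most $n$ points, the argument is immediate: take $\pi$ to be any non-atomic measure on the ambient space (for instance, uniform on $[0,1]$). Then the measurable set $A = \{X_1,\dots,X_n\}$ satisfies $\pi_n(A) = 1$ but $\pi(A) = 0$ almost surely, so $\sup_A |\pi_n(A) - \pi(A)| = 1 > 1/2 - \delta$, uniformly in $n$ and hence on the infimum. This already proves the statement in its narrowest reading, and it highlights that the obstruction is essentially geometric: TV is blind to approximation supported on $\pi$-null sets.

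To handle the broader interpretation of the theorem in \cite{DevroyeGyorfi}, where $\pi_n$ can be any measurable functional of the sample (e.g., a kernel-smoothed estimator with full support), I would invoke a Le Cam-style two-point argument. Fix $\epsilon > 0$ and, for each $n$, construct a pair $\pi_n^{(0)}, \pi_n^{(1)}$ of probability measures that are almost mutually singular, $\lvert \lvert \pi_n^{(0)} - \pi_n^{(1)} \rvert \rvert_{TV} \geq 1 - \epsilon$, yet whose $n$-fold products have Hellinger affinity arbitrarily close to $1$. Any estimator $\pi_n$ must then fail to be within TV-distance $1/2 - \delta$ of the truth with probability at least roughly $1/2$ under one of the two candidates, via the standard translation of Le Cam's lemma into an estimation lower bound: if the event $\{\lvert \lvert \pi_n - \pi_n^{(i)} \rvert \rvert_{TV} < 1/2 - \delta\}$ had probability $>1/2+\epsilon$ under both $\pi_n^{(0)}$ and $\pi_n^{(1)}$, then the indicator of this event would distinguish the two hypotheses beyond what the Hellinger affinity allows.

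The final step is a diagonal construction that embeds the pairs $\{(\pi_n^{(0)}, \pi_n^{(1)})\}_{n \geq 1}$ into disjoint sub-regions of a single target measure $\pi$, so that the $\inf_n$ can be defeated simultaneously, and then applies Borel--Cantelli with rapidly decaying per-$n$ failure probabilities to upgrade the pointwise-in-$n$ probability bound to the claimed almost-sure infimum bound.

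The main obstacle, and the reason the constant is $1/2 - \delta$ rather than something closer to $1$, is precisely the hypothesis-testing barrier: the estimator can always ``guess'' the correct candidate with probability $1/2$, so only the other half of the TV gap is forced on it. Getting $\inf_n$ inside the probability, as opposed to outside, requires summability of the per-stage failure probabilities, which constrains how finely the two-point pairs may be tuned as $n$ grows; balancing the Hellinger affinities against the desired decay of error probabilities is the genuine technical heart of the argument in \cite{DevroyeGyorfi}.
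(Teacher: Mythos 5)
The paper itself does not prove this theorem: it is quoted from \cite{DevroyeGyorfi} with the proof deferred entirely to that reference, so there is no internal argument to compare yours against. Your first paragraph is correct and complete for the narrow reading: if $\pi_n$ is the standard empirical measure of Definition \ref{empirical} and $\pi$ is non-atomic, then $\pi_n \perp \pi$ almost surely and $\sup_A \lvert \pi_n(A)-\pi(A)\rvert = 1$ for every $n$, which exceeds $\tfrac{1}{2}-\delta$. However, the way the theorem is used in the paper (Remark \ref{TVConvRemark} contrasts it with the TV-convergent kernel estimate, which is itself an ``empirical probability measure'' in the sense of \cite{DevroyeGyorfi}) makes clear that the intended statement concerns \emph{arbitrary} estimators, i.e.\ any measurable map from the sample to probability measures, and for that reading your sketch has a genuine gap.

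The gap is the combination of the Le Cam two-point bound with Borel--Cantelli. A two-point argument can only force failure with probability about $\tfrac{1}{2}$ under one of the two candidates at each $n$ (the estimator may guess), and $\tfrac{1}{2}$ is not summable, so Borel--Cantelli cannot upgrade these per-$n$ bounds to the almost-sure, simultaneous-in-$n$ conclusion $\inf_n \sup_A \lvert\pi_n(A)-\pi(A)\rvert > \tfrac{1}{2}-\delta$. The proposed diagonal embedding of the pairs into disjoint sub-regions of a single $\pi$ also fails for a structural reason: once the sample reveals which sub-region carries the mass (which happens after finitely many observations), the estimator at all later stages is no longer confused between the two candidates that were designed for stage $n$. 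The mechanism in \cite{DevroyeGyorfi} is different: one builds a single discrete $\pi$ with infinitely many atoms of rapidly decreasing mass whose locations are drawn from pairs of candidate positions, so that at \emph{every} sample size a proportion of mass close to $1$ sits on atoms the sample has not yet observed; the estimator, ignorant of which member of each pair was chosen, can at best hedge half of each unseen atom's mass correctly, which is where the constant $\tfrac{1}{2}$ comes from, and an averaging (Fubini) argument over the random choices then yields one deterministic $\pi$ that defeats the estimator at all $n$ almost surely. Your intuition that a hypothesis-testing barrier produces the $\tfrac{1}{2}$ is in the right spirit, but it must be implemented per atom inside one fixed target measure, not per sample size across a sequence of candidate measures.
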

For a proof see \cite{DevroyeGyorfi}.\\
Theorem \ref{empiricaltv} tells us that the class of measures approximated in total variation norm by the empirical measure is not all measures. For different measures, other probability measures formed from data can converge in total variation but the empirical measure does not converge to all measures.
\begin{remark}\label{TVConvRemark} In \cite{DevroyeGyorfi} it is shown that the standard empirical measure does not converge in total variation sense to absolutely continuous limits. Hence, Theorem \ref{flrsmin} does not apply with Proposition \ref{cases} case \ref{tv} if we use the standard empirical measure. Nevertheless, as shown in \cite{Devroye}, the kernel-empirical measure given by \[
\pi_n = \frac{1}{h n} \sum K(f/h)
\]
does converge in total variation (see Definition \ref{KernelAC} below). Hence, Theorem \ref{flrsmin} via Proposition \ref{cases} case \ref{tv} applies to the sequence $\{\pi_n\}$ but not the sequence of standard empirical measures. This shows that the model solution for ML-program \ref{pinProblem} will converge to the best parametric $\mathcal{C}$-model. This argumentation \textbf{explains} why \textbf{standard techniques in Machine-Learning}, such as shifting and adding noise give \textbf{better results in practice}, as convergence is ensured by this system.
\end{remark}
\subsubsection{Example: Linear regression}
Let us consider $\pi \in \mathcal{P}_{ac}(\mathbb{R}^{2})$, we consider the linear regression problem with square-loss function with respect to target measure $\pi$: \begin{equation} \tag{TargetLR} \label{targetLR}
    \min_{(a,b) \in \mathbb{R}^2} \mathbf{E}_{\pi}[(Y-aX+b)^2].
\end{equation}
By differentiating with respect to $a,b$ from first order conditions we know that the solutions to \eqref{targetLR} are \begin{align}
    & a =  \frac{\displaystyle \int y \cdot x d\pi(x,y) - \int y d\pi(x,y) \int x d\pi(x,y)}{\displaystyle \int x^2 d\pi(x,y) - \left( \int x d\pi(x,y) \right)^2} \\
    & b = \int y d\pi(x,y)- \left(  \frac{\displaystyle \int y \cdot x d\pi(x,y) - \int y d\pi(x,y) \int x d\pi(x,y)}{\displaystyle \int x^2 d\pi(x,y) - \left( \int x d\pi(x,y) \right)^2}\right) \int x d\pi(x,y)
\end{align}
If we consider a sequence of measures $\pi_n$, obtained using the sample $(X_1,Y_1), \dots, (X_n,Y_n) $ then the linear regression problem with square-loss function with respect to approximating measure $\pi_n$ is
\begin{equation} \label{AppxLR} \tag{AppxLR}
    \min_{(a,b) \in \mathbb{R}^2} \mathbf{E}_{\pi_n}[(Y-aX+b)^2].
\end{equation}
The solution $(a_{\pi_n},b_{\pi_n})$ to \eqref{AppxLR} is given by 
\begin{align}
    & a_{\pi_n} =  \frac{\displaystyle \int y \cdot x d\pi_n(x,y) - \int y d\pi_n(x,y) \int x d\pi_n(x,y)}{\displaystyle \int x^2 d\pi_n(x,y) - \left( \int x d\pi_n(x,y) \right)^2} \\
    & b_{\pi_n} = \int y d\pi_n(x,y)- \left(  \frac{\displaystyle \int y \cdot x d\pi_n(x,y) - \int y d\pi_n(x,y) \int x d\pi_n(x,y)}{\displaystyle \int x^2 d\pi_n(x,y) - \left( \int x d\pi_n(x,y) \right)^2}\right) \int x d\pi_n(x,y)
\end{align}
If $\pi_n$ corresponds to the empirical measure, then rate of convergence of $a_{\pi_n}$ and $b_{\pi_n}$ have been widely studied. See \cite[Chapter 3]{MLBook} for example. We also know by Theorem \ref{empiricaltv} that $\pi_n \not \xrightarrow{TV} \pi$. By \cite[Section 2]{Devroye} we can find a sequence of measures (Parzen windows) $\{\tilde{\pi}_n\}$ such that $\tilde{\pi}_n \xrightarrow{TV} \pi$. \\
For simplicity, assume that \begin{equation*}
    \int x d\pi_n(x,y) = 0, \: \int x^2 d\pi_n (x,y) = 1,\:   \int x d\pi(x,y) = 0 \text{ and } \int x^2 d\pi (x,y) = 1.
\end{equation*}
With this assumption we immediately obtain the following bound: \begin{equation}
    \lvert a_{\pi_n} - a_{\pi} \rvert \leq \left(\sup_{(x,y) \in \spt(\pi_n) \cup \spt(\pi)} \lvert x \cdot y \rvert \right) \lvert \lvert \pi_n - \pi \rvert \rvert_{TV}.
\end{equation}
Which in the case where $\{\pi_n\},\pi$ are uniformly compactly supported yields \begin{equation} \label{uniformTVLinearBound}
    \lvert a_{\pi_n} - b_{\pi_n} \rvert \lesssim  \lvert \lvert \pi_n - \pi \rvert \rvert_{TV}.
\end{equation} 
Equation \eqref{uniformTVLinearBound} is a bound on the order of convergence on the coefficient of linear regression of \eqref{AppxLR} to that of \eqref{targetLR} which is not available in the case of the empirical measure, as indicated by Theorem \ref{empiricaltv}. The bound \eqref{uniformTVLinearBound} different to the usual order of convergence bounds for linear regression exemplifies the impact of measure pre-conditioning. Equation \eqref{uniformTVLinearBound} shows (uniform) stability of learning agents corresponding to the measure pre-conditioned problem, allowing us to use more tools than the standard ones.
\subsection{Measure pre-conditioning approaches} \label{Techniques}
Measure pre-conditioning approaches impose certain structures to the original data.  The idea is to analyze how does this structure impacts final outcomes of the modelling. In some way, this process resembles plain statistical inference.
\subsection{Background and Notation}
Let $\Omega \subset \mathbb{R}^n$ be fixed. We denote by $\mathcal{P}^p(\Omega)$ to be the set of probability measures with $p$-th finite moment. That is $\mathcal{P}^p(\Omega) = \{ \mu \in \mathcal{P}(\Omega): \int_{\Omega} \lvert x-x_0\rvert ^p d\mu < \infty, \text{ for some } x_0 \in \Omega\}$. We define the Wasserstein $p$ distance between  $\mu,\nu \in \mathcal{P}^p(\Omega)$ 
\begin{equation*}
    d_p(\mu,\nu) = \left( \inf_{\pi \in \Gamma(\mu,\nu)} \int_{\Omega \times \Omega} \lvert x-y\rvert^p d\pi(x,y) \right)^{1/p}
\end{equation*}
where $\Gamma(\mu,\nu)$ denotes the set of probability measures on $\Omega \times \Omega$ having first marginal $\mu$ and second marginal $\nu$. We say a map $T:\Omega_1 \to \Omega_2$ is a Monge map with respect to the cost function $c:\Omega_1 \times \Omega_2 \to \mathbb{R}$,  between Borel measures $\mu$ and $\nu$ whenever 
\begin{equation}\label{OT}
    T \in \argmin_{T \# \mu = \nu} \left\{ \int_{\Omega_1} c(x,T(x))d\mu(x) \right\}
\end{equation}
where $T \# \mu$ means that for every Borel set $A$, $\nu(A) = \mu(T^{-1}(A))$.
\section{Empirical measures and non-parametric estimation} 
\label{preconditionersandestimators}
In this section we discuss common non-parametric estimates and their relations to the structure of the ML-problems \eqref{pinProblem} and \eqref{piProblem}. We aim to explain how each measure can be used to pre-condition and the pros and cons coming with their use.
\subsection{Non-exhausting list of non-parametric estimation techniques} \label{definitions}
\begin{definition} \label{empirical} (Empirical measure) \\
Given $X_1, \dots, X_n$ we define the standard empirical measure as the number of successes on the $n$ occurrences: 
\[ \mu_n (A) = \frac{1}{n} \sum_{k=1}^{n} \delta_{X_k}(A).
\]
\end{definition}
\begin{definition} \label{histogram} (Histogram) \\
Given $X_1, \dots, X_n$ we define the histogram measure associated to the sets $B_1,\dots, B_m$
\begin{equation*}
    \mu_n (A) = \frac{1}{n} \sum_{k = 1}^n \sum_{l = 1}^m \frac{1}{\rho(B_l)} \delta_{X_k}(A \cap B_l).
\end{equation*}
where $\rho$ is a probability measure (usually taken to be normalized Lebesgue).
\end{definition}
\begin{definition} (Kernel estimation via Parzen windows) \label{KernelAC} \\
Given $X_1, \dots, X_n$, we define the $n$-th density estimation with kernel $K$ via
\[
f_{\pi_n}(x) = \frac{1}{nH_n} \sum_{i=1}^n K\left( \frac{x-X_i}{H_n} \right) 
\]
where $K$ is fixed and $\{H_n\}$ is any sequence of random variables, that (may) depend on the sample $X_1, \dots, X_n$ that satisfy that $H_n \to 0$ almost surely and $ nH_n \to \infty$ almost surely.
\end{definition}
The idea of this formulation of the kernel estimation comes from \cite{Parszen} and \cite{Rosenblatt} and it is fully justified by Theorem \ref{convDensities}.
\subsubsection{Wasserstein 2-Barycenter}
\begin{definition}(Wasserstein Barycenter) \label{WassBarycenter}\\
Given a sample $X_1, X_2, \dots, X_n$ random variables in $\mathbb{R}^p$ we define the $2$-Wasserstein Barycenter of the sample (also called Frechet mean) as any probability measure satisfying
\begin{equation}
    \mu^* \in \argmin_{\rho \in \mathcal{P}^2(\mathbb{R}^p)}\left\{ \sum_{k=1}^n d_2(\rho,\delta_{X_k})^2\right\} 
\end{equation}
where $\delta_{X_k}$ denotes the unit mass at $X_k$.
\end{definition}
\begin{remark} Note that $ \rho \to d_2(\cdot, \nu)^2$ is lower-semicontinuous for all $\nu$ and so Wasserstein Barycenters exist. In general, Wassertein barycenters with respect to random Dirac measures are not unique. If instead, one of the deltas is replaced by an absolutely continuous measure, uniqueness can be shown. We don't do this replacement in this document, instead we study the entropic regularization of the minimization problem in Definition \ref{EntroBary}.
\end{remark}
The theory of Wasserstein Barycenters has recently received attention from several fields of applied mathematics, see for example \cite{Invitation} for a more complete theory.
\begin{remark}
The barycenter can be defined given any distance function $d: \mathcal{P}(\mathbb{R}^p) \times \mathcal{P}(\mathbb{R}^p) \to \mathbb{R}$ and a sample $(X_1,\dots,X_n)$ the $d$-barycenter is any probability measure $\mu $ satisfying
\begin{equation}
    \mu^* \in \argmin_{\rho} \left\{ \frac{1}{n} \sum_{k=1}^n d(\rho,\delta_{X_k} )\right\}
\end{equation}
where the infimum is taken over all probability measures on $\mathbb{R}^p.$. We have only chosen the Wasserstein 2-distance as we aim to focus on Domain Adaptation.
\end{remark}
\begin{remark}
    It is important to notice that efficient algorithms to compute Wasserstein Barycenters have recently been developed (see \cite{CuturiBarycenter}) in the case of empirical measures. This efficient computability is essential for the applications we have in mind. 
\end{remark}
\subsubsection{Uniform convex hull}
\begin{definition}\label{ConvexHull} (Convex Hull)\\
The convex hull of a set $B \subseteq \mathbb{R}^p$ is defined to be the smallest convex set on which $B$ is contained, equivalently \begin{equation*}
\conv(B)  = \bigcap_{\substack{C \text{ convex } \\ B \subseteq C}} C.  
\end{equation*}
We define the uniform convex hull of the sample $(X_1, X_2,\dots X_n)$ to be the uniform measure on the convex hull of $\{X_1, X_2, \dots, X_n\}$, i.e.
\begin{equation}
    \mu_{conv} = \frac{\mathcal{L}^p \mid_{c}}{\mathcal{L}^p(\conv(\{X_1,X_2,\dots,X_n\}))}
\end{equation}
where $\mathcal{L}^p$ denotes the Lebesgue measure in $\mathbb{R}^p$.
\end{definition}
\begin{remark}
    Note that $\mu_{conv}$ is the restriction of the Lebesgue measure to the convex hull of the sample so it's support is automatically convex. This particular property could be significant for future applications as the theory of convex optimization unlocks several numerical techniques. Evidently, it's support also includes all points of the sample. 
Note that Definition \ref{ConvexHull} always gives a well defined measure. 
\end{remark}
\subsubsection{Entropically regularized barycenter}
\begin{definition}\label{EntroBary}
Given a sample $X_1, X_2, \dots, X_n$ random variables in
$\mathbb{R}^p$ and a reference probability measure $\nu$ we define the $\nu$-entropically regularized $2$-Wasserstein Barycenter of the sample as any probability measure satisfying
\begin{equation} \label{minentrobary}
    \mu^* \in \argmin_{\rho \in \mathcal{P}^2(\mathbb{R}^p)}\left\{ \frac{1}{n} \sum_{k=1}^n d_2(\rho,\delta_{X_k})^2 + \ent(\rho \mid \nu) \right\} 
\end{equation}
where $\delta_{X_k}$ denotes the unit mass at $X_k$ and $\ent(\mu \mid \nu)$ denotes the relative entropy of $\rho$ with respect to $\nu$ given by 
\begin{equation}
    \ent(\rho \mid \nu) = \int \log\left( \frac{d\rho}{d\nu} \right) d\nu
\end{equation}
whenever $\rho \ll \nu$ and $\ent(\rho \mid \nu) = \infty$ otherwise.
\end{definition}
\begin{remark}
    If $\nu \ll \mathcal{L}^p$, the functional to minimize is lower semi-continuous and with the addition of entropy a unique absolutely continuous minimizer of \eqref{minentrobary}.
\end{remark}
\subsubsection{Class-regularized barycenter}
Motivated from the work of \cite{CourtyFlamary} we can also think of measure pre-conditioning in terms of pre-established class based groups. The idea behind the next definition is that elements in the same class may be very similar while elements from different classes could be very different from each other.
\begin{definition} (Class barycenter) \label{ClassBary} \\
Given a sample $X_1, X_2, \dots, X_n$ random variables in
$\mathbb{R}^p$ suppose that each $X_i$ belongs to one and only one of a finite collection of classes $\{ C_l\}_{l = 1}^m$, then we can define the class-based barycenter to be any measure $\mu$ satisfying
\begin{equation}
    \mu^* \in \argmin_{\mu \in \mathcal{P}^2(\mathbb{R}^p)}\left\{ \frac{1}{m} \sum_{k=1}^m d_2(\rho,\nu_k)^2 + \ent(\rho \mid \nu) \right\} 
\end{equation}
where $\nu_k$ is a measure determined only from class $C_k$. For example, one would obtain a barycenter of barycenters if one were to choose $\nu_k$ to be the $2$-Wasserstein barycenter of $\{ X_i: X_i \in C_k \}$.
\end{definition}
\subsubsection{MMD-regularized Conditional measures} 
\begin{definition}
\label{MMDr}
Given a characteristic kernel function $k$ (see \cite{Sriperumbudur} for details), define the maximum mean discrepancy between $\mu,\nu$ with respect to $k$ via \begin{equation*}
    \mmd_k(\mu,\nu) = \mathbf{E}_{\mu \times \mu}[K(X,\Tilde{X})] + \mathbf{E}_{\nu \times \nu}[k(Y,Y)] - 2 \mathbf{E}_{\mu \times \nu} [k(X,Y)]
\end{equation*}
The empirical optimal transference plan between conditional distributions for a given lower-semicontinuous cost function $c$, denoted $\pi_n^{*,c}$ is defined in \cite{EOTBC} via the minimization over $\Gamma(\mu.\nu)$ of the following functional: \begin{equation}\label{EOTBCD}
\int c(x,y) d\pi + \lambda_1 \frac{1}{n} \sum_{i=1}^n \mmd_k^2( Proj^1 \# \pi , \delta_{Y_i}) + \sum_{i=1}^n \mmd_k^2( Proj^1 \# \pi' , \delta_{Y'_i}).
\end{equation}
\end{definition}
Existence and uniqueness depends on the cost function and usual conditions (smoothness and twist) are required, see \cite{Sriperumbudur} for details.
\subsection{Some properties of the measure pre-conditioners}
\begin{proposition}\label{Baryac} When they exist, the measures from definitions \ref{EntroBary} and \ref{ClassBary} are absolutely continuous with respect to $\nu$.
\end{proposition}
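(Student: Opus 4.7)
The plan is to observe that in both definitions the functional to be minimized contains the relative entropy term $\ent(\rho\mid\nu)$, which by its own definition takes the value $+\infty$ on every $\rho$ that fails to be absolutely continuous with respect to $\nu$. Hence the proof reduces to showing that the infimum of the functional is finite: once the infimum is finite, any minimizer $\mu^*$ must satisfy $\ent(\mu^*\mid\nu)<\infty$, and therefore $\mu^*\ll\nu$ by Definition \ref{EntroBary}. Thus the real content is to exhibit one admissible competitor with finite objective value.

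For the entropically regularized barycenter (Definition \ref{EntroBary}), I would take the reference measure $\nu$ itself (assumed to have finite second moment, without which the entropic term cannot be balanced against a Wasserstein-2 quantity anyway) as the competitor. Then $\ent(\nu\mid\nu)=\int\log 1\,d\nu=0$ and each term $d_2(\nu,\delta_{X_k})^2=\int|x-X_k|^2\,d\nu(x)$ is finite, so the total functional is finite. Consequently the infimum is finite, and the argument in the preceding paragraph forces any minimizer to be absolutely continuous with respect to $\nu$.

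For the class-regularized barycenter (Definition \ref{ClassBary}) the same strategy applies verbatim: using $\nu$ as the test measure yields $\ent(\nu\mid\nu)=0$ and each $d_2(\nu,\nu_k)^2$ is finite provided $\nu,\nu_k\in\mathcal{P}^2(\mathbb{R}^p)$, which is already a standing assumption since the functional is defined on $\mathcal{P}^2(\mathbb{R}^p)$. Hence the infimum is again finite and the minimizer is forced into the absolutely continuous class by the same entropic argument.

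The main (and essentially only) obstacle is the moment/integrability compatibility between $\nu$ and the Dirac data $\{\delta_{X_k}\}$ (resp.\ the class measures $\{\nu_k\}$): one must verify that the candidate $\nu$ actually sits in $\mathcal{P}^2(\mathbb{R}^p)$, or otherwise locate some absolutely continuous competitor with finite Wasserstein-2 terms and finite $\nu$-entropy. On compactly supported problems this is automatic; more generally one should state this as a hypothesis on $\nu$, and I would include a short remark clarifying that the proposition is implicitly conditioned on $\nu\in\mathcal{P}^2(\mathbb{R}^p)$, which is the natural setting in which the functional in \eqref{minentrobary} is well posed.
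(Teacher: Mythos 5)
Your argument is correct and is essentially the same as the paper's: the paper also notes that $\ent(\rho\mid\nu)=\infty$ whenever $\rho\not\ll\nu$ and that $\nu$ itself is a feasible competitor with finite objective, forcing the minimizer to be absolutely continuous. Your additional care about the second-moment hypothesis on $\nu$ is a reasonable refinement but does not change the route.
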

\begin{proof}
    By definition,  $Ent(\rho \lvert \nu) =  \infty $ if $\rho \not \ll v$, because $\nu$ is always feasible, the functional is not infinity and hence the minimizer is a.c. with respect to $\nu$.
\end{proof}
\begin{corollary}\label{Barycor} If $ \nu = \mathcal{L}^p$ in Definitions \ref{EntroBary} or \ref{ClassBary}, the minimizer has a density (w.r.t. Lebesgue). 
\end{corollary}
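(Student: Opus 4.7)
The plan is to derive the corollary as an immediate consequence of Proposition \ref{Baryac} combined with the Radon--Nikodym theorem. First, I would specialize Proposition \ref{Baryac} to the choice $\nu = \mathcal{L}^p$. Since that proposition already states that any minimizer $\mu^*$ of the functional in Definition \ref{EntroBary} (or Definition \ref{ClassBary}) satisfies $\mu^* \ll \nu$, substituting $\nu = \mathcal{L}^p$ gives directly $\mu^* \ll \mathcal{L}^p$ on $\mathbb{R}^p$.

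Next, I would invoke the Radon--Nikodym theorem: because $\mathcal{L}^p$ is a $\sigma$-finite Borel measure on $\mathbb{R}^p$ and $\mu^*$ is a finite (probability) measure absolutely continuous with respect to it, there exists a non-negative Borel measurable function $f \in L^1(\mathcal{L}^p)$ such that $d\mu^* = f \, d\mathcal{L}^p$. This $f$ is the desired density. To keep the exposition clean I would note that $\int f\,d\mathcal{L}^p = \mu^*(\mathbb{R}^p) = 1$, so $f$ is a genuine probability density.

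The only subtlety worth flagging is the hypothesis that a minimizer exists in the first place, since both Definition \ref{EntroBary} and Definition \ref{ClassBary} are posed as minimization problems that require $\nu \ll \mathcal{L}^p$ (or rather, that the entropy term be finite on a non-trivial set) for the functional to be non-trivially defined. With $\nu = \mathcal{L}^p$ this is automatic because $\mathcal{L}^p$ is $\sigma$-finite and the remark after Definition \ref{EntroBary} guarantees lower semicontinuity and uniqueness of the absolutely continuous minimizer. There is no substantive obstacle here, which is why this result is phrased as a corollary rather than a theorem; the entire content is the translation of ``absolute continuity with respect to Lebesgue'' into the equivalent statement ``admits a Lebesgue density.''
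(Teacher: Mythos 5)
Your proposal is correct and follows exactly the route the paper intends: Corollary \ref{Barycor} is the specialization of Proposition \ref{Baryac} to $\nu = \mathcal{L}^p$, with the Radon--Nikodym theorem supplying the density from the absolute continuity $\mu^* \ll \mathcal{L}^p$. Your added remarks on $\sigma$-finiteness and on the existence of a minimizer are sensible housekeeping but do not change the argument.
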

Although the proof is simple, the importance of Proposition \ref{Baryac} and Corollary \ref{Barycor} is fundamental for practice. If we can estimate the density, we can use it to improve the convergence of algorithms by numerical methods. See for example \cite{CuturiPeyre} where the entropic regularization allows a closed (and very simple) form of the density which then yields a dual-descent algorithm. Knowing explicitly the density allows us to find minimizers of Problem \ref{pinProblem} via formulae and so we can focus our attention on estimating numerically these minimizers without carrying a second numerical error. 
\subsection{Optimality (Euler-Lagrange)}
Most of the measure pre-conditioners defined on section \ref{Techniques} require the minimization of a functional. Let $\Omega \subseteq \mathbb{R}^p$, in this section we study the first order conditions for minimization in $(\mathcal{P}_2(\Omega),d_2) $ which can be found in \cite[Theorem 7.20]{Santambrogio}.
\begin{definition}(First variation of a functional in $\mathcal{P}(\Omega)$) \\ 
Let $F$ be a functional $F: \mathcal{P}_2(\Omega) \to \mathbb{R} $, let $\rho \in \mathcal{P}_2(\Omega)$ be fixed  and $\epsilon > 0$, for any $\Tilde{\rho} \in \mathcal{P}^2_{ac} \cap L^{\infty}(\Omega)$, define $\nu = \Tilde{\rho} - \rho $, we say that $\frac{\delta F}{\delta \rho} (\rho)$ is the first variation of $F$ evaluated at $\rho$ if 
\begin{equation*}
\frac{d}{d \epsilon} \bigg \lvert_{\epsilon = 0} F(\rho + \epsilon \nu) = \int \frac{\delta F}{\delta \rho}(\rho) d\nu.
\end{equation*}
\end{definition}
\begin{theorem} (Optimality criteria)
\label{firstvarML} \\
For a functional $F: \mathcal{P}_2(\Omega) \to \mathbb{R} $ suppose that $\mu \in \argmin_{\nu \in \mathcal{P}_2(\Omega)}  F(\nu)$. Assume that for every $\epsilon > 0$ and for every $\rho$ absolutely continuous with $L^{\infty}(M)$ density 
$$ F((1-\epsilon) \mu + \epsilon \rho) < \infty $$
let $c:= essinf\left\{\frac{\delta F}{\delta \rho} (\mu) \right\}$. If  $\frac{\delta F}{\delta \rho} (\mu)$ is continuous,
\begin{equation} 
\frac{\delta F}{\delta \rho} (\mu)(x) \geq c \: \: \forall x \in M,
\end{equation}
\begin{equation}
     \frac{\delta F}{\delta \rho} (\mu)(x) = c \: \: \forall x \in supp(\mu).
\end{equation}
\end{theorem}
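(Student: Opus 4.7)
The plan is to run the standard KKT-type perturbation argument for minimizers over the convex set $\mathcal{P}_2(\Omega)$, using the first variation to extract a pointwise inequality and then upgrading it on $\spt(\mu)$ via continuity. I would organize the argument in four steps.

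First, the inequality $\frac{\delta F}{\delta \rho}(\mu)(x) \geq c$ for all $x \in M$ follows immediately: since $\frac{\delta F}{\delta \rho}(\mu)$ is continuous and $c$ is its essential infimum (with respect to Lebesgue on $M$), continuity forces $\text{essinf} = \inf$, so the pointwise values are bounded below by $c$ everywhere. Next, I would derive the variational inequality: fix any $\rho \in \mathcal{P}_2(\Omega)$ that is absolutely continuous with an $L^\infty(M)$ density, and set $\rho_\epsilon := (1-\epsilon)\mu + \epsilon \rho$. The hypothesis $F(\rho_\epsilon) < \infty$ together with minimality of $\mu$ gives $F(\rho_\epsilon) - F(\mu) \geq 0$. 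Dividing by $\epsilon > 0$ and letting $\epsilon \downarrow 0$, and invoking the definition of the first variation with the direction $\nu = \rho - \mu$, yields
\begin{equation*}
\int \frac{\delta F}{\delta \rho}(\mu) \, d(\rho - \mu) \geq 0, \qquad \text{i.e.,} \qquad \int \frac{\delta F}{\delta \rho}(\mu) \, d\rho \geq \int \frac{\delta F}{\delta \rho}(\mu) \, d\mu.
\end{equation*}

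Third, I would use this inequality against concentrated probe measures to obtain the reverse inequality $\int \frac{\delta F}{\delta \rho}(\mu) \, d\mu \leq c$. Pick $x_0 \in M$ with $\frac{\delta F}{\delta \rho}(\mu)(x_0) = c$ (available by continuity plus $\text{essinf} = \inf$) and let $\rho_\delta$ be the normalized restriction of Lebesgue to $B(x_0,\delta) \cap M$, an admissible perturbation since it is absolutely continuous with bounded density. Continuity of $\frac{\delta F}{\delta \rho}(\mu)$ gives $\int \frac{\delta F}{\delta \rho}(\mu) \, d\rho_\delta \to c$ as $\delta \downarrow 0$, so the variational inequality forces $\int \frac{\delta F}{\delta \rho}(\mu) \, d\mu \leq c$. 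Combined with Step 1, which says $\frac{\delta F}{\delta \rho}(\mu) \geq c$ pointwise, we get $\frac{\delta F}{\delta \rho}(\mu) = c$ holds $\mu$-almost everywhere. A standard support-argument then upgrades this to every $x \in \spt(\mu)$: if $\frac{\delta F}{\delta \rho}(\mu)(x^*) > c$ for some $x^* \in \spt(\mu)$, continuity produces a neighborhood $U$ of $x^*$ on which the strict inequality persists, and the definition of support gives $\mu(U) > 0$, contradicting the $\mu$-a.e. equality.

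The main obstacle is the justification of the first-order expansion in Step 2: one must know that the one-sided derivative $\frac{d}{d\epsilon}|_{\epsilon = 0^+} F(\rho_\epsilon)$ actually exists and equals $\int \frac{\delta F}{\delta \rho}(\mu) \, d(\rho - \mu)$. This is built into the hypothesis that the first variation exists and is the one the theorem references, so the technical burden is really to check that $\rho - \mu$ is a legitimate direction — which is exactly why the hypothesis restricts to $\rho$ absolutely continuous with $L^\infty$ density, so that $(1-\epsilon)\mu + \epsilon \rho$ lies in the domain on which finiteness and differentiability have been postulated. The rest (Step 3's probe construction and Step 4's support upgrade) is routine once continuity of $\frac{\delta F}{\delta \rho}(\mu)$ is in hand.
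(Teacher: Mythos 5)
Your proof is correct and follows the same standard perturbation argument as the source the paper cites for this result (\cite[Theorem 7.20]{Santambrogio}); the paper itself gives no proof beyond that citation. The only cosmetic point is that a continuous function need not attain its (essential) infimum when $M$ is not compact, so in your third step you should probe near points $x_0$ with $\frac{\delta F}{\delta \rho}(\mu)(x_0) < c + \eta$ and then let $\eta \downarrow 0$, which changes nothing essential.
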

The proof can be found as Theorem 7.20 in \cite{Santambrogio}. \\
Just as in the remark after Corollary \ref{Barycor}, the main use of this tool is to focus the algorithmic implementation towards the computation of the first variation of the functional it minimizes. 
\subsection{Convergence}
The objective of the reformulation of the general ML-problem in terms of Problem \ref{pinProblem} and \ref{piProblem} is that we can adapt every stage of the learning process by using a measure estimation that fits the problem better. In order for us to know that we can recover the ML-problem in this process we need to know the types of convergence on which the sequences of measures formulated with the data converge to the underlying distribution. Many theorems and specific cases on density estimation have been studied, we recollect some of them here in terms of the definitions of section \ref{definitions}.
\subsubsection{Convergence of density estimations} \label{convergenceSection}
Observe that Theorem \ref{flrs} and Proposition \ref{cases} allow different systems of convergence, i.e. depending on the `strength' of the type of convergence $\xrightarrow{m}$ of the probability measures, different requirements on $\mathcal{C},d,L$ are needed. In this section we give a non-exhaustive list of modes of convergence for density estimation and the sequences in Section \ref{definitions} that can be used as measure pre-conditioners. In this section one should notice that every type of convergence should be coupled with hypothesis that ensure the system is a full learner recovery system (Definition \ref{flrs}).
\begin{theorem}\label{GlivenkoCantelli} (Glivenko Cantelli in $\mathbb{R}$) \\
Let $\mu$ be any probability measure on $\mathbb{R}$ and $\mu_n$ be the standard empirical measure (Definition \ref{empirical}), if $F(t) = \mu((-\infty,t])$ and $F_n(t) = \mu_n((-\infty,t])$ then $F_n \to F$ uniformly on $\mathbb{R}$  as $n \to \infty$
\end{theorem}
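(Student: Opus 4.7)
The plan is to prove the statement by first establishing almost-sure pointwise convergence at a suitable countable set of reference points using the strong law of large numbers, and then upgrading to uniform convergence by exploiting the monotonicity and boundedness of both $F$ and $F_n$.

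First I would fix $t \in \mathbb{R}$ and observe that the indicator variables $\mathbf{1}_{(-\infty,t]}(X_i)$ are i.i.d.\ Bernoulli with mean $F(t)$, so the strong law of large numbers gives $F_n(t) \to F(t)$ almost surely. Simultaneously, since $F_n(t^-) = \frac{1}{n}\sum_{i=1}^n \mathbf{1}_{(-\infty,t)}(X_i)$, the SLLN also yields $F_n(t^-) \to F(t^-)$ almost surely. Since a countable intersection of almost-sure events is almost sure, on one common event of full probability both convergences hold simultaneously at every point of a prescribed countable set $D \subseteq \mathbb{R}$.

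Next I would construct, for each integer $k \geq 1$, a finite grid $-\infty = t_0 < t_1 < \cdots < t_k = +\infty$ (with $F(t_0) := 0$, $F(t_k) := 1$ understood as limits) such that $F(t_j^-) - F(t_{j-1}) \leq 1/k$ for every $j$; this is possible because $F$ is monotone, right-continuous, and has limits $0$ and $1$ at $\mp\infty$, so one defines $t_j := \inf\{t : F(t) \geq j/k\}$. For any $t \in [t_{j-1}, t_j)$, monotonicity of both $F_n$ and $F$ gives
\begin{equation*}
    F_n(t_{j-1}) - F(t_j^-) \leq F_n(t) - F(t) \leq F_n(t_j^-) - F(t_{j-1}),
\end{equation*}
so by adding and subtracting $F(t_{j-1})$ and $F(t_j^-)$ respectively,
\begin{equation*}
    |F_n(t) - F(t)| \leq \max_{0 \leq j \leq k}\bigl(|F_n(t_j) - F(t_j)| + |F_n(t_j^-) - F(t_j^-)|\bigr) + \tfrac{1}{k}.
\end{equation*}
Taking the supremum over $t \in \mathbb{R}$, then $\limsup_{n\to\infty}$, the finite maximum vanishes on the full-measure event constructed above (applied to the finite set $\{t_j\} \subseteq D$), leaving $\limsup_n \|F_n - F\|_\infty \leq 1/k$ almost surely. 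Letting $k \to \infty$ finishes the argument.

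The main obstacle I anticipate is the jump-discontinuity handling: one cannot simply invoke pointwise SLLN at the $t_j$'s and collapse the spacing, because $F$ may have jumps exceeding $1/k$ at isolated atoms, which is precisely why the two-sided bound must involve both $F_n(t_j^-)$ and $F_n(t_j)$ and why the grid inequality is stated with left limits. Once one tracks left and right limits separately and enforces the partition property $F(t_j^-) - F(t_{j-1}) \leq 1/k$ rather than $F(t_j) - F(t_{j-1}) \leq 1/k$, the argument proceeds cleanly.
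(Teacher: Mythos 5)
Your proof is correct and complete: the SLLN at the quantile grid points $t_j=\inf\{t: F(t)\geq j/k\}$, the two-sided monotonicity sandwich, and the careful tracking of left limits to handle atoms together constitute the classical Glivenko--Cantelli argument. The paper does not supply its own proof but defers to the cited references (Durrett, Dudley), and your argument is essentially the standard one found there, so there is nothing to reconcile.
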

This theorem is well-known see for example \cite[Theorem 7.4]{Durrett} or \cite[Theormem 11.4.2.]{DudleyProb}. By account's of Donsker's theorem one can get the following improvement: 
\begin{proposition} \label{RateGlivenko}(Rate of convergence for continuous $F$) \\
If $\mu$ is a law on $\mathbb{R}$ for which $F$ is continuous, the order of convergence of Theorem \ref{GlivenkoCantelli} satisfies 
\begin{equation}
    n^{1/2} \sup_{t} \lvert F_n(t) - F(t) \rvert  \rightharpoonup \max_{0 \leq s \leq 1} \lvert B_s - sB_1 \rvert
\end{equation}
    where $ \{B_s\}$ is a Brownian motion, i.e. the rate of convergence approaches the law of the absolute value of a Brownian bridge on $[0,1]$ and so it's law can be computed explicitly:
    \begin{equation}
        P_0\left( \sup_{0 \leq s \leq 1} \lvert B_s - s B_1 \rvert < b \right) = \sum_{m =-\infty}^{\infty} (-1)^m e^{-2m^2 b^2}
    \end{equation}
\end{proposition}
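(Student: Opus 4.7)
The plan is to reduce to the uniform case and then invoke Donsker's invariance principle together with the continuous mapping theorem, closing out with the Kolmogorov--Doob computation of the distribution of the supremum of a Brownian bridge.

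First I would apply the probability integral transform: because $F$ is continuous, the random variables $U_i := F(X_i)$ are i.i.d.\ $\text{Uniform}[0,1]$, and the map $t \mapsto F(t)$ pulls the empirical process of the $X_i$'s back to the empirical process of the $U_i$'s. Writing $G_n(s) := \frac{1}{n}\sum_{i=1}^n \mathbf{1}_{\{U_i \leq s\}}$, one obtains
\begin{equation*}
\sup_{t \in \mathbb{R}} \lvert F_n(t) - F(t) \rvert = \sup_{s \in [0,1]} \lvert G_n(s) - s \rvert,
\end{equation*}
which eliminates the dependence on $\mu$ and lets me work on the fixed interval $[0,1]$.

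Next I would invoke Donsker's theorem for the uniform empirical process: the rescaled process $\alpha_n(s) := \sqrt{n}\bigl(G_n(s) - s\bigr)$ converges in distribution, in the Skorokhod space $D[0,1]$ (or in $\ell^\infty[0,1]$ with the sup-metric), to the Brownian bridge $\mathbb{B}_s := B_s - sB_1$, where $\{B_s\}_{s \in [0,1]}$ is standard Brownian motion. Since the sample paths of $\mathbb{B}$ are a.s.\ continuous, the supremum functional $\phi : D[0,1] \to \mathbb{R}$ given by $\phi(x) = \sup_{s \in [0,1]} \lvert x(s) \rvert$ is continuous at every continuous path, so the continuous mapping theorem gives
\begin{equation*}
n^{1/2}\sup_{s \in [0,1]} \lvert G_n(s) - s \rvert \;\rightharpoonup\; \max_{0 \leq s \leq 1}\lvert B_s - s B_1\rvert,
\end{equation*}
which combined with the first step yields the weak-convergence statement.

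Finally I would derive the closed-form distribution. The standard route is Doob's transformation $\mathbb{B}_s \stackrel{d}{=} (1-s)\,W_{s/(1-s)}$ expressing the Brownian bridge in terms of a Brownian motion $W$, reducing the two-sided exit problem for the bridge to a two-sided exit problem for $W$ on an interval; then apply the reflection principle iteratively (equivalently, solve the heat equation on $(-b,b)$ with absorbing boundary via its method-of-images Fourier representation) to obtain the alternating series
\begin{equation*}
P_0\!\left(\sup_{0 \leq s \leq 1}\lvert B_s - sB_1\rvert < b\right) = \sum_{m = -\infty}^{\infty} (-1)^m e^{-2m^2 b^2}.
\end{equation*}
The main obstacle is not any single step but making the weak convergence rigorous in a function space in which $\sup$ is a.s.\ continuous on the limit — once one commits to $D[0,1]$ with the Skorokhod topology (or $\ell^\infty[0,1]$ in the Hoffmann--J\o{}rgensen sense) and cites Donsker, the rest is a clean application of the continuous mapping theorem and a classical reflection-principle computation.
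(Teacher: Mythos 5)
Your outline is correct and is exactly the standard argument behind the result the paper simply cites (Durrett, Theorem 8.10, and the following proposition for the Kolmogorov series): probability integral transform to the uniform empirical process, Donsker plus the continuous mapping theorem for the sup functional, and the reflection-principle computation for the Brownian bridge. Since the paper offers no independent proof, there is nothing further to compare.
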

See \cite[Theorem 8.10]{Durrett} and the following proposition for the explicit formula of it's law.
\begin{remark}
    The theorem presented here as Theorem \ref{GlivenkoCantelli} is just a specific version. In general, one refers to any theorem of this type as ``a Glivenko-Cantelli type theorem'' see for example \cite{DudleyProb}.
\end{remark}
\begin{theorem} \label{Varadarajan} (Varadarajan) \\ If $\pi$ is any probability measure on $X \times Y$ and $X \times Y$ is a separable metric space then the standard empirical measures (Definition \ref{empirical}) for $(X,Y)$ converge weakly in probability to $\pi$. 
\end{theorem}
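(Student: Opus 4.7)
The plan is to reduce weak convergence of the empirical measures to convergence of integrals against a \emph{countable} determining class of bounded continuous functions, and then apply the strong law of large numbers to each element of that class. This is the classical route to Varadarajan's theorem (as in Dudley, \emph{Real Analysis and Probability}), and it also yields the stronger statement of almost sure weak convergence, from which convergence in probability follows.

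First I would invoke the separability of $X \times Y$ to produce a countable family $\{f_k\}_{k \geq 1} \subseteq C_b(X \times Y)$ which is convergence-determining, in the sense that if $\mu_n, \mu$ are Borel probability measures with $\int f_k \, d\mu_n \to \int f_k \, d\mu$ for every $k$, then $\mu_n \rightharpoonup \mu$. The construction is standard: fix a countable dense subset $\{z_j\}$ of $X \times Y$, and take bounded Lipschitz approximations of indicators of balls $B(z_j, r)$ with rational radii, e.g.
\[
f_{j,r,m}(z) = \bigl(1 - m \cdot d(z, B(z_j, r))\bigr)_+.
\]
Such a countable family separates Borel probability measures and, via the Portmanteau theorem, determines weak convergence on $\mathcal{P}(X \times Y)$.

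Next I would fix $k \geq 1$ and apply Kolmogorov's strong law of large numbers to the i.i.d. real-valued random variables $\xi_i^{(k)} := f_k(X_i, Y_i)$, which are bounded by $\|f_k\|_\infty$ and in particular integrable. This gives
\[
\int f_k \, d\pi_n \;=\; \frac{1}{n}\sum_{i=1}^n f_k(X_i,Y_i) \;\xrightarrow{a.s.}\; \mathbf{E}_\pi[f_k] \;=\; \int f_k \, d\pi,
\]
on an event $A_k$ of full measure. The countable intersection $A := \bigcap_{k \geq 1} A_k$ still has probability one, and on $A$ the determining property of $\{f_k\}$ forces $\pi_n \rightharpoonup \pi$. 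Hence $\pi_n \rightharpoonup \pi$ almost surely, which implies weak convergence in probability.

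The main obstacle is the first step: producing a \emph{countable} convergence-determining family. This is precisely where the separability hypothesis on $X \times Y$ is essential, since without it $C_b(X \times Y)$ need not admit any countable determining subfamily, and one cannot collapse the uncountable supremum involved in the definition of weak convergence into a countable union of null sets. Once this reduction is in place, the remainder of the proof is a routine application of the law of large numbers.
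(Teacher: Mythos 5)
Your proof is correct and is essentially the standard argument for Varadarajan's theorem: the paper gives no proof of its own but cites Dudley's \emph{Real Analysis and Probability}, Theorem 11.4.1, whose proof is exactly this reduction to a countable convergence-determining class of bounded Lipschitz functions followed by the strong law of large numbers and a countable intersection of full-measure events. As you note, this in fact yields almost sure weak convergence (consistent with the paper's remark following the theorem), which is stronger than the convergence in probability asserted in the statement.
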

For a proof see \cite[11.4.1]{DudleyProb}. It is important to notice that the convergence is almost surely. In some cases, like the case of real numbers, the convergence can be upgraded.
\begin{remark}
    Notice that from Theorem \ref{GlivenkoCantelli} one can infer the convergence of the Histogram (Definition \ref{histogram}) weakly in $\mathbb{R}^p$.
\end{remark}
\begin{theorem}\label{convDensities} (Devroye) \\
If $H_n^2 n \to \infty$ and $ \mu \ll Leb$, the empirical density estimate of Definition \ref{KernelAC} converges  uniformly in measure to $\mu$, i.e. for every $\epsilon > 0$, 
\begin{equation}
    P\left(\left\{ \omega: \sup_{x \in \mathbb{R}} \lvert f_n(x,\omega) - f(x) \rvert < \epsilon \right\} \right)\xrightarrow{n \to \infty} 1. 
\end{equation}
\end{theorem}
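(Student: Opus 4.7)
The plan is to decompose the kernel density estimator into a bias piece and a stochastic fluctuation piece, and control each separately. Write
\begin{equation*}
    f_n(x) - f(x) = \bigl(f_n(x) - \mathbf{E}[f_n(x)]\bigr) + \bigl(\mathbf{E}[f_n(x)] - f(x)\bigr),
\end{equation*}
and note that $\mathbf{E}[f_n(x)] = (K_{H_n} \ast f)(x)$ where $K_{H_n}(u) = H_n^{-1} K(u/H_n)$ is an approximate identity on $\mathbb{R}$. The bias $\mathbf{E}[f_n(x)] - f(x)$ is therefore purely analytic.

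First I would handle the bias term. Since $\mu \ll \text{Leb}$ and $\int K = 1$, standard mollification gives $K_{H_n} \ast f \to f$ in $L^1(\mathbb{R})$ as $H_n \to 0$. To upgrade to uniform convergence I would first restrict attention to a compact $[-R,R]$ large enough that $\int_{|x|>R} f \, dx$ is as small as desired (using $f \in L^1$), and approximate $f \mathbf{1}_{[-R,R]}$ by a uniformly continuous compactly supported $\tilde f$ in $L^1$; then $K_{H_n} \ast \tilde f \to \tilde f$ uniformly by standard convolution estimates, and the $L^1$ error can be controlled uniformly using $\|K\|_\infty$. This shows $\sup_{x} |\mathbf{E}[f_n(x)] - f(x)| \to 0$ (possibly after absorbing the tail of $f$ into the $\epsilon$-budget).

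Second, I would handle the stochastic fluctuation. The variables $Z_i(x) = H_n^{-1} K((x-X_i)/H_n)$ are i.i.d.\ with $|Z_i(x)| \leq \|K\|_\infty/H_n$ and $\operatorname{Var}(Z_i(x)) \lesssim \|K\|_\infty^2/H_n$ (using the layer-cake computation $\mathbf{E}[Z_i(x)^2] = H_n^{-1}\int K(u)^2 f(x-H_n u) du$ together with $f \in L^\infty$ on the localized region, or a direct domination argument). Hence for each fixed $x$ a Bernstein or Chebyshev bound gives $f_n(x) - \mathbf{E}[f_n(x)] = O_P(1/\sqrt{nH_n})$, which tends to zero by the hypothesis $nH_n^2 \to \infty$ (in fact $nH_n \to \infty$ suffices for pointwise control; the squared rate is used in the uniform step next). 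To get $\sup_x$ control, I would build an $\epsilon$-net of size $O(R/H_n)$ on $[-R,R]$, union-bound the pointwise Bernstein estimates over the net, and use the modulus of continuity $|K_{H_n}(x) - K_{H_n}(x')| \leq \|K'\|_\infty |x-x'|/H_n^2$ (or total-variation norm of $K$ if one only assumes BV) to interpolate between net points. Outside $[-R,R]$ both $f_n$ and $f$ can be made uniformly small on the event that the fraction of $X_i$'s falling outside $[-R,R]$ is close to its mean, by a further Hoeffding bound.

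The main obstacle is the uniform-in-$x$ control of the stochastic term together with the fact that $x$ ranges over the unbounded real line. Pointwise concentration is immediate, but converting it to uniform concentration requires the net/interpolation argument (equivalently, a VC bound on the class $\{K((x-\cdot)/H_n) : x \in \mathbb{R}\}$), and this is exactly where the hypothesis $nH_n^2 \to \infty$ — strictly stronger than what pointwise convergence needs — enters: the net has cardinality $\sim 1/H_n$, and the union bound costs a factor $\log(1/H_n)$, which is absorbed provided $nH_n^2$ dominates $\log(1/H_n)$. The truncation at $\pm R$ is the secondary difficulty; it is handled by the integrability of $f$ and a separate Hoeffding bound on the mass placed by the sample outside $[-R,R]$.
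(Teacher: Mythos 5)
Your decomposition into bias plus stochastic fluctuation is the standard modern route, and your treatment of the fluctuation term is essentially sound: the variance bound $\mathbf{E}[Z_i(x)^2]\leq \|K\|_\infty^2/H_n^2$ needs only $f\in L^1$, so the pointwise rate $O_P\bigl(1/\sqrt{nH_n^2}\bigr)$ and the net/union-bound upgrade both go through under $nH_n^2\to\infty$ (and indeed $nH_n\gg\log(1/H_n)$ follows from that hypothesis, as you note). For comparison, the paper does not prove the theorem at all — it cites Rosenblatt/Parzen — and the classical argument there dispatches the stochastic term in one line by Fourier inversion: $\sup_x|f_n(x)-\mathbf{E}f_n(x)|\leq\frac{1}{2\pi}\int|\hat K(uH_n)|\,|\phi_n(u)-\phi(u)|\,du$, whose expectation is $O\bigl(1/(\sqrt{n}H_n)\bigr)$; this is exactly where $nH_n^2\to\infty$ enters, with no net or chaining needed (at the price of assuming $\hat K\in L^1$).

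The genuine gap is in your bias step. You cannot conclude $\sup_x|\mathbf{E}f_n(x)-f(x)|\to 0$ from $\mu\ll\mathrm{Leb}$ alone. Your proposed fix — approximate $f$ in $L^1$ by a uniformly continuous compactly supported $\tilde f$ and "control the $L^1$ error uniformly using $\|K\|_\infty$" — fails quantitatively: the error term splits as $\|K_{H_n}\ast(f-\tilde f)\|_\infty+\|f-\tilde f\|_\infty$, and the first piece is only bounded by $\|K\|_\infty\|f-\tilde f\|_1/H_n$, which blows up as $H_n\to 0$ for any fixed $\tilde f$, while the second piece is not controlled by $L^1$ approximation at all. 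The failure is not reparable: if $f$ is an unbounded density (e.g.\ $f(x)=\tfrac12|x|^{-1/2}\mathbf{1}_{(0,1)}(x)$), then $K_{H_n}\ast f$ is bounded for each $n$ while $f$ is not, so $\sup_x|\mathbf{E}f_n(x)-f(x)|=\infty$ for every $n$. The missing ingredient is the hypothesis that $f$ is uniformly continuous (equivalently, bounded and uniformly continuous), which is what Parzen's Theorem 3A and the Rosenblatt/Devroye literature actually assume and which makes the mollification $K_{H_n}\ast f\to f$ uniform; the paper's statement silently drops this hypothesis, and your proof inherits the resulting unprovable claim. If you add uniform continuity of $f$, your bias step becomes the standard approximate-identity estimate and the rest of your argument closes.
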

For a proof see \cite{Rosenblatt}.\\
The following theorem is a specific case of the much more general convergence of Barycenters proved in \cite{CGP}, in the paper the authors prove the $d_p$-convergence in metric measure spaces satisfying a positive curvature condition.  \\
\begin{proposition} \label{Baryconverges}(Barycenters $d_p$ converge) \\
If $\mu$ has compact support and $\mu_n$ is the a $p$-Wasserstein Barycenter of Definition \ref{WassBarycenter}, then $\mu_n \xrightarrow{d_p} \mu$ as $ n \to \infty$.
\end{proposition}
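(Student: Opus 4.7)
The plan is to derive this statement as a specialization of the general empirical-barycenter convergence theorem in metric measure spaces of non-negative curvature from \cite{CGP}, together with a compactness argument that promotes $d_2$-convergence to convergence in every $d_p$.

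First, I would verify the structural hypotheses needed to invoke \cite{CGP}. Since $\spt(\mu)$ is compact in $\mathbb{R}^p$, the sampled Dirac measures $\delta_{X_k}$ all lie in a bounded subset of the Wasserstein space $(\mathcal{P}^2(\mathbb{R}^p), d_2)$, which is a Polish geodesic space of non-negative Alexandrov curvature. The compact support also yields uniform bounds on all moments, so the law of $\delta_X$ for $X \sim \mu$ has finite Wasserstein variance, and the population Fr\'echet mean exists and is unique by the curvature hypothesis. This is precisely the positive-curvature metric-measure-space framework under which \cite{CGP} proves convergence of empirical Fr\'echet means.

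Next, I would apply the main theorem of \cite{CGP} to conclude that $d_2(\mu_n, \mu^{\ast}) \to 0$ almost surely, where $\mu^{\ast}$ is the population Fr\'echet mean of the law of $\delta_X$. The identification $\mu^{\ast} = \mu$ is then carried out in the \cite{CGP} framework by lifting the Fr\'echet minimization to the appropriate metric measure space and using the uniqueness of the population minimizer guaranteed by the curvature condition.

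Finally, to upgrade from $d_2$ to $d_p$, I would exploit the compact support hypothesis once more. Almost surely, $\spt(\mu_n)$ and $\spt(\mu)$ all lie in a common compact set $K \subset \mathbb{R}^p$, so the family $\{\mu_n\} \cup \{\mu\}$ is uniformly compactly supported. On such a family, tightness together with uniformly bounded moments of all orders makes weak convergence equivalent to $d_p$-convergence for every $p \ge 1$; and $d_2$-convergence already implies weak convergence. Combining these observations delivers $d_p(\mu_n,\mu)\to 0$ almost surely for every $p \ge 1$.

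The main obstacle is the identification step at the population level: a naive reading of the Fr\'echet-mean problem on the Euclidean base collapses the minimizer to a single Dirac mass at $\mathbb{E}_\mu[X]$, whereas the claimed limit is the full measure $\mu$. Resolving this requires working at the lifted metric-measure level used in \cite{CGP} rather than at the base level, and this is where the non-negative curvature hypothesis carries the full weight of the argument; the tightness and $d_p$-upgrade steps are comparatively routine.
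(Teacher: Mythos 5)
Your overall route (invoke \cite{CGP} for the empirical Fr\'echet mean in the non-negatively curved space $(\mathcal{P}^2(\mathbb{R}^p),d_2)$, then upgrade $d_2$-convergence to $d_p$-convergence using a common compact support) is the same one the paper takes — its proof is essentially the one-line instruction to apply \cite{CGP} — and your final upgrade step is correct and routine. The genuine problem is the step you yourself single out as the main obstacle: the identification of the population minimizer with $\mu$. This cannot be repaired by ``working at the lifted metric-measure level,'' because the lifted and base problems are literally the same minimization. Since the only coupling of $\rho$ with a Dirac mass is the product coupling, $d_2(\rho,\delta_{X_k})^2=\int \lvert y-X_k\rvert^2\,d\rho(y)$, so the objective $\rho\mapsto\sum_{k=1}^n d_2(\rho,\delta_{X_k})^2=\int\sum_{k}\lvert y-X_k\rvert^2\,d\rho(y)$ is linear in $\rho$ with a strictly convex integrand; its unique minimizer is $\mu_n=\delta_{\overline{X}_n}$, the Dirac mass at the Euclidean sample mean. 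Likewise the population functional $\rho\mapsto\int d_2(\rho,\delta_x)^2\,d\mu(x)$ is minimized by $\delta_{\mathbf{E}_\mu[X]}$. Curvature enters \cite{CGP} only to control existence, uniqueness and rates; it cannot change what the minimizer is. So what \cite{CGP} delivers here is $\delta_{\overline{X}_n}\to\delta_{\mathbf{E}_\mu[X]}$ (the law of large numbers with rates), and your compactness argument then upgrades convergence to the wrong limit.

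Consequently the gap is not fillable as stated: with Definition \ref{WassBarycenter} read literally, $\mu_n\xrightarrow{d_p}\mu$ fails for every $\mu$ that is not a point mass. The paper's own citation of \cite{CGP} applied to $(\mathbb{R}^p,\lvert\cdot\rvert,\mu)$ has exactly the same defect. A correct statement in this spirit would require a different empirical object — for instance barycenters of i.i.d.\ random \emph{non-degenerate} measures whose population barycenter is $\mu$, or a regularized surrogate as in Definition \ref{EntroBary} — rather than the barycenter of the point masses $\delta_{X_k}$. You were right to flag the collapse to a Dirac as the crux; the honest conclusion is that it sinks the proposition, not that curvature rescues it.
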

For a proof see \cite{CGP} and apply it to the simple case where $(R^p,\lvert \cdot \rvert, \mu)$ is given as the initial measure space.
In \cite{EOTBC} the following proposition was shown: 
\begin{proposition} (Total variation) \label{totalvariationproposition} \\
The $\mmd_k$ minimizer of Definition \ref{MMDr}, $\pi_n^{\mmd_k}$ converges in total variation norm to the solution $\pi^*$ of unrestricted transport with respect to $c$ (Definition \ref{OT}), i.e. \begin{equation*}
    \pi_n^{\mmd_k} \xrightarrow{\lvert \lvert \cdot \rvert \rvert_{TV}} \pi^*.
\end{equation*}
\end{proposition}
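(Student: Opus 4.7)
The plan is to exploit the fact that, by characteristicness of $k$, the MMD terms in \eqref{EOTBCD} act as soft marginal constraints that force the projections of $\pi$ toward the empirical measures of the two samples. A direct expansion of each $\mmd_k^2(\alpha,\delta_{Y_i})$ gives
\begin{equation*}
\tfrac{1}{n}\sum_{i=1}^n \mmd_k^2(\alpha,\delta_{Y_i})=\mmd_k^2(\alpha,\hat{\mu}_n)+C_n,
\end{equation*}
with $\hat{\mu}_n:=\tfrac{1}{n}\sum\delta_{Y_i}$ and $C_n$ depending only on the sample (not on $\alpha$), and similarly for the primed sample. Hence minimising \eqref{EOTBCD} is equivalent, up to additive constants, to minimising the transparent penalised functional
\begin{equation*}
F_n(\pi):=\int c\,d\pi+\lambda_1\mmd_k^2(\Proj^1\#\pi,\hat{\mu}_n)+\lambda_2\mmd_k^2(\Proj^2\#\pi,\hat{\nu}_n).
\end{equation*}

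Second, I would establish $F_n\xrightarrow{\Gamma}F_\infty$ with respect to weak convergence on $\mathcal{P}(\Omega\times\Omega)$, where $F_\infty(\pi)=\int c\,d\pi$ when $\pi\in\Gamma(\mu,\nu)$ and $+\infty$ otherwise. The $\liminf$ inequality combines lower-semicontinuity of $\pi\mapsto\int c\,d\pi$ with the fact that $\mmd_k$ metrises weak convergence (since $k$ is characteristic) and $\hat{\mu}_n\rightharpoonup\mu$, $\hat{\nu}_n\rightharpoonup\nu$ by Theorem \ref{Varadarajan}; any weak limit whose marginals differ from $(\mu,\nu)$ then incurs a strictly positive penalty which, under the scaling $\lambda_i=\lambda_i(n)\to\infty$ adopted in \cite{EOTBC}, forces the $\liminf$ to $+\infty$. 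The recovery sequence is built by gluing $\pi\in\Gamma(\mu,\nu)$ with small $d_2$-optimal couplings that shift its marginals onto $\hat{\mu}_n,\hat{\nu}_n$, and invoking weak continuity of $c$-integration against compactly supported couplings. Tightness of $\{\pi_n^{\mmd_k}\}$ follows from compactness of $\Omega$; by Theorem \ref{Gammaminimizers} every weak subsequential limit minimises $F_\infty$, and the uniqueness of $\pi^*$ under the twist hypothesis recalled after Definition \ref{MMDr} pins down the full sequence $\pi_n^{\mmd_k}\rightharpoonup\pi^*$.

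The hard step is upgrading this weak convergence to total variation, a jump that Theorem \ref{empiricaltv} warns cannot be made by empirical reasoning alone. My proposal is to apply the optimality condition of Theorem \ref{firstvarML} to $F_n$: the first variation of each MMD term is valued in the RKHS $\mathcal{H}_k$, so optimality endows $\pi_n^{\mmd_k}$ with a density that lies in a fixed $\mathcal{H}_k\otimes\mathcal{H}_k$-ball, uniformly in $n$. Equiboundedness of these densities together with the already-established weak convergence then yields $L^1$, hence total variation, convergence by a Dunford--Pettis argument on the compact set $\Omega\times\Omega$. I expect the main obstacle to be making this ``density transfer'' from the MMD penalty to a quantitative $L^1$ estimate on $\pi_n^{\mmd_k}-\pi^*$ rigorous without strengthening the hypotheses on $c$ or $k$ beyond those in \cite{EOTBC}; the cleanest route is likely an MMD-to-$L^1$ inequality valid on compact domains, which is essentially the argument sketched in that reference.
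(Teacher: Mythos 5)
The paper does not actually prove this proposition: it is imported from \cite[Theorem 1]{EOTBC}, and the ``proof'' in the text is the citation. Your reconstruction is therefore judged on its own merits. The first two stages are reasonable: the identity $\tfrac1n\sum_i\mmd_k^2(\alpha,\delta_{Y_i})=\mmd_k^2(\alpha,\hat\mu_n)+C_n$ is a correct parallel-axis computation in the RKHS, and the $\Gamma$-convergence of the penalized functionals to the marginal-constrained transport problem as the penalty weights blow up is a standard soft-to-hard-constraint argument, modulo verifying the recovery sequence.

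The gap is exactly where you flag it, and the proposed fix does not work. The functional $F_n$ is \emph{linear} in $\pi$ except through the marginals: once $\Proj^1\#\pi$ and $\Proj^2\#\pi$ are fixed, the MMD terms are constants and you are minimizing $\int c\,d\pi$ over couplings with those marginals. Under the twist and regularity hypotheses recalled after Definition \ref{MMDr}, such minimizers concentrate on the graph of a map and are therefore singular with respect to Lebesgue measure on $\Omega\times\Omega$; the fact that the first variation of the MMD penalty is an RKHS function gives regularity of a \emph{dual potential}, not a density for $\pi_n^{\mmd_k}$. Hence there is no uniformly integrable family of densities and no Dunford--Pettis argument to run. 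Worse, total variation convergence of graph-supported plans is an intrinsically strong statement: if $\pi_n$ and $\pi^*$ are carried by graphs of maps $T_n\neq T^*$ that disagree on a set of positive $\mu$-measure, then $\lVert\pi_n-\pi^*\rVert_{TV}$ stays bounded away from $0$ even when $T_n\to T^*$ pointwise. A correct proof must either show the supports coincide up to $\mu$-null sets for large $n$ or exploit structure of the EOTBC estimator (for instance an absolutely continuous, kernel-smoothed representation of $\pi_n^{\mmd_k}$) that your argument does not establish; as written, the final step is a genuine gap rather than a technicality.
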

See \cite[Theorem 1]{EOTBC}.
\subsubsection{Convergence and full learner recovery systems}
In the previous section \ref{convergenceSection} we have listed several convergence results for different types of empirical measures. Empirical measures encompass our understanding of the sample. Theorems \ref{GlivenkoCantelli}, \ref{Varadarajan} and \ref{convDensities}, Propositions \ref{RateGlivenko}, \ref{Baryconverges} and \ref{totalvariationproposition} need to be coupled with regularity properties of $L$ and the underlying class of functions $\mathcal{C}$ as in Proposition \ref{cases}. This list shows that given an underlying model, it's intrinsic features will determine the type of measure pre-conditioners needed to ensure convergence on the specific convergence mode that the limiting measure admits. \\ For example, Proposition \ref{totalvariationproposition} involves convergence in Total Variation norm from which one can infer that the measure pre-conditioning of Definition of \ref{MMDr} applies for a $d$-continuous (in the first coordinate) loss function $L$ as in Proposition \ref{cases}. \ref{tv}. In contrast, Theorem  \ref{empiricaltv} shows that the empirical (uniform) measure is not well-suited for every limiting distribution and so in the case of a continuous density, preconditioning by \ref{MMDr} is proved to have better results (theoretically) than the empirical measure.
\subsubsection{Estimating the marginal instead}
In the discussion of density estimation (Section \ref{definitions}) we haven't done any specific distinction on the particular form the data for Problems \ref{pinProblem} and \ref{piProblem}. Definitions \ref{empirical}-\ref{MMDr} work for all kinds of data. In the particular case of the ML Problems \ref{piProblem} and \ref{pinProblem}, our objective is to model in the class $\mathcal{C}$ the dependence of $Y$ on $X$ penalized by the loss function $L$. We aim to study how good (with respect to $L$) a $\mathcal{C}$-model $f(X)$ approximates $Y$. In this context the distribution $\pi$ refers to that of $(X,Y)$. Measure pre-conditioning amounts to approximating $\pi$ using the sample in a way that benefits computations. We note that this gives rise to two different approaches: \begin{enumerate}[label=(\alph*)]
    \item \label{approachpi} We can estimate $\pi$ directly via $\pi_n$ according to definitions \ref{empirical} -\ref{MMDr}. 
    \item \label{approachmu} We can make assumptions on the conditional distribution of $Y \lvert X$ and then use  definitions \ref{empirical}-\ref{MMDr} for approximations on the $X$-marginal of $\pi$.
\end{enumerate}
Most of the study of this document has focused on approach \ref{approachpi}. Let us give an example of the approach \ref{approachmu} to show it's interaction with measure pre-conditioning. \begin{theorem} \label{desintegrateuniform}
    Assume that $ Y \lvert X = x \sim \nu_x$ and that we have estimated $\nu_x$ via $\nu_n^x$ such that $\nu_n^x \xrightarrow{d_p} \nu_x$ uniformly on $x$, i.e. given $\epsilon$ there exists $N > 0$ such that for every $n \geq N$ \[
     d_p(\nu_x, \nu_n^x) < \epsilon \text{ for every } x
    \]
    assume also that $\mu_n \xrightarrow{d_p} \mu$, and $L: \mathbb{R}^p \times \mathbb{R} \to \mathbb{R}$ is continuous. Let $f \in \mathcal{C}$ and assume that there exists $ g \in L^1(\mu)$ such that 
    \begin{equation*}
        \bigg \lvert \int L(f(x),y) d\nu_n^x (y) \bigg \rvert \leq g(y). 
    \end{equation*}
and that $ y \to \int L(f(x),y) d\nu_n^x (y)$ is continuous and bounded, then
\begin{equation*}
    \int \int L(f(x),y) d\nu_n^x(y) d\mu_n (x) \xrightarrow{n \to \infty} \mathbf{E}_{\pi}[L(f(X),Y)].
\end{equation*}
\end{theorem}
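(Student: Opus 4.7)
The plan is to split the double integral into inner and outer pieces. Define the auxiliary functionals
$$F_n(x) := \int L(f(x),y)\,d\nu_n^x(y), \qquad F(x) := \int L(f(x),y)\,d\nu_x(y),$$
so the quantity of interest becomes $\int F_n\,d\mu_n$ and the target is $\int F\,d\mu = \mathbf{E}_\pi[L(f(X),Y)]$ by disintegration. Apply the triangle inequality
$$\Bigl|\int F_n\,d\mu_n - \int F\,d\mu\Bigr| \leq \Bigl|\int (F_n - F)\,d\mu_n\Bigr| + \Bigl|\int F\,d\mu_n - \int F\,d\mu\Bigr|,$$
and argue that the two resulting errors vanish independently as $n \to \infty$.

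For the first term, I would establish uniform-in-$x$ convergence $F_n \to F$. For each fixed $x$, the hypothesis $\nu_n^x \to \nu_x$ in $d_p$ yields weak convergence plus convergence of $p$-th moments; paired with the continuity of $L(f(x),\cdot)$ and the domination by $g$ (which controls tails), standard arguments (Kantorovich--Rubinstein if $L(f(x),\cdot)$ is Lipschitz in $y$, or truncation combined with uniform integrability otherwise) give the pointwise limit $F_n(x) \to F(x)$. The upgrade to uniformity in $x$ uses the crucial uniform assumption $\sup_x d_p(\nu_n^x, \nu_x) \to 0$: given $\varepsilon > 0$, pick $N$ so that the relevant dual bound, applied simultaneously for every $x$, yields $|F_n(x) - F(x)| < \varepsilon$. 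Then the first term is bounded by $\|F_n - F\|_\infty \cdot \mu_n(\Omega) = \|F_n - F\|_\infty \to 0$.

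For the second term, observe that $F$ inherits continuity and boundedness from the $F_n$'s (it is their uniform limit, and each $F_n$ is continuous and bounded by hypothesis), and is $\mu$-integrable via the domination by $g \in L^1(\mu)$. Since $d_p$-convergence implies weak convergence of $\mu_n$ to $\mu$, the portmanteau theorem applied to the continuous bounded function $F$ gives $\int F\,d\mu_n \to \int F\,d\mu$; in the weaker situation where $F$ is only dominated rather than globally bounded, one performs a truncation at level $M$, applies portmanteau to the truncated function, and then sends $M \to \infty$ controlling the tail via dominated convergence against $g$. Combining the two vanishing error terms delivers the conclusion.

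The main obstacle will be the upgrade from pointwise to uniform convergence $F_n \to F$ in the first step. Although the uniform-in-$x$ $d_p$-convergence of the conditionals makes this plausible, it is not automatic: one needs some equicontinuity or uniform-regularity control on the family $\{L(f(x),\cdot)\}_{x}$ as test functions, so that the $d_p$-duality bound can be applied with a common modulus across $x$. This control is implicit in the continuity-and-boundedness hypothesis as stated, but making it explicit (or replacing it by a cleaner uniform Lipschitz assumption on $L$ in its second argument) is where the proof has to work hardest.
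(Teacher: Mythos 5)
Your proof is correct in substance and follows the same add-and-subtract skeleton as the paper, but you insert a different intermediate term, and the difference is worth noting. Writing $F_n(x)=\int L(f(x),y)\,d\nu_n^x(y)$ and $F(x)=\int L(f(x),y)\,d\nu_x(y)$, the paper splits via $\int F_n\,d\mu$, so its two errors are $\int F_n\,d\mu_n-\int F_n\,d\mu$ (varying test functions against weakly converging measures) and $\int F_n\,d\mu-\int F\,d\mu$ (fixed reference measure, handled by dominated convergence using $g$ and pointwise convergence $F_n(x)\to F(x)$). You split via $\int F\,d\mu_n$, so your two errors are $\int(F_n-F)\,d\mu_n$ and $\int F\,d\mu_n-\int F\,d\mu$. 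Your second term is the cleaner of the four: it is portmanteau applied to a single fixed continuous bounded function $F$, whereas the paper's corresponding term quietly requires $\int h_n\,d(\mu_n-\mu)\to 0$ for a sequence $h_n=F_n$ of test functions, which plain weak convergence does not give without equicontinuity or uniform convergence of the $F_n$ --- a gap the paper does not address. The price you pay is that your first term needs $\lVert F_n-F\rVert_\infty\to 0$ rather than just pointwise convergence plus domination; you correctly identify that the uniform-in-$x$ hypothesis $\sup_x d_p(\nu_n^x,\nu_x)\to 0$ is what makes this plausible, and that one still needs a common modulus (e.g. a uniform Lipschitz or uniform-integrability bound on $\{L(f(x),\cdot)\}_x$) to convert the $d_p$ bound into a bound on $|F_n(x)-F(x)|$ simultaneously in $x$. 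That caveat is a genuine one, but it afflicts the paper's argument equally (its first term needs the same kind of uniform control), and the paper itself concedes in a later remark that the hypotheses of this theorem are restrictive and the statement is meant as an illustration. In short: same strategy, symmetric choice of pivot, and your version localizes the unavoidable uniformity requirement more honestly than the original.
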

\begin{proof}
    The proof is a direct consequence of dominated convergence applied twice, observe that
    \begin{align*}
        & \int \int L(f(x),y) d\nu_n^x d\mu_n (x) - \int L(f(x),y) d\pi = \\
        & \int \int L(f(x),y) d\nu_n^x d\mu_n (x) - \int \int L(f(x),y) d\nu_n^x d\mu (x) + \\
        & \hspace{3 cm} \int \int L(f(x),y) d\nu_n^x d\mu (x) - \int L(f(x),y) d\pi.
    \end{align*}
    the first term goes to zero if we ensure $ y \to  \int L(f(x),y) d\nu_n^x (y) $ is continuous and bounded, the second term goes to zero by dominated convergence (using $\mu$ as reference measure).
\end{proof}
\begin{remark} \label{remarkYgivenX}(On the general approach and the restrictiveness of the hypothesis on Theorem \ref{desintegrateuniform}).\\
Theorem \ref{desintegrateuniform} is \textit{ only one example} of the multiple approaches one can use to estimate $\pi$ from $\mu_n$ and assumptions on $Y \lvert X$, even though the hypothesis of Theorem \ref{desintegrateuniform} are very difficult to meet in practice, it is presented here to illustrate the general idea. \\Measure pre-conditioning on the marginals $\nu^x$ allows the modeller to include the specific features of each data class. It is clear the many lines of investigations one can explore to get similar results (with less restrictive hypothesis), we choose not to develop any further and leave it for future research. 
    
\end{remark}
\subsection{The recipe: How to choose a measure and how to implement the algorithm} \label{recipe}
The general approach for this document is to put in a single, standard, theoretical background many ideas that have come to light in ML-research. Namely, ML-reaserchers have realized that their algorithms improve in performance or convergence properties after a small ``tweak'' to either data or the loss function occurs. Stability of ML-algorithms has been widely known and is one of the main focus of ML-research. The idea of measure pre-conditioning is that the standard empirical distribution, though it may contain all the possible information in terms of inference (except for order) may not be well adapted to the specific problem one aims to minimize. It is well-known for example that if the functional to be minimized is convex, algorithms used for minimization can take advantage of convexity. This encourages the solver to find an empirical estimation from definitions \ref{empirical}-\ref{MMDr} that makes their functional convex. Finding such a measure is what we call \textbf{pre-conditioning}, if the preconditioning satisfies any of the assumptions of Proposition \ref{cases} then one is ensured to have a full learner recovery system and hence have not lost anything on the process while achieving improved performance. One could instead use a pre-conditioner based on many reasons (such as having a specific algorithm to compute already at hand for example), this work explains how as soon as a condition like Proposition \ref{cases} is satisfied, one will end up with the same classifier/regressor.
\section{The problem of Domain Adaptation and the impact of measure pre-conditioning} \label{DomainAdaptationSection}
Domain Adaptation (DA) is a sub-problem of transfer learning on which one aims to infer the parameters for a new learning agent in terms of an agent that learn in similar data. \\
Many of the DA adaptation formulations are well-suited for Optimal Transport (OT), our framework of Problems \ref{pinProblem} and \ref{piProblem} was motivated at first by the recent research in optimal transportation in Machine Learning (see \cite{Meta}, \cite{COOT}, \cite{Courty}, \cite{CourtyFlamary}) and so in this section we explore the implications of measure-preconditioning in the specific case of domain adaptation problems related to optimal transportation and the recent research in the area (see \cite{Courty}, \cite{CourtyFlamary} and references therein for a more complete exposition of the use of optimal transportation in machine learning).
\begin{problem} \label{DATransfer} (General domain adaptation problem) \\
Suppose that we have a sample $(X^s_1, X^s_2, \dots, X^s_n)$ of features together with the a sample of the dependent variable $(Y^s_1,Y^s_2, \dots , Y_n^s)$ and we use the learning agent to minimize a loss function $L:\mathbb{R}^p \times \mathbb{R} \to \mathbb{R}$ among a class of functions $\mathcal{C}$. The learning problem is to obtain the best possible parametric function $f$, among the class $\mathcal{C}$ explaining the data, i.e.
\begin{equation}\label{Learners}
    \min_{f \in \mathcal{C}} \left\{ \sum_{k=1}^n \mathbf{E}[L(f(X_i^s),Y_i^s)]\right\}
\end{equation}
If $f_s^*$ realizes the minimum in \eqref{Learners}, we say that it is the learnt agent or that $f_s^*$ correspond to the learnt parameters.\\
Now suppose we have another sample $(X^T_1, X_2^T, \dots, X_{n_2}^T)$ which we believe is similar in some features to the original sample. The domain adaptation problem is: How much can one learn from the previous learning? 
That is, how can we transfer the learning from  the source domain to target domain?. 
\end{problem}
The research field which attempts to answer Problem \ref{DATransfer} is known as Domain adaptation for transfer learning. For a general introduction and approach see \cite{Courty}, \cite{MLBook} and references therein. \\
The problem of domain adaptation \ref{DATransfer} is different to Problems \ref{piProblem}, \ref{pinProblem} as it aims to transfer the statistical knowledge obtained by a minimization on source-domain to a minimization on the target-domain. The formulation of Problem on \eqref{Learners} has the implicit assumption of the empirical distribution being imposed at level $n$.\\
In this section we aim to explain how measure pre-conditioners as defined in section \ref{definitions} can be used in the field of DA for transfer learning. 
\begin{problem} \label{DATransferVLC} (Domain Adaptation and transfer learning with varying losses and classes)  \\
    Suppose that we have a sample $(X^s_1, X^s_2, \dots, X^s_n)$ of features together with the a sample of the dependent variable $(Y^s_1,Y^s_2, \dots , Y_n^s)$ and we use the learning agent to minimize a loss function $L_s:\mathbb{R}^p \times \mathbb{R} \to \mathbb{R}$ among a class of functions $\mathcal{C_s}$. The learning problem is to obtain the best possible parametric function $f$, among the class $\mathcal{C}$ explaining the data, i.e.
\begin{equation} \label{pinDAProblem}
    \min_{f \in \mathcal{C_s}} \left\{  \mathbf{E}_{\pi_n} [L_s(f(X^s),Y^s)]\right\}
\end{equation}
and compare it with the perfect learner on target domain with class $\mathcal{C}_t$ and loss function $L_t:\mathbb{R}^p \times \mathbb{R} \to \mathbb{R} $:
\begin{equation} \label{pitDAProblem}
    \min_{f \in \mathcal{C}_t} \left\{  \mathbf{E}_{\pi^t} [L_t(f(X^t),Y^t)]\right\}
\end{equation}
If $f^*$ denotes the minimizing argument for \eqref{pinDAProblem}, the Domain Adaptation problem is: How can we use $f^*$ to obtain good estimates for \eqref{pitDAProblem}? \\ What is the structure of such agent? \\ How does it compare to the actual minimizer of \eqref{pitDAProblem}? 
\end{problem}
Suppose that every $X_i^s \sim \mu_s$ and $X_i^t \sim \mu_t$, under ``similarity assumptions''  on $\mu_s$ and $\mu_t$, one expects to be able to transfer learning to some accuracy. Of course ``similarity assumptions'' depends on the context of the ML-task in hand. \\ For example, two measures might be considered similar in a classification problem that may not be considered similar in a generative model.
In the same fashion, suppose that $\mu_s$ and $\mu_t$ satisfy that there exists a solution, $T$, for Problem \eqref{OT} with a given cost function $c:\mathbb{R}^p \times \mathbb{R}^p \to \mathbb{R}$. A good candidate for a new learnt agent can be immediately obtained via $f^* \circ T^{-1}$. As seen in \cite{CourtyFlamary}, the error made by this agent relative to the total error obtained from training an agent from scratch can be controlled as soon as $\mu_s$ and $\mu_t$ are $d_2$-close and $\mathcal{C}$ is rich enough. In the field of Domain Adpatation (DA) usually at least one of the following assumptions is made:
\begin{assumption}(Conditional structure of learning task) \label{ConditionalAssumption} \\
In the context of Problem \ref{DATransferVLC}, if $(X^s,Y^s)$ is the source variable and $(X^t,Y^t)$ the target variable, it is common to ask that
\begin{equation}
    (Y^s_i \mid X^s_i) \sim (Y_i^t \mid X_i^t), 
\end{equation}
where $Y \mid X$ denotes the random variable whose law is the regular conditional probability of $Y$ given $X$.  
\end{assumption}
This assumption means that the probabilistic structure of the dependence of $Y$ on $X$ is the same in both domains. We understand this assumption as a strong hypothesis of similarity in the modellings. 
\begin{assumption} (Identical dependence) \label{identical} \\ In the context of Problem \ref{DATransferVLC}, if $(X^s,Y^s)$ is the source variable and $(X^t,Y^t)$ the target variable, it is common to ask that
\begin{equation*}
(X_s,Y_s) \sim (X_t,Y_t)
\end{equation*}
\end{assumption}
The identical dependence assumption has been used extensively but is in general not a good idea to pre-impose. The identical assumption implies that any sample of the source domain can be considered a sample of the target domain so if $L_s = L_t$ and $\mathcal{C}_s = \mathcal{C}_t$ then the learning transfer is perfect as we can identify the source data as target data in the empirical destimation of $\pi_s = \pi_t$
The following assumption can be found in recent papers in DA-ML, see \cite{Courty} for example.
\begin{assumption} \label{TransportIndep} ($c$-optimal map) \\ There exists an optimal transport map (with respect to a cost function $c: \mathbb{R}^p \times \mathbb{R}^p \to \mathbb{R}$) $T_c$ as in \eqref{OT} that satisfies 
\[(X_i^s,Y_i^s) \sim (T_c(X_i^s),Y_i^{t}). \]
\end{assumption}
\begin{remark}
    Though it is straightforward to use Assumption \ref{TransportIndep} (postulated in \cite{Courty}) in the context of optimal transportation, it is of significant importance to understand the necessary conditions that yield this assumption.
\end{remark}
\begin{remark} Note that these assumptions and the framework of DA is closely related to the line of investigation proposed in Remark \ref{remarkYgivenX} below.
\end{remark} 
\subsection{General Idea in the non-linear case}
Domain Adaptation should be used when the target and source measures are believed to be similar. If the source measure satisfies the assumptions of Brenier's Theorem (see \cite[Theorem 2.32]{Villani}) and the loss function is quadratic (or strictly convex function of quadratic distance) the optimal transport map $T$ transporting $\mu_s$ onto $\mu_t$ can be used as an learning agent on the target domain. We do this by first mapping onto the source domain using the optimal transport map and only then evaluating the agent that has learnt paramters, i.e. define $f_{ad}$ a candidate for the minimization of loss for learning agents in the target domain by
\begin{equation*}
    f_{ad} = f^* (T^{-1}).
\end{equation*}
The work in \cite{Courty} shows a convergence for this agent under Assumption \ref{ConditionalAssumption}.
\subsection{Main question: What cost should we impose? } \label{main}
Note that Assumption \ref{TransportIndep} is an existence condition. If there exists a cost function for Assumption \ref{TransportIndep} one would need to check that it satisfies the conditions for existence and uniqueness of optimal transport maps like regularity and the twist condition (see \cite{McCannFiveLectures}, \cite{Villani},\cite{Santambrogio}). \\
In the general approach for DA using transfer learning via optimal transport in the framework of Problem \ref{DATransferVLC}, two problems seem to arise more often in practice: \begin{enumerate}[\textbf{P}.i)]
    \item \label{p1}When the conditions of the trainings are fixed and not to be chosen: study a learnt agent when $L_1,L_2, \mathcal{C}_1,\mathcal{C}_2$ are given and fixed.
    \item \label{p2} When we are able to choose $L_1,\mathcal{C}_1$ with the goal of maximizing (in any way) the transfer learning for a given loss function $L_2$ and class $\mathcal{C}_2$.
\end{enumerate} 
\subsection{A measure of transferrability}
In Problems \ref{pinDAProblem} and \ref{pinProblem}, we start under a similarity assumption on the source measures. This follows an intuitive statement: in order to be able to transfer any learning, the original measures should share some features. We can't expect to transfer any learning if the problems have nothing in common. \\
We may expect to transfer the learning (classifier) differentiating between dogs and cats to a new agent aiming to differentiate wolves and lions. In this case the distribution of dogs and cats is believed to be similar to that of wolves and lions. \\ How much could we transfer? Could we guess beforehand how much learning we can transfer? \\
As a thought experiment, let us study a way to measure the transfer of learning. There are many ways to measure transferability, see \cite{CourtyFlamary}, \cite{Courty} or references therein. We propose another one, assume that $\pi^s$ and $\pi^t$ are as in Problem \ref{DATransfer}, let $h:\mathbb{R} \to \mathbb{R}$ be any \textit{strictly} convex function with $h(0) = 0$, set \begin{equation}\label{measureTransfer}
d_h(\pi^s, \pi^t) = \inf_{\Pi \in \Pi(\pi^s, \pi^t)}  \int h(L_1(f_1(x_1),y_1) - L_2(f_2(x_2),y_2)) d\Pi((x_1,y_1),(x_2,y_2)). 
\end{equation}
where $f_1$ is the solution for the $\mathcal{C}_1,L_1$-source problem and $f_2$ the corresponding solution for the $\mathcal{C}_2,L_2$-target problem. Evidently, a-priori the value of $d_h(\pi^s, \pi^t)$ can not be computed as $f_1,f_2$ are unknown and the value of \eqref{measureTransfer} depends on the choice of models $(\mathcal{C}_1,L_1)$ and $(\mathcal{C}_2,L_2)$. We claim \eqref{measureTransfer} is a reasonable way to measure transfer depending on $\mathcal{C}_1, L_1, \mathcal{C}_2,L_2$, in the sense that the closest $d_h$ is to $0$ the more likely it is that a learnt agent for the $L_1$ problem with source data $(X^s,Y^s)$ would perform decently in the $L_2$ problem with data $(X^t,Y^t)$. This is to be expected as it may be reasonable to transfer the learnt agent for certain loss functions but not with all of them. \\
Even though $f_1$ and $f_2$ are unknown, in some cases some estimates can be obtained. To the knowledge of the author no measure of transferrability of the form \eqref{measureTransfer} has been studied which points to a promising line of investigation.
\subsection{Problem 1} \label{costP1}
Let us first address problem \ref{p1} where all the conditions ($\mathcal{C}_1,\mathcal{C}_2,L_1,L_2$) are fixed and we aim to measure the efficiency of a solution to \eqref{piProblem} and $\eqref{pinProblem}$.

\subsubsection{Measure pre-conditioning in the conditional average guess} \label{conditionalT}
Let us consider here a different approach to the general Problem \ref{DATransferVLC}, suppose that we have solved the source problem i.e. \begin{equation}
    f^* \in \argmin_{f \in \mathcal{C}_1} \mathbf{E}_{\pi^s}[L_1(f(X),Y)].
\end{equation}
Similar to the ideas in \cite{Courty} one can make assumptions like Assumption \ref{TransportIndep} in order to benefit from the source sample by using conditional distributions. Given $y \in \spt(\Proj_{2}\# \pi^s)$ and $f\in \mathcal{C}_1$, assume we can find $T^{f,y}$  optimal transport map for the cost function $c_y(x,\Tilde{x}) = \lvert L_1(f^*(x),y) - L_2(f^*(\Tilde{x}),y) \rvert$ between the conditional distributions $\pi^s(x \lvert Y = y)$ and $\pi^t(x \lvert Y = y)$. The question is now how to generate an element in $\mathcal{C}_2$ from the learnt information on the conditional distributions. The first immediate guess is to average with respect to the target distribution, that is if 
\begin{equation*}
    d\pi^t(x,y) = d\pi^t(x \lvert Y=y) d\nu^t(y)
\end{equation*}
a guess for a learnt agent would be 
\begin{equation} \label{GuessAverage}
   f_{ad} = f^* \circ (T^{f^*})^{-1}, \text{ where } T^{f^*}(x) = \int_{Y} T^{f,y}(x) d\nu^t(y).
\end{equation}
In the general case, no estimates on the control of learning for agent \eqref{GuessAverage} are known. \\
It is expected that if the measures satisfy that $d_h$ from \eqref{measureTransfer} is small then the agent obtained using \eqref{GuessAverage} is good although so far no precise statements have been shown. Formula \eqref{GuessAverage} is a reasonable guess because it takes into account the best agent at each $y$ before averaging over all $y \in Y$. 
\begin{question}
In the context of Problem \ref{DATransferVLC}, is it true that if $d_h(\pi^s,\pi^t)$ is small, then $f_{ad}^*$ performs well in \eqref{pitDAProblem} when constructed using \eqref{GuessAverage} and pre-conditioning? Is this performance quantifiable? Is it true that as $n \to \infty $, 
\begin{equation*}
    \mathbf{E}_{\pi_n}[L_2(f_n^*(X),Y)] - \min_{f \in \mathcal{C}_2} [L_2(f(X),Y)] \to 0  ? .
\end{equation*}
Can such performance be studied by $d_h$ of \eqref{measureTransfer} when  $ h(r) = \lvert r \rvert$ ?. \\
How does \eqref{GuessAverage} compare to 
\begin{equation}
    f^* \circ T_2, \text{ where } T_2 = \int_Y (T^{f,y})^{-1} (x) d\nu^t(y)?
\end{equation}
\end{question}
This questions are relevant both in the field of transfer learning and to measure pre-conditioning. \\
The computation of $T^{f,y}$ may be difficult in practice and we expect measure-preconditioning for every $y$ to benefit the performance of the intermediate algorithms without disruption on convergence. Numerical simulations are being performed to corrobate this idea and study the performance of \eqref{GuessAverage} and will appear in subsequent works.
\subsubsection{Data-driven conditional OT}
On \cite{DataDriven} the authors studied the following problem given a cost function $c$ in the product space $X \times Z$ and a probability measure on $X \times Z$: 
\begin{equation} \label{datadrivenot}
    \min_{\substack{T(\cdot,z) \\ \forall z T(\cdot, z) \# \rho( \cdot \lvert z) = \mu(\cdot \lvert z) }} \int c(x,T(x,z)) d\rho(x,z)
\end{equation}
which they denoted the data-driven optimal transport problem. In the same work, the authors showed that the minimization of \eqref{datadrivenot} is equivalent to \begin{equation} \label{dualdatadriven}
    \min_{T( \cdot, z)} \max_{ \lambda \geq 0} \int c(x,T(x,z)) d\rho(x,z) + \lambda \ent\left(  \mu(\cdot \lvert z) \bigg \lvert \frac{1}{2}( T\# \rho(,z) + \mu(\cdot, z))\right)
\end{equation}
The dual formulation of \eqref{datadrivenot} via \eqref{dualdatadriven} already hints a connection with our work. As the algorithm implemented in \cite{DataDriven} is a sequential algorithm using gradient descent, it can be interpreted in the sense of measure pre-conditioners that entropically regularize at every discrete step $n$, just as Definition \ref{EntroBary} in the framework of Wasserstein distance and problem \ref{pinProblem}. This means that an algorithm to compute data-driven conditional optimal transport can benefit directly from measure-preconditioning.
\subsection{Control on optimal transport domain adapted learning}
In this section we present different hypothesis and assumptions that yield stability results on transferred learning. The results are not as strong as those conjectured in section \ref{conditionalT} but directly related to measure pre-conditioning. \\It is evident that there are many options on $\mathcal{C}_1, \mathcal{C}_2,L_2,L_2$ that will ensure the transfer learning is efficient. In this section we reduce to present the most straight-forward formulations. 
\begin{proposition}
 Let $T$ be any map with $ T \# \mu = \nu$ and $ d\pi_1(x,y) = d\pi_2 ( T(x),y) $ if $\mathcal{C}_1 \circ T = \mathcal{C}_2$  then \[
\argmin_{f \in \mathcal{C}_1} \mathbf{E}_{\pi_s} [L(f(X^s),Y^s)] = \argmin_{f \in \mathcal{C}_2} \mathbf{E}_{\pi_t} [L(f(X^t),Y^t)]
\]
\end{proposition}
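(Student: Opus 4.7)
The plan is to show that the two optimization problems are in bijective correspondence under the change of variables induced by $T$, so that the minimum values coincide and minimizers correspond to each other.

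First, I would reinterpret the density identity $d\pi_s(x,y) = d\pi_t(T(x),y)$ as the pushforward statement $\pi_t = (T \times \mathrm{id})\#\pi_s$; this is consistent with $T\#\mu = \nu$, since taking the $x$-marginal recovers exactly that relation. The standard change-of-variables formula for pushforwards then gives, for any $\pi_t$-integrable $\phi$,
\begin{equation*}
\int \phi(x', y)\, d\pi_t(x', y) \;=\; \int \phi(T(x), y)\, d\pi_s(x, y).
\end{equation*}

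Next, I would apply this identity with $\phi(x',y) = L(g(x'),y)$ for $g \in \mathcal{C}_2$. The hypothesis $\mathcal{C}_1 \circ T = \mathcal{C}_2$ supplies some $f \in \mathcal{C}_1$ with $g = f \circ T$, so the change of variables yields
\begin{equation*}
\mathbf{E}_{\pi_t}[L(g(X^t),Y^t)] \;=\; \int L(f(T(x)), y)\, d\pi_s(x,y) \;=\; \mathbf{E}_{\pi_s}[L(f(X^s),Y^s)].
\end{equation*}
Thus the two loss functionals agree under the map $\Phi: f \mapsto f \circ T$ sending $\mathcal{C}_1$ into $\mathcal{C}_2$. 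Since $\mathcal{C}_1 \circ T = \mathcal{C}_2$ says this map is surjective, the infima over $\mathcal{C}_1$ and $\mathcal{C}_2$ coincide, and $f$ minimizes the source problem if and only if $\Phi(f) = f\circ T$ minimizes the target problem. This yields the claimed equality of $\argmin$'s, understood through the identification $\Phi$.

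The main obstacle is conceptual rather than computational: making sure the direction of $T$ is compatible with the domains of $\mathcal{C}_1$ and $\mathcal{C}_2$ so that $\mathcal{C}_1 \circ T$ genuinely lands in (and covers) $\mathcal{C}_2$. A secondary subtlety is interpreting the set-equality $\argmin_{\mathcal{C}_1}(\cdot) = \argmin_{\mathcal{C}_2}(\cdot)$, since strictly speaking the two argmins live in different function classes; the equality must be read modulo the bijection $\Phi$ (equivalently, as set equality only when $\mathcal{C}_1 = \mathcal{C}_2$ and $T$ acts as an automorphism of the class). Once those interpretive points are fixed, no further regularity on $T$ or $L$ is needed beyond measurability sufficient for the pushforward formula.
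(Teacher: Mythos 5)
Your proposal is correct and is exactly the argument the paper intends: the paper's own ``proof'' is the single remark that the result ``is a direct consequence of the composition of classes $\mathcal{C}_1 \circ T = \mathcal{C}_2$,'' and your change-of-variables computation via $\pi_t = (T\times\mathrm{id})\#\pi_s$ together with the surjectivity of $f \mapsto f\circ T$ is the fleshed-out version of that remark. Your two caveats (the direction of $T$ relative to the domains of $\mathcal{C}_1,\mathcal{C}_2$, and reading the equality of argmins modulo the bijection) are the right ones to flag, since the paper's notation is loose on both points.
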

The proof is a direct consequence of the composition of classes $\mathcal{C}_1 \circ T = \mathcal{C}_2$.
\begin{proposition}
 If $ \mathcal{C}_1 = \mathcal{C}_2 = \mathcal{C}$ and $L_1 = L_2$ and if $\mathcal{C}$ is so that $(x,y) \to L_1(f(x),y)$ is Lipschitz and bounded then for every $f$ \[
\lvert \mathbf{E}_{\pi_1} [L_1(f(x),y)] - \mathbf{E}_{\pi_2} [L_1(f(x),y)]   \rvert \leq d_1(\pi_1,\pi_2)
\]
and so the total loss of transfer learning when the learned agent is adapted is controlled  by the $d_1$-distance between joint measures.
\end{proposition}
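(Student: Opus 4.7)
The plan is to apply the Kantorovich--Rubinstein duality on the product space $\Omega \times \mathcal{Y}$, exactly as in Proposition \ref{list} item 2 and Proposition \ref{cases} case \ref{d_1ML}. Recall the duality gives
\[
d_1(\pi_1,\pi_2) \;=\; \sup_{\|g\|_{\mathrm{Lip}}\leq 1} \Bigl| \int g\,d\pi_1 - \int g\,d\pi_2 \Bigr|
\]
for probability measures on $\Omega\times\mathcal{Y}$ with finite first moment, where the Lipschitz seminorm is taken with respect to a fixed product metric (e.g.\ $d((x_1,y_1),(x_2,y_2)) = |x_1-x_2|+d_Y(y_1,y_2)$).

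First I would fix $f \in \mathcal{C}$ and introduce the test function $g_f : \Omega \times \mathcal{Y} \to \mathbb{R}$ defined by $g_f(x,y) := L_1(f(x),y)$. By the hypothesis, $g_f$ is bounded and Lipschitz with some constant $K_f$; boundedness plus compactness of $\Omega$ guarantee that $g_f \in L^1(\pi_i)$ for $i=1,2$, so the two expectations appearing in the statement are well defined and finite. Normalizing by $K_f$ and applying the duality above to $g_f/K_f$ gives immediately
\[
\Bigl| \mathbf{E}_{\pi_1}[L_1(f(X),Y)] - \mathbf{E}_{\pi_2}[L_1(f(X),Y)] \Bigr| \;\leq\; K_f \cdot d_1(\pi_1,\pi_2),
\]
which is the content of the displayed inequality once one absorbs $K_f$ into the bound (the convention throughout the paper, cf.\ the $\lesssim$ of Proposition \ref{list}). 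The second assertion then follows because the right-hand side depends only on the joint distributions and not on the particular $f$, so plugging in either the source-optimal or adapted learner bounds the transfer-loss discrepancy by $d_1(\pi_1,\pi_2)$ uniformly in $\mathcal{C}$.

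The main (mild) obstacle is simply bookkeeping of the product metric: one must verify that the Lipschitz condition assumed on $(x,y)\mapsto L_1(f(x),y)$ is compatible with the metric with respect to which $d_1(\pi_1,\pi_2)$ is computed on $\Omega\times\mathcal{Y}$. Once this is fixed consistently there is no deeper difficulty, and the proof reduces to a one-line application of Kantorovich--Rubinstein, which is why the result is stated as a direct corollary of the framework developed earlier.
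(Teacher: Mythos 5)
Your proposal is correct and follows exactly the paper's own (one-line) argument: the inequality is an immediate consequence of the Kantorovich--Rubinstein representation of $d_1$ as a supremum over Lipschitz test functions, applied to $g_f(x,y)=L_1(f(x),y)$. Your extra care about the Lipschitz constant $K_f$ (which the paper silently absorbs, effectively assuming $K_f\leq 1$ or reading the bound up to a constant) is a reasonable refinement but does not change the route.
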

\begin{proof}
    The proposition follows directly from the Kantorovich-Rubinstein representation of the $d_1$ norm as $d_1$ is the suprema over Lipschitz functions.
\end{proof}
In \cite{CourtyFlamary} the authors proved the following theorem: 
\begin{theorem} (Courty-Flamary) \\
If $L(x,y) = \lvert x-y \rvert^2$ and $ \mu^s = \frac{1}{n} \sum_{k=1}^n \delta_{x_i^s}$ where $x_1, x_2, \dots, x_n \in \mathbb{R}^n$ and there exist $A$ positive definite matrix and a vector $b$ such that $\mu^t = \frac{1}{n} \sum_{k=1}^{n} \delta_{Ax^s_i + b}$, set $T(x) = Ax + b$ then $f_* \circ T^{-1}$ is a perfect learning agent in the sense that it minimizes \eqref{pitDAProblem}.
\end{theorem}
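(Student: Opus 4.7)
The plan is to reduce the target problem \eqref{pitDAProblem} to the source problem \eqref{pinDAProblem} by an affine change of variables, exploiting the fact that $A$ positive definite makes $T(x) = Ax + b$ an invertible bijection on $\mathbb{R}^p$ with inverse $T^{-1}(y) = A^{-1}(y - b)$. By hypothesis, $\mu^t = T\#\mu^s$. The implicit DA hypothesis that makes the phrase ``perfect learning agent'' meaningful is the joint-distribution version (Assumption \ref{TransportIndep} with the stated $T$), namely that the target labels agree with the source labels once the features are transported, i.e.
\begin{equation*}
\pi^t = (T \otimes \mathrm{id})\#\pi^s.
\end{equation*}

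With this in hand, I would apply change of variables in the target functional: for any $f \in \mathcal{C}_t$,
\begin{equation*}
\mathbf{E}_{\pi^t}[L(f(X^t), Y^t)] = \mathbf{E}_{\pi^s}[L(f(T(X^s)), Y^s)] = \mathbf{E}_{\pi^s}[L((f \circ T)(X^s), Y^s)].
\end{equation*}
Reparameterising by $g := f \circ T$ (so $f = g \circ T^{-1}$), the target minimisation becomes
\begin{equation*}
\min_{g \in \mathcal{C}_t \circ T}\, \mathbf{E}_{\pi^s}[L(g(X^s), Y^s)].
\end{equation*}
Under the natural compatibility of classes $\mathcal{C}_s = \mathcal{C}_t \circ T$ (equivalently $\mathcal{C}_t = \mathcal{C}_s \circ T^{-1}$), this is literally the source problem whose minimiser is $f_*$. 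Uniqueness arguments aside, every minimiser $g_*$ of the target problem satisfies $g_* = f_*$ (in $L^2(\pi^s)$), hence the target optimiser is exactly $f_* \circ T^{-1}$.

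The main obstacle is not a calculation but the careful bookkeeping of the implicit hypotheses. The statement as written names only the marginal relation $\mu^t = T\#\mu^s$ and says nothing about joint distributions or about the relationship between $\mathcal{C}_s$ and $\mathcal{C}_t$; both must be invoked for the change-of-variables argument to identify minimisers. The squared form $L(x,y) = |x-y|^2$ plays no role in the identification step itself; its true function in the hypothesis is to make the specific affine $T$ with $A \succ 0$ coincide with the Brenier optimal transport map between $\mu^s$ and $\mu^t$ (since $T = \nabla\varphi$ for the convex $\varphi(x) = \tfrac{1}{2}\langle Ax, x\rangle + \langle b, x\rangle$), thereby justifying that this is indeed the canonical OT-DA adapted agent and not an arbitrary relabelling. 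Once the two implicit hypotheses are promoted to explicit ones, the proof reduces to the one-line substitution above.
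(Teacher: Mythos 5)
Your argument is correct, and it is essentially the argument the paper itself uses: the paper gives no proof of this particular statement (it simply cites \cite[Theorem 3.1]{CourtyFlamary}), but the generalization stated and proved immediately afterwards proceeds exactly as you do, via the pushforward/disintegration identity $\mathbf{E}_{\pi^t}[L(f(X^t),Y^t)] = \mathbf{E}_{\pi^s}[L((f\circ T)(X^s),Y^s)]$ under the hypotheses $T\#\mu_s=\mu_t$, $\mu^t_{T(x)}=\mu^s_x$ (your $\pi^t=(T\otimes\mathrm{id})\#\pi^s$) and the class compatibility $\mathcal{C}_t=\mathcal{C}_s\circ T^{-1}$. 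Your observation that these joint-law and class hypotheses are implicit in the statement as written, and that the quadratic loss together with $A\succ 0$ serves only to make $T=\nabla\bigl(\tfrac12\langle Ax,x\rangle+\langle b,x\rangle\bigr)$ the Brenier map rather than an arbitrary relabelling, is accurate and worth keeping.
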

See in \cite[Theorem 3.1]{CourtyFlamary}. We now generalize this idea before we continue. 
\begin{theorem} Let $\pi_s$, $\pi_t$ be the joint measures for the the source $(X^s,Y^s)$ and $(X^t,Y^t)$ target domains respectively. Denote $\mu_s$ and $\mu_t$ the projections into the $X$-coordinates of $\pi_s$, $\pi_t$ and by $\mu^s_x$ and $\mu^t_x$ the conditional distributions of $Y^s \lvert X^s $ and $Y^t \lvert X^t$.
Assume there exists a map $T:\mathbb{R}^{p} \to \mathbb{R}^p$ such that
\begin{enumerate}
    \item $ T\# \mu_s = \mu_t$
    \item $\mu^t_{T(x)} = \mu_x^s$
    \item $ \mathcal{C}_2 = T \circ \mathcal{C}_1$ 
\end{enumerate}
if $f$ is the solution for \eqref{pinDAProblem} then $f \circ T^{-1}$ is a perfect learner in the sense that it minimizers \eqref{pitDAProblem}.
\end{theorem}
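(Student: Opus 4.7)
The plan is to show that, under the three hypotheses, the target risk functional pulled back by $T$ coincides exactly with the source risk functional, and that the class identification turns the minimization over $\mathcal{C}_2$ into the minimization over $\mathcal{C}_1$. The whole argument reduces to a disintegration followed by a change of variables; nothing deeper is required.

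First, I would disintegrate $\pi_t$ along its $X$-marginal. Since $(\mathbb{R}^p, \lvert \cdot \rvert)$ is a separable metric space, the conditional distributions $\mu^t_x$ exist and yield
\begin{equation*}
\mathbf{E}_{\pi_t}[L(g(X^t), Y^t)] \;=\; \int_{\mathbb{R}^p} \!\!\int L(g(x), y)\, d\mu^t_x(y)\, d\mu_t(x).
\end{equation*}
Using hypothesis~1, $T\#\mu_s = \mu_t$, I change variables in the outer integral to rewrite this as
\begin{equation*}
\int_{\mathbb{R}^p}\!\!\int L(g(T(x)), y)\, d\mu^t_{T(x)}(y)\, d\mu_s(x).
\end{equation*}
Now applying hypothesis~2, $\mu^t_{T(x)} = \mu^s_x$, collapses the inner integral against the source conditionals, and re-assembling via the source disintegration gives
\begin{equation*}
\mathbf{E}_{\pi_t}[L(g(X^t), Y^t)] \;=\; \mathbf{E}_{\pi_s}[L((g\circ T)(X^s), Y^s)].
\end{equation*}

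Finally, hypothesis~3, read as $g \mapsto g \circ T$ being a bijection between $\mathcal{C}_2$ and $\mathcal{C}_1$, lets me transfer the minimization:
\begin{equation*}
\min_{g \in \mathcal{C}_2} \mathbf{E}_{\pi_t}[L(g(X^t),Y^t)] = \min_{h \in \mathcal{C}_1} \mathbf{E}_{\pi_s}[L(h(X^s),Y^s)],
\end{equation*}
with the minimizers related by $g = h \circ T^{-1}$. Hence, if $f \in \mathcal{C}_1$ solves \eqref{pinDAProblem}, then $f \circ T^{-1}$ attains the infimum in \eqref{pitDAProblem}, which is the claim.

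The main obstacle is largely notational rather than analytical. The condition $\mathcal{C}_2 = T\circ\mathcal{C}_1$ needs to be read as giving the bijection $g\leftrightarrow g\circ T$ between $\mathcal{C}_2$ and $\mathcal{C}_1$ (otherwise $f\circ T^{-1}$ need not belong to $\mathcal{C}_2$), and one must be careful that $T^{-1}$ is meaningful at least on $\spt(\mu_t)$; for the argument above I only actually use $T$ itself and its pushforward property, so invertibility of $T$ is only needed to exhibit the target-side minimizer explicitly as $f\circ T^{-1}$. A secondary delicate point is ensuring the change-of-variables step is justified, for which measurability of $x\mapsto \int L(g(T(x)),y)\,d\mu^t_{T(x)}(y)$ (guaranteed by the regular-conditional-probability construction together with continuity of $L$ in standard settings) and integrability are the only mild hypotheses needed; these are implicit in the well-posedness of \eqref{pinDAProblem}–\eqref{pitDAProblem}.
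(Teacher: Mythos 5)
Your proposal is correct and follows essentially the same route as the paper's proof: disintegrate $\pi_t$ along its $X$-marginal, change variables via $T\#\mu_s=\mu_t$, invoke $\mu^t_{T(x)}=\mu^s_x$ to identify the pulled-back target risk with the source risk, and transfer the minimization through $\mathcal{C}_2 = T\circ\mathcal{C}_1$. Your remarks on reading the class condition as the bijection $g\leftrightarrow g\circ T$ and on where invertibility of $T$ is actually needed are careful additions, but the underlying argument is the one in the paper.
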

\begin{proof} The proof relies only on the disintegration of measures, as 
\[
\mathbf{E}_{\pi^t}[L(f(X^t),Y^t)] = \int \left( \int L(f(x),y) d\mu^t_x(y)  \right) d\mu_t(x) = \int \left( \int L(f(T(x),y) d\mu^s_{x} \right) d\mu_s(x)
\] 
where we have used the condition $d\mu^t_{T(x)} = d\mu^s(x)$ in the last equality. Minimization over $\mathcal{C}_2$ and the condition $ \mathcal{C}_2 = T \circ \mathcal{C}_1$ yields the result.
\end{proof}
\begin{question}(Can learning error be totally controlled?) \\
Assume $f_*$ minimizes the target problem, under what conditions on $\mu,\nu,L,\mathcal{C}_1,\mathcal{C}_2$ does there exist $ C > 0$ such that
\begin{equation*}
    \bigg \lvert \frac{1}{n} \sum_{i=1}^{n_2}\mathbf{E}[L(f_*(X_i^t),Y_i^t) - L(f_{ad}(X_i^t),Y_i^t) ] \bigg \rvert  \leq C d_2(\mu_s,\mu_t) ? 
\end{equation*}
\end{question}
The previous theorems and the ideas of \cite{Courty} respond this question in very restricted situations. Having a general context to answer this question similar to the one of \ref{flrs} would be essential for the theory of domain adaptation. 
\section{Numerical examples}
In this section we present 2 simple numeric examples using the \texttt{mnist} data set: \begin{enumerate}
    \item First we exemplify the convergence of Theorem \ref{gammaconv} and Proposition \ref{cases} by considering convolutional neural networks applied to a gaussian-filter blurred version of the data set \texttt{mnist}.
    \item In the second part of this section we apply the conditional average guess of section \ref{conditionalT} to try to predict whether an image corresponds to a $6$ or a $7$ using the model trained only on differentiating $1$s from $9$s. The underlying hypothesis is ``that sixes are a lot like nines and ones are a lot like sevens''. We explain what this means and how to use the conditional average guess.
\end{enumerate}
Both experiments can be found publicly in the github repository \verb|joaxchon\slash measure_precon| with the goal of reproducibility. \\
These examples should be understood as ``Toy Examples'' as numerical tests with much more detail and precision will be saved for a work in preparation with several co-authors. The idea of this section is to illustrate the main features of measure pre-conditioning and explain the results of the work.
\subsection{Convergence of agents under gaussian filter blurring}
We consider the \texttt{mnist} data set and use the \texttt{keras} and \texttt{tensorflow} packages to train a convolutional neural net under the modified images. We modify each image by first applying a gaussian filter with variance $\sigma$ to blurr it. At each fixed level of $\sigma$ we obtain a learnt agent $f_{n,\sigma}$ as in Problem \ref{pinProblem} and show that both the losses and the accuracy converge to the agent  $f_{n,0}$ as in Proposition \ref{cases} considering the blurred image as corresponding to the measure $\mu_X \ast \mathcal{N}_{\sigma}$, where $\mathcal{N}_{\sigma}$ denotes the unbiased normal distribution with variance $\sigma$. By proposition \ref{cases} and Theorem \ref{gammaconv} we expect the learnt agents $f_{n,\sigma}$ to get close to $f_{n,0}$, although the precise formulation of the Theorem ensures the convergence as $n \to \infty$ of $\lim_{\sigma \to 0}f_{n,\sigma}$ to $f_{0}$. Note that our theorem also ensures the converges of the weights (in appropriate sense).

\begin{figure}[!htb]
\minipage{0.32\textwidth}
  \includegraphics[width=\linewidth]{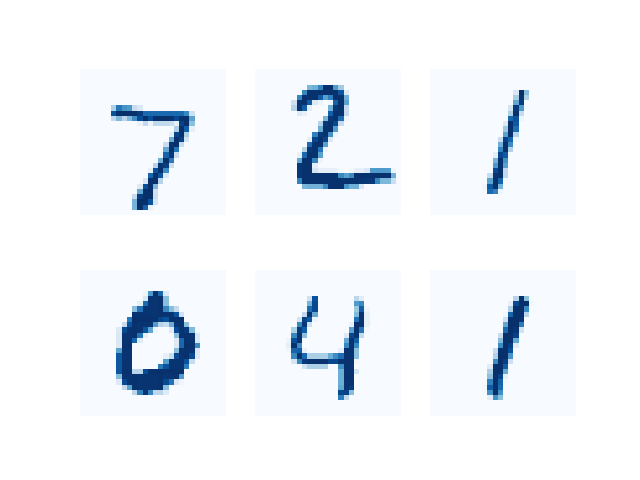}
  \caption{Unblurred, $\sigma = 0$}\label{fig:sigmazero}
\endminipage\hfill
\minipage{0.32\textwidth}
  \includegraphics[width=\linewidth]{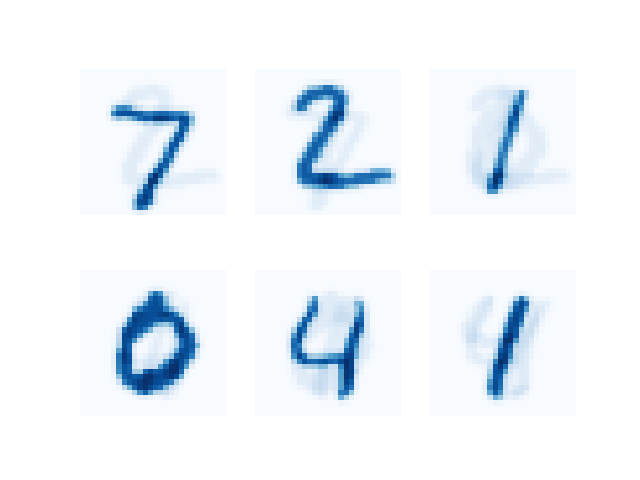}
  \caption{Blurred, $\sigma = 0.5$}\label{fig:sigmamiddle}
\endminipage\hfill
\minipage{0.32\textwidth}%
  \includegraphics[width=\linewidth]{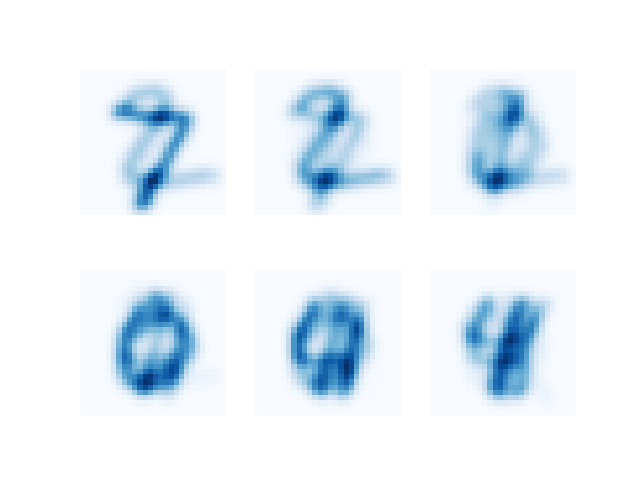}
  \caption{Blurred, $\sigma = 1$}\label{fig:sigma1}
\endminipage
\end{figure}

Proposition \ref{cases} ensures that if we apply a technique for unblurring (Weiner filters, Tychonov's regularization, etc) convergence of the learning agents is guaranteed. We see this result in the following figures:
\newline
\begin{figure}[!htb]
\minipage{0.5\textwidth}
  \includegraphics[width=\linewidth]{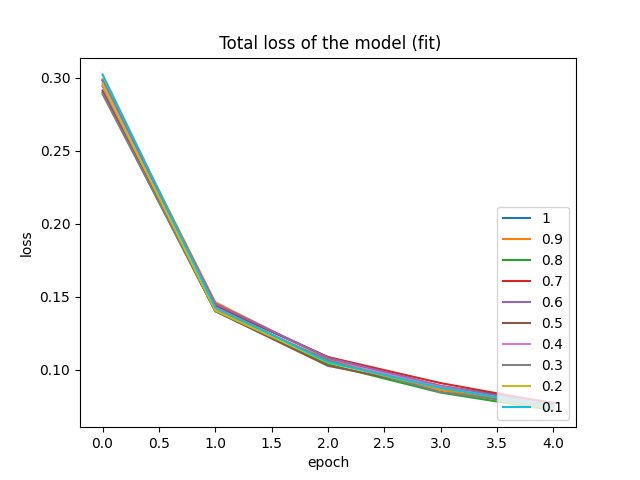}
  \caption{Loss function during training}\label{fig:totalloss}
\endminipage\hfill
\minipage{0.5\textwidth}
  \includegraphics[width=\linewidth]{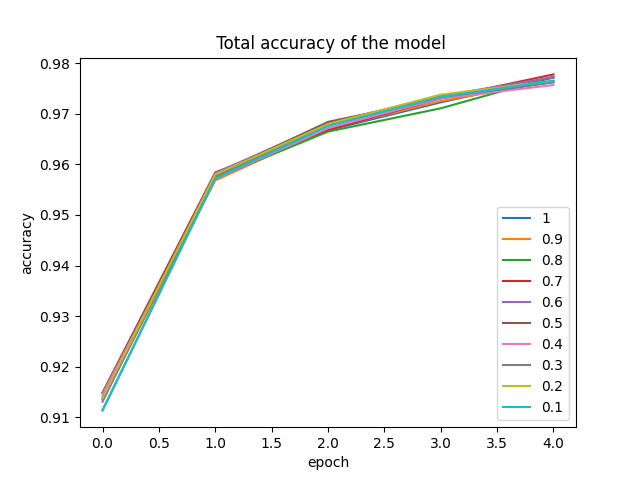}
  \caption{Accuracy of the model during training}\label{fig:totalacc}
\endminipage
\end{figure}
Figures \ref{fig:totalloss} and \ref{fig:totalacc} represent the total accuracy and total loss of the model during training. Different colors correspond to changes on the variance parameter of the gaussian filter. Given the convergence (in weak sense) of the convolutions with gaussians, proposition \ref{cases} indicates the convergence seen in the plots. 
\begin{figure}[!htb]
\minipage{0.5\textwidth}
  \includegraphics[width=\linewidth]{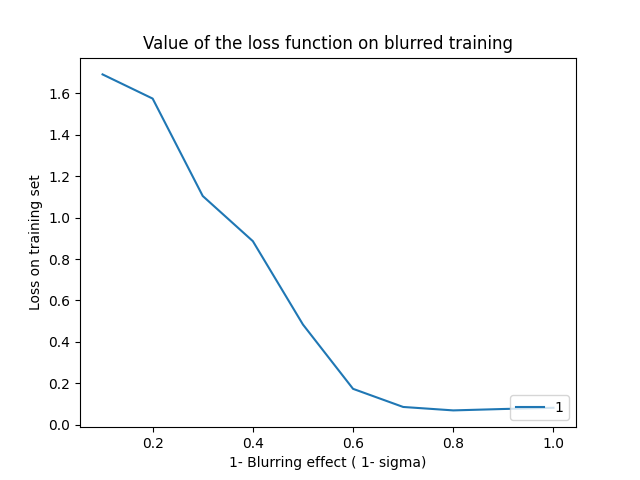}
  \caption{Change in loss during de-blurring}\label{fig:loss}
\endminipage\hfill
\minipage{0.5\textwidth}
  \includegraphics[width=\linewidth]{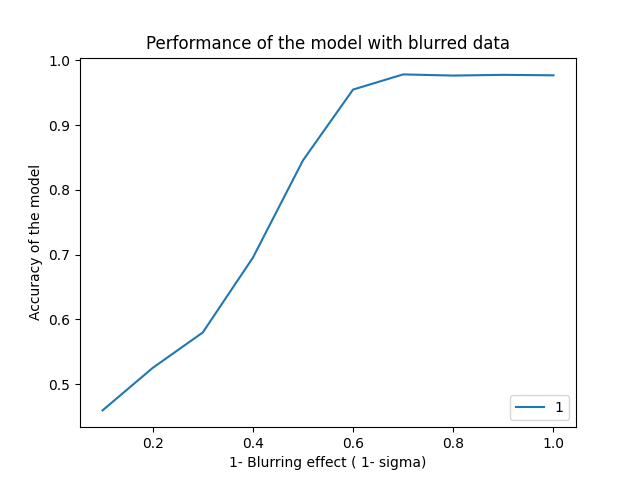}
  \caption{Change in accuracy durin de-blurring}\label{fig:acc}
\endminipage
\end{figure}
\newline
Figures \ref{fig:loss} and \ref{fig:acc} show the behaviour of the agent as we pre-condition (by de-blurring). The change in the training set improves the performance. Note that the conditions of Theorem \ref{gammaconv} ensure that we will see similar behaviour as long as we ensure the method satisfies Definition \ref{flrs}. 
\subsection{The conditional average guess and optimal domain adaptation for 6s and 7s}
In this section we use the \texttt{mnist} dataset to exemplify the technique of measure pre-conditioning for the conditional average guess of section \ref{conditionalT}. The idea is the following: \begin{enumerate}
    \item We train a convolutional neural network only on a data base formed by $1$s and $9$s. The objective is to classify whether a new imput is a $1$ or a $9$. We use sparse cross-entropy as loss function for this step of the learning.
    \item We make the following assumption: The distribution of $1$s is similar to the distribution of $7$s and the distribution of $6$s is similar to that of $9$s.
    \item We use the Python Library \texttt{POT} to approximate optimal transport between the distribution of $1$s and $7s$ and that of $6$s and $9$s. We use $c(x,y) = \lvert x-y \rvert $ as cost function. Independently of the loss function for the model, we note that the cost function $c$ penalizes absolute distance without taking into account the shape, as pre-conditioning we flip every $6$ to make it closer to a $9$.
    \item Finally we use formula \eqref{GuessAverage} as new model to obtain a new agent. 
    \item We test this agent.
\end{enumerate}
Observe that \eqref{GuessAverage} requires the knowledge of $T(x)$ for every $x$ in the support of the measure, nevertheless the computational package can only provide a matching between samples. In order to approximate \eqref{GuessAverage} we approximate via 
\begin{align*}
    f^* = &T^{-1}(\Proj_{\{X_i^1\}}(x)) \cdot \frac{\lvert \lvert x - \Proj_{\{X_i^1\}}(x)  \rvert \rvert }{ \lvert \lvert \Proj_{\{X_i^1\}}(x) + \Proj_{\{X_i^9\}}(x) \rvert \rvert} \\
    & + T^{-1}(\Proj_{\{X_i^9\}}(x)) \cdot \frac{\lvert \lvert x - \Proj_{\{X_i^9\}}(x)  \rvert \rvert }{ \lvert \lvert \Proj_{\{X_i^1\}}(x) + \Proj_{\{X_i^9\}}(x) \rvert \rvert}
\end{align*}
where $\{ X_i^k\}_{i=1}^n$ corresponds to the sample associated to $y = k$ in the training set (the conditional sample). 
\subsubsection{Results}
After the training and the computation of the learnt agent via \eqref{conditionalT}, using the testing data and we obtain that $51.94 \%$ of $7$s were correctly classified by the model and $99.584\%$ of $6s$ were correctly labelled. This is due to the fact that the distribution of $6$'s and the distribution of \textit{flipped} $9$'s is indeed very similar but the distribution of $1$s and $7$s have more differences. This is exactly what we expected as the Wasserstein distance between the distribution of $6$'s and flipped $9$'s is indeed very small, making the conditional guess of \eqref{conditionalT} efficient on identifying $6$0s with the knowledge of $9$'s and $1$'s.
\begin{figure}[!htb]
\minipage{0.5\textwidth}
  \includegraphics[width=\linewidth]{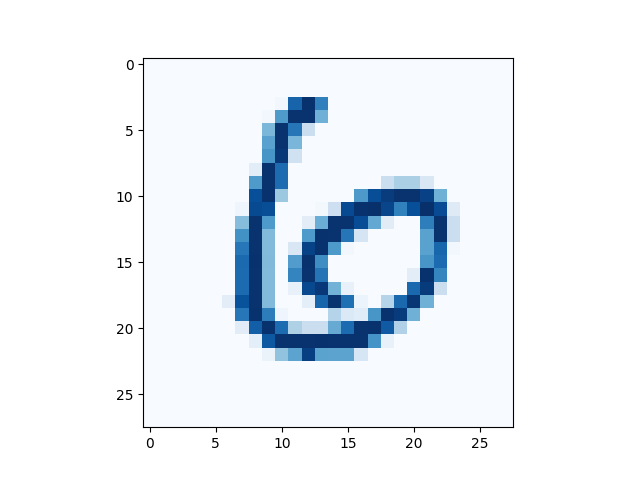}
  \caption{Example of a 6.}\label{fig:6image}
\endminipage\hfill
\minipage{0.5\textwidth}
  \includegraphics[width=\linewidth]{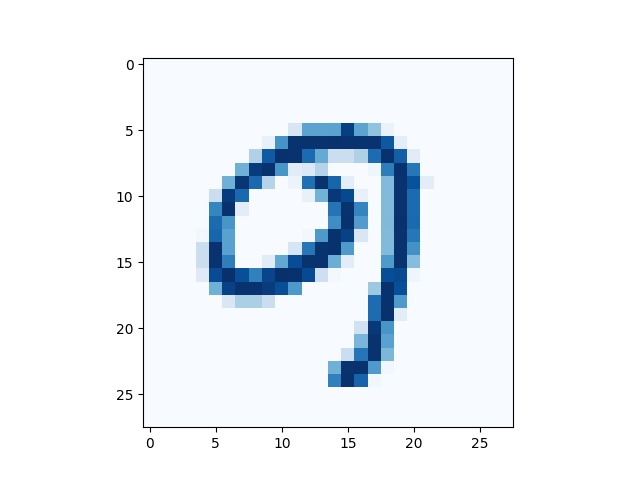}
  \caption{Flipped 6 looks like a 9.}\label{fig:9image}
\endminipage
\end{figure}
\begin{figure}[!htb]
\minipage{0.5\textwidth}
  \includegraphics[width=\linewidth]{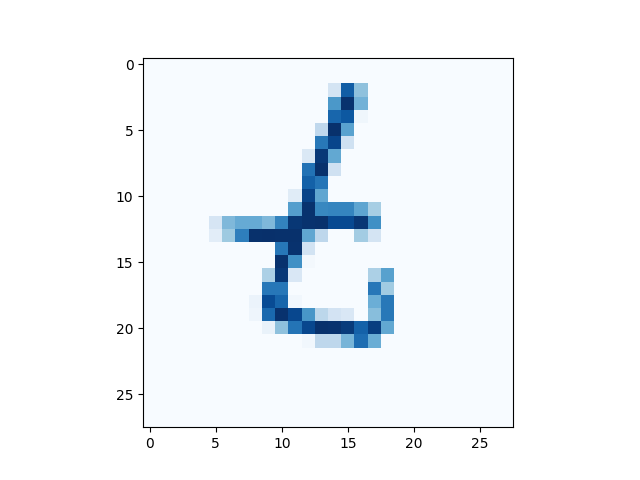}
  \caption{Missclassified $7$.}\label{fig:miss7}
\endminipage\hfill
\minipage{0.5\textwidth}
  \includegraphics[width=\linewidth]{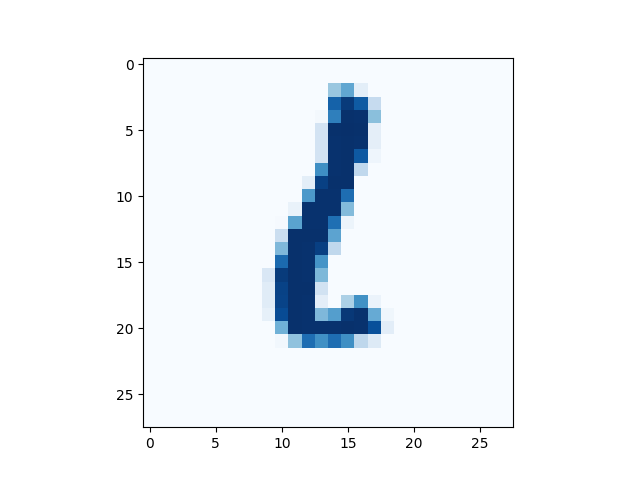}
  \caption{Correctly classified $7$.}\label{fig:correct7}
\endminipage
\end{figure}
\newline
We can see that the agent \eqref{conditionalT} mistakes $7$s for $6$'s when the middle line on the $7$ is big and hence increasing the Wasserstein distance to the distribution of $1$'s.
In a following work in preparation, additional to the flipping method to relate $6$s and $9$s we will pre-condition the distribution of $1$'s by adding noise in a way to make it look more like $7$'s. Theorem \ref{gammaconv} shows that the learnt agent converges to the original one as we reduce the noise to $0$. The script can be found in \verb|joaxchon\slash measure_precon\conditional_average_adaptation.py|
\section{Outside of the framework}
In this section we explain how the framework developed in this article can be extended to encompass more general situations (whose formulation is not exactly represented by \eqref{piProblem} and \eqref{pinProblem}) but benefit from the same ideas. 
\subsection{Using pre-conditioners on WGANs}
The Wasserstein Generative Adversarial Networks (WGAN) introduced in \cite{WGAN} is a generalization of the generative adversarial networks (GAN) introduced in the seminal work \cite{GAN}. The reason to consider the Wasserstein framework is due to the convergence properties of the Wasserstein metric together with the representation of Kantorovich-Rubinstein. The WGAN problem consists in computing 
\begin{equation} \label{WGAN}
    \argmin_{\theta} \argmax_{w \in \mathcal{W}} \mathbf{E}[f_w(X)] - \mathbf{E}[f_{w}(g_{\theta}(Z))]
\end{equation} 
where $X \sim \mathbb{P}_1$ is prescribed, $ Z \sim \mathbb{P}_2$ and $\{g_{\theta}\}_{\theta \in \Theta}$ is a parametric function space.  
Further work would study the same principles applied in this document to the more general version of the problem admitting \eqref{WGAN} using maybe 2 parametric families $\mathcal{C}, \tilde{\mathcal{C}}$. The only difference between our problem and \eqref{WGAN} is the presence of an extra outer minimization problem. It is clear that algorithms like TTC presented in \cite{Milne} that take a dual approach can benefit from sequential measure-pre-conditionining. 
In the original formulation, as in \cite{WGAN}  \begin{equation*}
   \sup_{f \in \mathcal{C}} \int f d \mu - \int f d\nu + - \lambda \int (\lvert \nabla f\rvert - 1 )^2 d\sigma 
\end{equation*} 
where $ Z \sim \sigma $ iff $ Z = t X + (1-t)Y$ where $ t \sim U[0,1]$, note that we can replace $\mu$ and $\nu$ at level $n$ via the empirical measures or measure pre-conditioners. This means that measure-preconditioning can be applied in more general circumstances than Problem \ref{piProblem} as the estimation of $\sigma$ can be done via $ t X_n + (1-t)Y_n$ where $ t \sim U[0,1]$ and the triangle inequality yields convergence.
\subsection{Covariate shift domain adaptation problem }
In general, the label-shift domain adaptation problem is usually written as 
\begin{equation}\label{GeneralDA}
    \min_{h,g} \frac{1}{n}\sum_{i=1}^n L(h(g(x_i^s)),y_i^s) + \lambda \ent(\mu_s^g \rvert \mu_t^g) + \Omega(h,g)
\end{equation}
where $h$ is the hypothesis, $g$ is a representation mapping and $\Omega$ is a regularization term. The first term corresponds to losses in approximation while the second and the third correspond to regularizations. Compared to the framework used in Problems \ref{pinDAProblem} and \ref{pitDAProblem}, \eqref{GeneralDA} is a more general version. Nevertheless, the idea of measure pre-conditiniong can substitute the entropy term by using a sequence of entropic regularizations and $\Omega + L$ can be used as a modified loss function. The difference in algorithmic performance of both approaches is an interesting project.

\subsection{COOT and measure pre-conditioning }
In \cite{COOT}, the following problem was introduced to handle at the same time the disparity between correlated distributions and the data marginals. In the case where $X_i \in \mathbb{R}^p$, authors in \cite{COOT} consider the matrix $X = (X_1, \dots, X_n)$ not only as a sample where the randomness comes form a single distribution but as a doubly-random matrix in the sense that each row is considered a sample and the columns are consider features, in this context let $\mu_{S}$ denote the probability measure associated to samples and $\mu_F$ the associated feature distribution one should perform optimal transport simultaneously in sampling and feature spaces. We expect the techniques of the two previous sections to also work in this context mutandis mutatis.
\section{Researcher's criteria on measure pre-conditioning} \label{Problems}
In section \ref{recipe} we explained what a ML-developer should consider as recipe for applying measure pre-conditing. It explained that each modification of the $n$-level measure had different implications which should be pointed towards some (algorithmic) benefit. In general, it may be difficult to know a-priori exactly what to use and so this (and subsequent) work should be considered as a guideline. 
\subsection{Trade-offs} \label{tradeoffs}
In low-dimensional regimes, absolutely continuous (w.r.t. Lebesgue) tend to behave better, while in higher dimensions highly concentrated measures tend to have better properties, see \cite[Chapter 4]{Invitation}. This is already a hint on what to do, if the problem involved has few features, absolutely continuous measures may improve the performance of the algorithm.  
\section{Conclusions and further work}

Recent work \cite{COOT} has introduced new techniques for domain adaptation, the idea is to optimally match features and samples, it is still open lines of investigation how different measure pre-conditioning techniques would impact the co-optimal transport problem. The features and samples are in general of very different nature for which combining more than one of the techniques of section \ref{Techniques} could improve the performance of the algorithms. For example, it may be the case that features share a structure that can be exploited by a specific technique while the relation between samples may algorithmically benefit from another.
\subsection{Order of convergence}
Establishing that the ML problem gives a full learner recovery system is good in order to know convergence is ensured, in algorithmic practice we need more. We need to study the order of convergence and the imrpovements on this order by Measure pre-conditioners, this work is left for future work and other researchers. 
\subsubsection{Data-driven model changes and convergence}\label{data-drivenchange}
In the start of section \ref{formulation} we asked question \eqref{question3}:
Given a choosing of $\pi_n$'s, could we find sequences $L_n$'s and $\mathcal{C}_n$ so that the computations on the $\mathcal{C}_n-$ problem with loss function $L_n$ associated to $\pi_n$ converge to \ref{piProblem}? Could these problems improve the algorithmic performance? \\
In section \ref{mathframework} we studied conditions on $\mathcal{C}$ and $L$ to ensure Definition \ref{flrs} and consequently Theorem \ref{flrsmin}. The question of how and when to change $\mathcal{C}_n$ and $L_n$ at every step is still open and interesting. A good answer would yield heuristics to change the model given the data in terms of the parametric space, this means to not only change the way we measure the information from the data but also how we learn from it. This line of investigation is left for future work.
\subsection{k-nearest neighbohrs and relation to meta-transport}
\subsubsection{k-nearest neighbors and point-process notation}
The list of empirical estimating probabilities (section \ref{preconditionersandestimators}) is obviously non-exhausting. Algorithmic treatment of data such as $k$-nearest neighbors represent a potentially significant pre-conditioning method. The theory of this algorithms is usually developed through point-processes. The extension of this work to point-processes together with section \ref{data-drivenchange} is a promising area for mathematical theory of learning.
\subsubsection{Meta-transport}
Another recent development in Optimal Transport based machine learning is the development of meta-optimal transport in \cite{Meta}. The basic idea, similar to the basic idea of this document is to present a way to improve the performance of ML-algorithms through pre-working on them. The seminal work \cite{Meta} develops completely algorithmic-focused techniques, as explained in section \ref{tradeoffs}. This work is focus on the underlying structured of pre-condiitoning the samples, the statistics in Wasserstein space and how they impact the outputs of the algorithms. In some way, \cite{Meta} tackles the pre-conditioning/pre-measure pre-conditioning in a different manner, with a clever approach based on numerical algorithms. We expect that a theory similar to the one developed in section \ref{preconditionersandestimators} can also encapsulate the algorithmic pre-conditioning. This can be modelled via point-processes (as it's done for $k$-nearest neighbors).
\subsection{General disintegration estimates}
One can study different conditions on $L,\mathcal{C}, \mu, \pi, Y \lvert X$ such that a convergence similar to Theorem \ref{desintegrateuniform} occurs. This area is particularly technical as disintegration is not a continuous operation with respect to some metrics on spaces of probability measures. Generally, one does not necessarily need to estimate the disintegration but can explore different methods of convergence. An approach to full learner recovery systems (\ref{flrs}) in the special case of assumptions on $Y \lvert X$ would be interesting and related with sections \ref{costP1}, \ref{conditionalT} and literature as to \cite{DataDriven} and references therein.

\subsection{Problem 2 of section \ref{main}} \label{goaloftransfer}
If $L$ can be chosen thinking ahead of the Target problem, choose the cost function by chosing an $L_1$ depending on $L_2$ or viceversa. The idea of the problem is to ensure learning can be transferred by picking the problems with the goal of transferring. A full theory with the approach of training with the goal of transferring would be interesting on it's own.
\subsubsection{Choosing the first Loss function to improve the second}
With the same approach as in Section \ref{goaloftransfer}, if we know that we aim to solve the target problem for $L_2, \pi^t, \mathcal{C}_2$, what loss function $L_1$ should we chose given $\pi^s$ and $\mathcal{C}_1$? Similarly, allow $\mathcal{C}_1$ to be chosen. We should chose $L_1$ in a way that data under $\pi^s$ behave similar to $L_2$ under $\pi^t$.  How one takes the target problem into consideration is an open question. 
\subsection{Choosing the target loss model according to the source}
Assume we have solved Problem \ref{piProblem} with set of features $L_1,\mathcal{C}_1, \pi^s$ and we know there is a distribution (unknown to us) on which we aim to transfer the knowledge, what loss function $L_2$ would ensure good properties of the learnt agent on target space? One can think of an $L_2$ loss function that penalizes the error of the learnt agent and \textit{simultaneously} penalizes the difference between probabilities. This function would take into account that a mistake in the model is not relevant when one knows the error on difference of distributions is big. The $L_2$ loss function could be used to simultaneously control model error with (probability) transfer error.


\begin{thebibliography}{}

\bibitem{Sriperumbudur} B. Sriperumbudur, K. Fukumizu, A. Gretton, B. Scholkopf, and G. Lanckriet. On the empirical estimation of integral probability metrics. \textit{Electronic Journal of Statistics}, 6, 2012.

\bibitem{CuturiBarycenter} M. Cuturi, A. Doucet, Fast Computation of Wasserstein Barycenters, \textit{Proceedings of the 31st International Conference on Machine Learning}, PMLR 32(2):685-693, 2014.

\bibitem{CourtyFlamary} N. Courty, R. Flamary, D. Tuia, A. Rakotomamonjy, Optimal Transport for Domain Adaptation,\textit{ IEEE Trans Pattern Anal Mach Intell.},
2017 Sep;39(9):1853-1865. doi: 10.1109/TPAMI.2016.2615921. Epub 2016 Oct 7
\bibitem{WGAN} Martin Arjovsky and Soumith Chintala and L{\'e}on Bottou,  Wasserstein Generative Adversarial Networks,\textit{Proceedings of the 34th International Conference on Machine Learning}, 214-224,2017.

\bibitem{Courty} A. Rakotomamonjy, R. Flamary, G. Gasso, M. El Alaya, M. Berar,  N. Courty ,  Optimal Transport for Conditional Domain Matching
and Label Shift, \textit{Machine Learning}, Vol. 111, pages1651–1670, 2022

\bibitem{GAN} I. J. Pouget-Abadie, M. Mirza, B. Xu, D. Warde-Farley, S. Ozair, A. Courville, Y. Bengio, Generative adversarial networks,\textit{Advances in neural information processing systems}, Volume 27, (2014).

\bibitem{Milne} T. Milne, Optimal Transport, Congested Transport, and Wasserstein Generative Adversarial Networks, University of Toronto, 2022. 
\bibitem{Villani} C. Villani, Topics in Optimal Transportation, American Mathematical Society, Book, \textit{ Graduate Studies in Mathematics Volume 58}, 2003. 
\bibitem{VillaniOldAndNew} C. Villani, Optimal Transport: Old and New, Springer, Book, \textit{ A series of Comprehensive Studies in Mathematics}, 2009, Berlin.
\bibitem{COOT}I. Redko, T. Vayer, R. Flamary, N. Courty, CO-Optimal Transport, \textit{Neural Information Processing Systems (NeurIPS)}, 2020.

\bibitem{Santambrogio} F. Santambrogio, Optimal Transport for Applied Mathematicians Calculus of Variations, PDEs, and Modeling, \textit{Progress in Nonlinear Differential
Equations and Their Applications}, Book,  2015, Birkhauser, Springer.

\bibitem{Meta}  B. Amos, S. Cohen, G. Luise, I. Redko,  Meta Optimal Transport, \textit{ICML}, 2023.

\bibitem{GammaConv} A. Braides, Gamma-convergence for beginners, Book, \textit{Oxford lecture series in mathematics and its applications}, 22, Oxford University Press, 2002.

\bibitem{Royden} H.L. Royden, P.M. Fitzpatrich,  Real Analysis, Book, Fourth Edition, Prentice Hall, 2010.

\bibitem{DevroyeGyorfi} L. Devroye, L. Gyorfi No empirical probability measure can converge in the total variation sense for all distributions, \textit{The Annals of Statistics}, Vol. 18, No. 3, 1990, pp. 1496-1499

\bibitem{Devroye} 
L. Devroye, L, On arbitrarily slow rates of global convergence in density estimation. \textit{Z. Wahrscheinlichkeitstheorie verw Gebiete} 62, 475–483, 1983. https://doi.org/10.1007/BF00534199

\bibitem{Rosenblatt} M. Rosenblatt, Remarks on some nonparametric estimates of a density function, \textit{Ann. Math. Statist.}, 27 832-837, 1956.

\bibitem{Invitation} V.M. Panaretos, Y. Zemel, An Invitation to Statistics in Wasserstein Space, Book, \textit{SpringerBriefs in Probability and Mathematical
Statistics}, Springer, 2020.

\bibitem{Parszen} E. Parzen, On Estimation of a Probability Density Function and Mode, \textit{The Annals of Mathematical Statistics}, Vol. 33, No. 3, pp. 1065-1076, 1962.

\bibitem{EOTBC} P. Manupriya, R. Keerti, S. Biswas, S. Chandhok, S. Nath Jagarlapudi, Empirical Optimal Transport between Conditional Distributions, arxiv: arXiv:2305.15901

\bibitem{Durrett} R. Durrett, \textit{Probability: Theory and Examples.},Singapore, Cambridge University Press, (2019).

\bibitem{Dudley} R. M. Dudley, Real Analysis and Probability, Book, Cambridge University Press,  2nd Edition, 2002, Cambridge.

\bibitem{CuturiPeyre} G. Peyre, M. Cuturi,Computational Optimal Transport, \textit{Foundations and Trends in Machine Learning}, Vol. 11: No. 5-6, pp 355-607. http://dx.doi.org/10.1561/2200000073

\bibitem{McCannFiveLectures}  N. Guillen,  R. McCann, Five lectures on optimal transportation: geometry, regularity and applications, \textit{Analysis and Geometry of Metric Measure Spaces: Lecture Notes of the Seminaire de Mathematiques Superieure (SMS)}, Montreal 2011. G. Dafni et al, eds. Providence: Amer. Math. Soc. (2013) 145-180.

\bibitem{McCannCordero} A Riemannian interpolation inequality à la Borell,
Brascamp and Lieb,D. Cordero-Erausquin , R. McCann  and
M. Schmuckenschläger, \textit{Invent. math}, 146, 219–257 (2001), (DOI) 10.1007/s002220100160.

\bibitem{Billingsley} P. Billingsley, Convergence of probability measures, Book, \textit{Wiley Series in Probability and Statistics}, 1999, Chicago, USA.


\bibitem{MLBook} C. Bishop, \textit{Book}, Pattern Recognition and Machine Learning (Information Science and Statistics),Springer-Verlag, Berlin, Heidelberg
ISBN:978-0-387-31073-2, (2006).

\bibitem{McCann} R.J. McCann Polar factorization of maps on Riemannian manifolds, \textit{Geom. Funct. Anal.}, 11 (2001) 589-608.

\bibitem{FKL}  Feinberg P. O. Kasyanov, Y. Liang, Fatou's Lemma for Weakly Converging Probabilities,\textit{Theory of Probability and Its Applications },Vol. 58, Iss. 4, 2014.

\bibitem{CGP}A. Ahidar-Coutrix, T. Le Gouic,  Q. Paris, Convergence rates for empirical barycenters in metric spaces: curvature, convexity and extendable geodesics,
\textit{Probability Theory and Related Fields} volume 177, pages323–368, (2020).

\bibitem{DataDriven} Trigila, G. and Tabak, E.G., Data-Driven Optimal Transport. \textit{Commun. Pur. Appl. Math.}, (2016),  69: 613-648https://doi.org/10.1002/cpa.21588.


\bibitem{DudleyProb} R. M. Dudley, Real Analysis and Probability, Book, Cambridge University Press,  2nd Edition, 2002, Cambridge.


\end{thebibliography}
\end{document}